\DeclareMathOperator{\E}{\mathbb{E}}
\DeclareMathOperator{\R}{\mathcal{R}}
\DeclareMathOperator{\Cov}{\mathrm{Cov}}
\DeclareMathOperator{\Var}{\mathrm{Var}}
\DeclareMathOperator{\N}{\mathcal{N}}
\DeclareMathOperator{\Bern}{Bern}
\DeclareMathOperator{\vx}{\mathbf{x}}
\DeclareMathOperator{\vz}{\mathbf{z}}
\DeclareMathOperator{\vI}{\mathbf{I}}
\DeclareMathOperator{\vdz}{d\mathbf{z}}
\Crefname{algocf}{Algorithm}{Algorithms}
\crefname{algorithm}{Algorithm}{Algorithms}
\crefname{equation}{Equation}{Equations}
\crefname{figure}{Figure}{Figure}
\crefname{section}{§}{§§}
\Crefname{section}{§}{§§}
\newtheorem{theorem}{Theorem}
\newtheorem{lemma}{Lemma}
\definecolor{codegreen}{rgb}{0,0.6,0}
\definecolor{codegray}{rgb}{0.5,0.5,0.5}
\definecolor{codepurple}{rgb}{0.58,0,0.82}
\definecolor{backcolour}{rgb}{0.95,0.95,0.92}
\lstdefinestyle{mystyle}{
    backgroundcolor=\color{backcolour},
    commentstyle=\color{codegreen},
    keywordstyle=\color{magenta},
    numberstyle=\tiny\color{codegray},
    stringstyle=\color{codepurple},
    basicstyle=\ttfamily\footnotesize,
    breakatwhitespace=false,
    breaklines=true,
    captionpos=b,
    keepspaces=true,
    numbers=left,
    numbersep=5pt,
    showspaces=false,
    showstringspaces=false,
    showtabs=false,
    tabsize=2
}
\DeclareMathOperator{\tpi}{\tilde{\pi}}
\DeclareMathOperator{\pib}{\pi_{\beta}(\vz|\vx)}
\DeclareMathOperator{\tpib}{\tilde{\pi}_{\beta}(\vx, \vz)}
\DeclareMathOperator{\ELBO}{\textsc{elbo}}
\DeclareMathOperator{\TVO}{\textsc{tvo}}
\DeclareMathOperator{\EUBO}{\textsc{eubo}}
\DeclareMathOperator{\MCMC}{\textsc{mcmc}}
\DeclareMathOperator{\REINFORCE}{\textsc{reinforce}}
\newcommand{\z}{\vz}
\newcommand{\x}{\vx}
\newcommand{\pxz}{{p_{\theta}(\x, \z)}}
\newcommand{\pzx}{{p_{\theta}(\z| \x)}}
\newcommand{\px}{{p_{\theta}(\x)}}
\newcommand{\qzx}{{q_{\phi}(\z| \x)}}
\newcommand{\qphi}{{q_{\phi}}}
\newcommand{\pibeta}{{\pi_{\beta}}}
\newcommand{\logiw}{{\log \frac{\pxz}{\qzx}}}
\newcommand{\dsymm}{{D_{KL}^{\, \,\leftrightarrow}}} 
\newtheorem{corollary}{Corollary}[theorem]
\newcommand{\tvolbb}{\TVO_{\textsc{L}}}
\newcommand{\tvolb}{\TVO_{L}(\theta, \phi, \x)}
\newcommand{\tvoub}{\TVO_{U}(\theta, \phi, \x)}
\newcommand{\tvolbfull}{\TVO_{\textsc{L}}(\theta, \phi, \x)}
\newcommand{\tvoubfull}{\TVO_{\textsc{U}}(\theta, \phi, \x)}
\xpatchcmd{\proof}{\itshape}{\normalfont\proofnamefont}{}{}
\newcommand{\proofnamefont}{\bfseries\itshape}
\newacronym{AIS}{ais}{annealed importance sampling}
\newacronym{BQ}{bq}{Bayesian Quadrature}
\newacronym{AUC}{auc}{area under the curve}
\newacronym{ELBO}{elbo}{evidence lower bound}
\newacronym{EUBO}{eubo}{evidence upper bound}
\newacronym{IS}{is}{importance sampling}
\newacronym{IWAE}{iwae}{importance weighted autoencoder}
\newacronym{KL}{kl}{Kullback-Leibler}
\newacronym{RWS}{rws}{reweighted wake-sleep}
\newacronym{SGD}{sgd}{stochastic gradient descent}
\newacronym{MCMC}{mcmc}{Markov Chain Monte Carlo}
\newacronym{SNIS}{snis}{self-normalized importance sampling}
\newacronym{TI}{ti}{thermodynamic integration}
\newacronym{TVI}{tvi}{thermodynamic variational inference}
\newacronym{TVO}{tvo}{thermodynamic variational objective}
\newacronym{VAE}{vae}{variational autoencoder}
\newacronym{VI}{vi}{variational inference}
\newacronym{VIMCO}{vimco}{variational inference for Monte Carlo objectives}
\newacronym{WS}{ws}{wake-sleep}
\definecolor{viterbired}{HTML}{990000}
\tikzset{->-/.style={decoration={
  markings,
  mark=at position .6 with {\pgftransformscale{0.75}\arrow{>}}},postaction={decorate}}}
\icmltitlerunning{Bregman Duality in Thermodynamic Variational Inference}
\begin{document}

\twocolumn[
\icmltitle{All in the Exponential Family: \\ Bregman Duality in Thermodynamic Variational Inference}



\icmlsetsymbol{equal}{*}

\begin{icmlauthorlist}
\icmlauthor{Rob Brekelmans}{isi,equal}
\icmlauthor{Vaden Masrani}{ubc,equal}
\icmlauthor{Frank Wood}{ubc}
\icmlauthor{Greg Ver Steeg}{isi}
\icmlauthor{Aram Galstyan}{isi}
\end{icmlauthorlist}

\icmlaffiliation{isi}{Information Sciences Institute, University of Southern California, Marina del Rey, CA}
\icmlaffiliation{ubc}{University of British Columbia, Vancouver, CA}

\icmlcorrespondingauthor{Rob Brekelmans}{brekelma@usc.edu}
\icmlcorrespondingauthor{Vaden Masrani}{vadmas@cs.ubc.ca}

\icmlkeywords{Machine Learning, ICML}

\vskip 0.3in
]
\printAffiliationsAndNotice{\icmlEqualContribution} 



\begin{abstract}
    The recently proposed Thermodynamic Variational Objective (\textsc{tvo}) leverages thermodynamic integration to provide a family of variational inference objectives, which both tighten and generalize the ubiquitous Evidence Lower Bound (\textsc{elbo}).  However, the tightness of \textsc{tvo} bounds was not previously known, an expensive grid search was used to choose a ``schedule'' of intermediate distributions, and model learning suffered with ostensibly tighter bounds.  In this work, we propose an exponential family interpretation of the geometric mixture curve underlying the \textsc{tvo} and various path sampling methods, which allows us to characterize the gap in \textsc{tvo} likelihood bounds as a sum of KL divergences.  We propose to choose intermediate distributions using equal spacing in the moment parameters of our exponential family, which matches grid search performance and allows the schedule to adaptively update over the course of training.  Finally, we derive a doubly reparameterized gradient estimator which improves model learning and allows the \textsc{tvo} to benefit from more refined bounds.  To further contextualize our contributions, we provide a unified framework for understanding thermodynamic integration and the \textsc{tvo} using Taylor series remainders.
\end{abstract}

\glsresetall
\allowdisplaybreaks

\section{Introduction}
\label{sec:introduction}

\begin{figure}[t]
\centering
\resizebox{0.45\textwidth}{!}{
\includegraphics{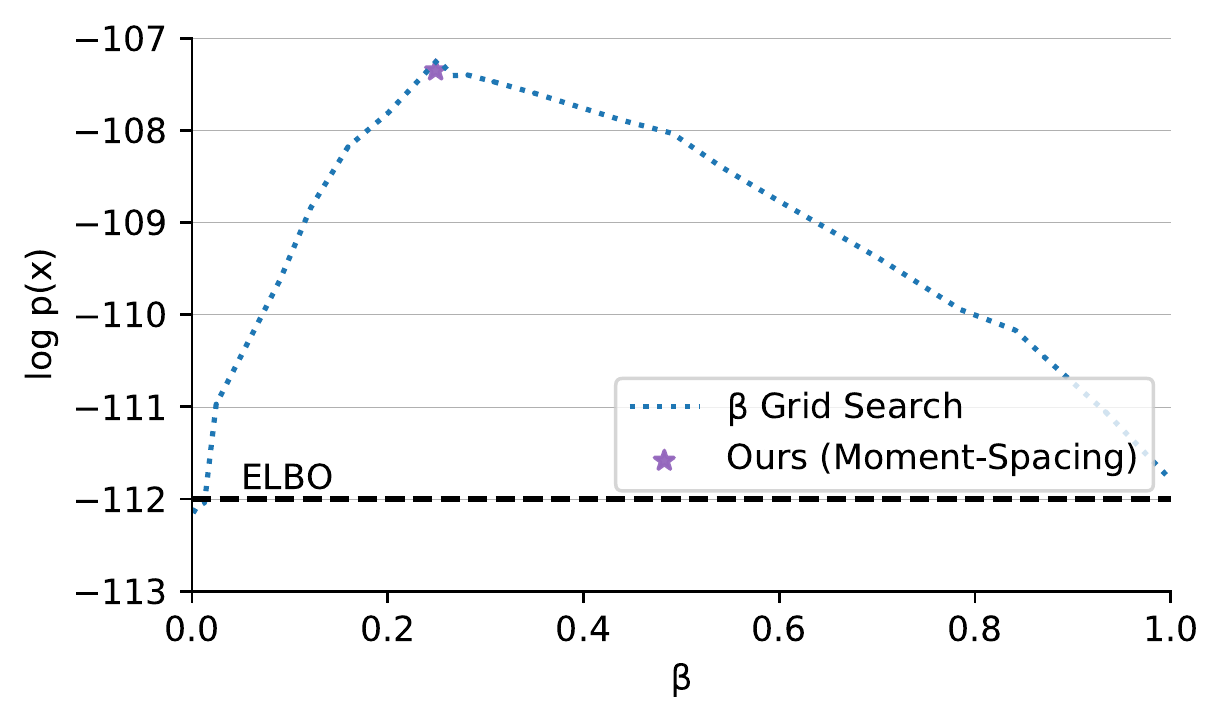}
}\vspace*{-.475cm}
\caption{\centering The original \gls{TVO} paper recommended using two partition points, with a single intermediate $\beta_1$ in addition to the \gls{ELBO} at $\beta_0=0$.  We report test $\log \px$ values from training a separate \gls{VAE} at each $\beta_1$, but this grid search is prohibitively expensive in practice.  Our moment-spacing schedule is an adaptive method for choosing $\beta$ points, which yields near-optimal performance on Omniglot and provides notable improvement over the \gls{ELBO}.}  \label{fig:single_beta_omni}  
\vspace*{-0.33cm}
\end{figure}

Modern  \gls{VI} techniques are able to jointly perform maximum likelihood parameter estimation and approximate posterior inference using stochastic gradient ascent~\cite{Kingma2013, Rezende2014}. 
Commonly, this is done by optimizing a tractable bound to the marginal log likelihood $\log p_{\theta}(\vx) = \int p_{\theta}(\vx, \vz)\vdz$, obtained by introducing a divergence $D[ \qzx || \pzx ]$ between the variational distribution $\qzx$ and true posterior $\pzx$~\cite{blei2017variational, li2016renyi, dieng2017variational, cremer2017reinterpreting, wang2018variational}. 

The recent Thermodynamic Variational Objective (\gls{TVO}) \cite{masrani2019thermodynamic} reframes likelihood estimation in terms of numerical integration along a geometric mixture path connecting $\qzx$ and $\pzx$.  This perspective yields a natural family of lower and upper bounds via Riemann sum approximations, with the \gls{ELBO} appearing as a single-term lower bound and \gls{WS} $\phi$ update corresponding to the simplest upper bound.  The \gls{TVO} generalizes these objectives by using a $K$-term Riemann sum to obtain tighter bounds on marginal likelihood.  We refer to the discrete partition $\{\beta_k\}_{k=0}^{K}$ used to construct this estimator as an `integration schedule.'

However, the gaps associated with these intermediate bounds was not previously known, an important roadblock to understanding the objective.  
Further, the \gls{TVO} was limited by a grid search procedure for choosing the integration schedule.   While  \gls{TVO} bounds should become tighter with more refined partitions, \citet{masrani2019thermodynamic} actually observe deteriorating performance in practice with high $K$.  

Our central contribution is an exponential family interpretation of the geometric mixture curve underlying the \gls{TVO} and various path sampling methods \cite{gelman1998simulating,neal2001annealed}.  Using the Bregman divergences associated with this family, we characterize the gaps in the \gls{TVO} upper and lower bounds as the sum of KL divergences along a given path, resolving this open question about the \gls{TVO}.   



Further, we propose to choose intermediate distributions in the \gls{TVO} based on the `moment-averaged' path of \citet{grosse2013annealing}, which arises naturally from the dual parameterization of our exponential family.  This scheduling scheme was originally proposed in the context of \gls{AIS}, where additional sampling procedures may be required to even approximate it.  We provide an efficient implementation for the \gls{TVO} setting, which allows the choice of $\beta$ to adapt to the shape of the integrand and degree of posterior mismatch throughout training. 

In Figure \ref{fig:single_beta_omni}, we observe that this flexible schedule yields near-optimal performance compared to grid search for a single intermediate distribution, so that the \gls{TVO} can significantly improve upon the \gls{ELBO} for minimal additional cost.  However, our moments scheduler can still suffer the previously observed performance degradation as the number of intermediate distributions increases.  As a final contribution, we propose a doubly reparameterized gradient estimator for the \gls{TVO}, which we show can avoid this undesirable behavior and improve overall performance in continuous models.




Our exponential family analysis may be of wider interest given the prevalence of \gls{MCMC} techniques utilizing geometric mixture paths \cite{neal1996sampling, neal2001annealed, grosse2016measuring, syed2019non, huang2020evaluating}.  To this end, we also present a framework for understanding \gls{TI} \cite{ogata1989monte} and the \gls{TVO} using Taylor series remainders, which clarifies that the \gls{TVO} is a first-order objective and provides geometric intuition for several results from \citet{grosse2013annealing}.  We hope these connections can help open new avenues for analysis at the intersection of \gls{MCMC}, \gls{VI}, and statistical physics.

\begin{figure*}[t]
    \begin{minipage}{\textwidth}
    \includegraphics[width=\textwidth]{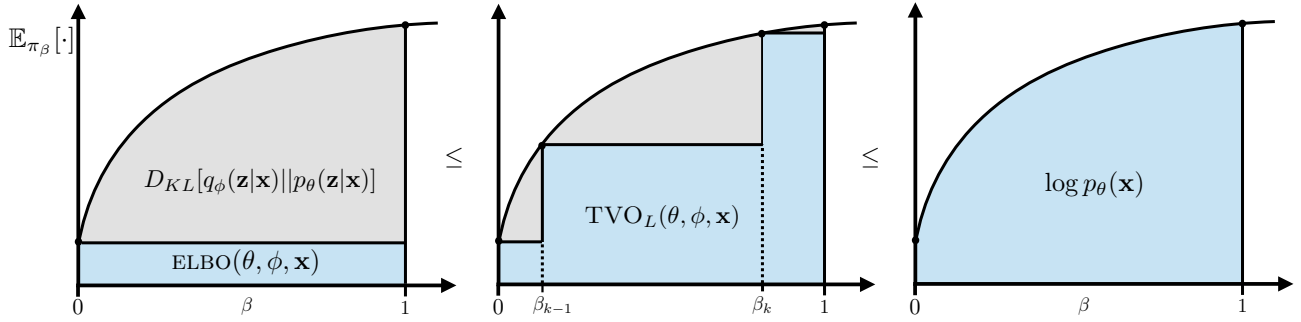}
    \captionof{figure}{\centering The \gls{TVO} is a K-term Riemann sum approximation of $\log\px$, which can be expressed as a scalar integral over the unit interval in \eqref{eq:tvi} and on the right. The $\ELBO$ is a single-term left Riemann approximation of the same integral using the point $\beta=0$ with $\pi_0 = \qzx$. Note that the integrand is negative in practice, but shown as positive for interpretability.} \label{fig:tvo_elbo_logpx} 
    \end{minipage}%
\end{figure*}

\section{Background}\label{sec:background}
\subsection{Thermodynamic Integration} 
Thermodynamic integration (\gls{TI}) is a technique from statistical physics, which frames estimating ratios of partition functions as a one-dimensional integration problem.  Commonly, this integral is taken over $\beta \in [0,1]$, which parameterizes a path of geometric mixtures between a base distribution $\pi_0$, and a target distribution $\pi_1$ \cite{gelman1998simulating}  
\vspace*{0.1cm}
\begin{align}
    \pi_{\beta}(\z) := \frac{\tilde{\pi}_\beta(\z)}{\int \tilde{\pi}_\beta(\z) \, d\z} = \frac{\pi_0^{1-\beta}(\z) \pi_1^{\beta}(\z) }{Z_{\beta}} \,. \label{eq:geomix_ti}
\end{align}
The insight of \gls{TI} is to recognize that, while the log partition function is intractable, its derivative can be written as an expectation that may be estimated using sampling or simulation techniques  \cite{neal2001annealed,habeck2017model}
\begin{align}
   \nabla_{\beta} \log Z_{\beta} &= \E_{\pibeta} \left[\log \frac{\pi_1(\z)}{\pi_0(\z)} \right] \,. \label{eq:equivalence}
\end{align}
In Sec. \ref{sec:expfamily}, we will see that this identity arises from an interpretation of the geometric mixture curve \cref{eq:geomix_ti} as an exponential family.  Applying this within the fundamental theorem of calculus, 
\begin{align}
    \log Z_1 - \log Z_0 &= \int_0^1 \nabla_{\beta} \log Z_{\beta} \, d\beta \label{eq:integral_zb} \\
                        &= \int_0^1 \E_{\pibeta} \left[\log \frac{\pi_1(\z)}{\pi_0(\z)} \right]d\beta. \label{eq:integral}
\end{align}
While \eqref{eq:integral_zb} holds for any choice of path parameterized by $\beta$, we can construct efficient estimators of the integrand in \eqref{eq:integral} and estimate the partition function ratio $\log Z_1/Z_0$ using numerical integration techniques.

\subsection{The Thermodynamic Variational Objective}\label{sec:tvo}
The \gls{TVO} \cite{masrani2019thermodynamic} uses \gls{TI} in the context of variational inference to provide natural upper and lower bounds on the log evidence, which can then be used as objectives for training latent variable models.
In particular, the geometric mixture path interpolates between the approximate posterior $\qzx$ and the joint generative model $\pxz$
\begin{align}
    \hspace*{-.2cm} \pib &= \frac{\tpib}{\int \tpib \vdz} := \frac{\qzx^{1-\beta} \pxz^{\beta}}{Z_{\beta}(\vx)} \, . \label{eq:geomix}
\end{align}
As distributions over $\z$, we can identify the endpoints as $\pi_0(\z|\x)=\qzx$ and $\pi_1(\z|\x)=\pzx$, with corresponding normalizing constants $Z_{0}=1$ and $Z_1 = \int \pxz \, d\z = \px$.

Applying \gls{TI} \cref{eq:integral_zb} for this set of log partition functions, \citet{masrani2019thermodynamic} express the generative model likelihood using a one-dimensional integral over the unit interval
\begin{align}
    \log\px &= \int_0^1 \E_{\pibeta} \left[\logiw \right]d\beta. \label{eq:tvi}
\end{align} 
The left and right endpoints of this integrand correspond to familiar lower and upper bounds on $\log \px$. The evidence lower bound (\gls{ELBO}) occurs at $\beta=0$, while the analogous \gls{EUBO} at $\beta = 1$ uses the `reverse' KL divergence and appears in various wake-sleep objectives (\gls{WS})  ~\citep{hinton1995wake, Bornschein2014} 
\begin{align}
    \ELBO(\theta, \phi, \x) &= \log \px - D_{KL}[q_\phi||p_\theta] \label{eq:elbo} \\
     \EUBO(\theta, \phi, \x) &= \log \px + D_{KL}[p_\theta||q_\phi] \label{eq:eubo}.
 \end{align}
To arrive at the \gls{TVO}, a discrete partition schedule is chosen $\mathcal{P}_{\beta} = \{ \beta_k \}_{k=0}^{K}$ with $\beta_0 = 0$, $\beta_K = 1$, and $\Delta_{\beta_k} = \beta_k- \beta_{k-1}.$  The integral in \cref{eq:tvi} is then approximated using a left or right Riemann sum.  Since \citet{masrani2019thermodynamic} show the integrand is increasing, these approximations yield valid lower and upper bounds on the marginal likelihood
\begin{align}
    \tvolbfull &:= \sum \limits_{k=1}^{K} \Delta_{\beta_k} \E_{\pi_{\beta_{k-1}}} \bigg[ \logiw \bigg] \label{eq:tvo_l} \\
    \tvoubfull &:= \sum \limits_{k=1}^{K} \Delta_{\beta_k} \E_{\pi_{\beta_{k}}} \bigg[ \logiw \bigg] \label{eq:tvo_u}
\end{align}
with
\begin{align}
    \tvolbfull \leq \log\px \leq \tvoubfull. \label{eq:full_bounds}
\end{align}
The first term of $\tvolb$ corresponds to the $\ELBO$, while the last term of $\tvoub$ corresponds to the $\EUBO$. Thus, the $\TVO$ generalizes both objectives, with additional partitions leading to tighter bounds on likelihood as visualized in Fig. ~\ref{fig:tvo_elbo_logpx}.

Although we consider thermodynamic integration over $0 \leq \beta \leq 1$ to approximate $\log \px$, note that this integral does not avoid the need for integration over $\z$ since each intermediate distribution must be normalized. \citet{masrani2019thermodynamic} propose an efficient, \gls{SNIS} scheme with proposal $\qzx$, so that expectations at any intermediate $\beta$ can be estimated by simply reweighting a single set of importance samples
\begin{align}
\mathbb{E}_{\pibeta} [\cdot] &\approx \sum \limits_{i=1}^{S} \frac{w_i^{\beta}}{\sum_{i=1}^{S} w_s^{\beta}} \, [ \cdot ] \, \, \text{where } w_i := \frac{p_{\theta}(\vx, \vz_i)}{q_{\phi}(\vz_i|\vx)}. \label{eq:snis}
\end{align}

\section{Exponential Family Interpretation}\label{sec:expfamily}
We propose a novel exponential family of distributions which, by absorbing both $\pxz$ and $\qzx$ into the sufficient statistic, corresponds to the geometric mixture path defined in~\cref{eq:geomix}.
We provide a formal definition in Sec. \ref{sec:expfamily_definition}, before showing in Sec. \ref{sec:expfamily_sanity_checks} that several key quantities in the \gls{TVO} arise from familiar properties of exponential families.  In Sec. \ref{sec:bregman_tvo}, we leverage the Bregman divergences associated with our exponential family to naturally characterize the gap in \gls{TVO} bounds as a sum of KL divergences.
\subsection{Definition}
\label{sec:expfamily_definition}
To match the \gls{TVO} setting in~\cref{eq:geomix}, we consider an exponential family of distributions with natural parameter $\beta$, base measure $\qzx$, and sufficient statistics equal to the log importance weights as in \cref{eq:tvo_l}-\cref{eq:tvo_u}
\begin{align}
    & \pibeta(\z|\x) := \pi_0(\z|\x) \exp \{ \, \beta  \cdot  T(\vx,\z) - \psi(\vx; \beta) \}  \label{eq:exp_fam} \\[1.25ex]
    & \text{where} \, \, T(\x, \z) := \logiw  \quad  \pi_0(\z|\x) := \qzx \nonumber
    \end{align}
This induces a log-partition function $\psi(\vx; \beta)$, which normalizes over $\z$ and corresponds to $\log Z_{\beta}(\vx)$ in~\cref{eq:geomix}
\begin{align}
    \psi(\vx; \beta ) &:= \log \int \qzx \exp \{ \, \beta \, \logiw\} \vdz, \nonumber \\
    &= \log \int \qzx^{1-\beta} \pxz^{\beta} \vdz \label{eq:partition_renyi} \\
    &= \log Z_{\beta}(\vx). \label{eq:log_zb}
\end{align}
The log-partition function will play a key role in our analysis, often written as $\psi(\beta)$ to omit the dependence on $\vx$.

We emphasize that we have made no additional assumptions on $\pxz$ or $\qzx$, and do not assume they come from exponential families themselves.   This `higher-order' exponential family thus maintains full generality and may be constructed between arbitrary distributions.

\subsection{TVO using Exponential Families}
\label{sec:expfamily_sanity_checks}
We now show that a number of key quantities, which were manually derived in the original \gls{TVO} work, may be directly obtained from our exponential family.
\paragraph{TI Integrates the Mean Parameters }It is well known that the log-partition function $\psi(\beta)$ is convex, with its first (partial) derivative equal to the expectation of the sufficient statistics under $\pi_{\beta}$ \cite{wainwrightjordan}.
\begin{align}
    \hspace*{-.2cm} \eta_{\beta} := \nabla_\beta \psi(\beta) = \E \, [T(\x,\z)] = \E_{\pibeta} \bigg[ \logiw \bigg] \label{eq:deriv}
\end{align}
This quantity is known as the \textit{mean parameter} $\eta_{\beta}$, which provides a dual coordinate system for indexing intermediate distributions (\citet{wainwrightjordan} Sec. 3.5.2). Comparing with \cref{eq:equivalence} and \cref{eq:tvi}, we observe that the ability to trade derivatives of the log-partition function for expectations in \gls{TI} arises from this property of exponential families.

We may then interpret the \gls{TVO} as integrating over the mean parameters $\eta_{\beta}= \nabla_{\beta} \log Z_{\beta}$ of our path exponential family, which can be seen by rewriting \cref{eq:integral_zb}
\begin{align}
    \psi(1) - \psi(0) &= \int_0^1 \eta_{\beta} \, d\beta = \int_0^1 \E_{\pibeta} \bigg[ \logiw \bigg] d\beta.  \nonumber 
\end{align}

\paragraph{TVO Likelihood Bounds}  The convexity of the log partition function arises from the fact that entries in its matrix of second partial derivatives with respect to the natural parameters correspond to the (co)variance of the sufficient statistics \cite{wainwrightjordan}.  In our 1-d case, this corresponds to the variance of the log importance weights
\begin{align}
 \nabla^2_\beta \psi(\beta) = \Var[T(\x,\z)] = \Var_{\pibeta} \bigg[ \logiw \bigg]. \label{eq:hess}
\end{align}
We can see that the \gls{TVO} integrand $\nabla_\beta \psi(\beta)$ is increasing from non-negativity of $\nabla^2_\beta \psi(\beta) \geq 0 \,\, \forall \beta$, which ensures that the left and right Riemann sums will yield valid lower and upper bounds on the marginal log likelihood.

\paragraph{ELBO on the Graph of $\mathbf{\eta_{\beta}}$ } Inspecting Fig. \ref{fig:tvo_elbo_logpx}, we see that the gap in the \gls{TVO} bounds corresponds to the amount by which a Riemann approximation under- or over-estimates the \gls{AUC}.  We can solidify this intution for the case of the \gls{ELBO}, a single-term approximation of $\log \px$ using $\beta = 0$ for the entire interval $\beta_1 - \beta_0 = 1-0$
\begin{align} 
\textsc{gap} &= \bigg[ \underbrace{\int_0^1 \nabla_\beta \psi(\beta) d\beta}_{\gls{AUC}} \bigg] - \underbrace{(1-0)\vphantom{\logiw}}_{\textsc{width}} \underbrace{\mathbb{E}_{\pi_0} \big[ \logiw \big]}_{\textsc{height}} \nonumber \\[1.25ex]
 &= \log\px - \ELBO(\theta, \phi, \x) \nonumber \\[1.25ex]
 &=  D_{KL}[q_{\phi}(\z|\x)||p_{\theta}(\z|\x)]. \label{eq:elbo_gap}
\end{align}
In the next section, we generalize this reasoning to more refined partitions, showing that the gap in arbitrary \gls{TVO} bounds corresponds to a sum of KL divergences between adjacent $\pi_{\beta_k}$ along a given path $\{\beta_k\}_{k=0}^{K}$.
\begin{figure*}[t]
    \begin{minipage}{.4875\textwidth}
    \vspace*{.2cm}
    \includegraphics[width=.9\textwidth]{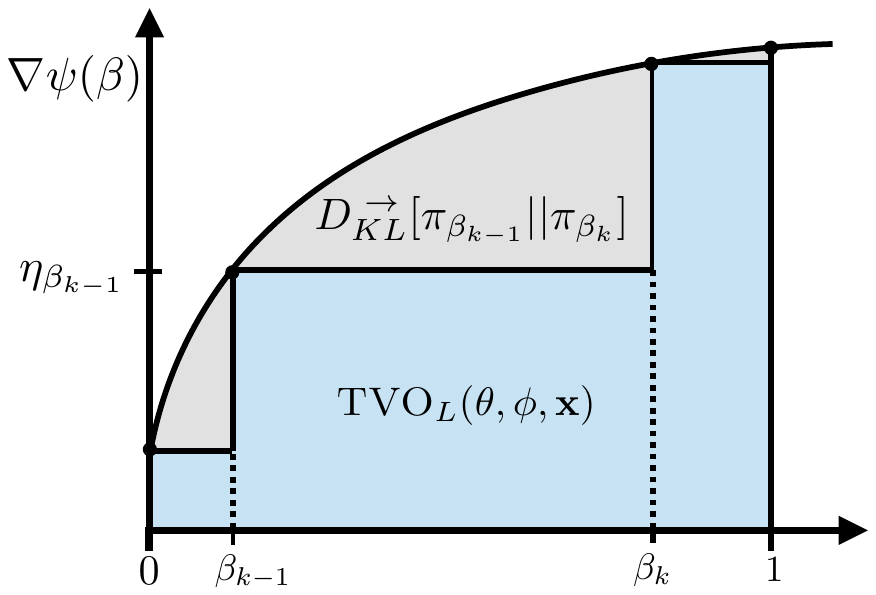}
    \captionof{figure}{The Bregman divergence $D_{KL}[\pi_{\beta_{k-1}} || \pi_{\beta_k} ]$ can be visualized as the area under the curve minus the left-Riemann sum via \cref{eq:gap_1}.  This term contributes to the gap in the likelihood bound $\tvolbb$.  We also derive an integral form for the KL divergence in App. \ref{app:taylor_kl}.  Note that both the integrand and $\psi(\beta)$ are negative in practice.} \label{fig:kl_left} 
    \end{minipage}
    \hspace{0.025\textwidth}
    \begin{minipage}{.4875\textwidth}
        \vspace*{.2cm}
        \includegraphics[width=.9\textwidth]{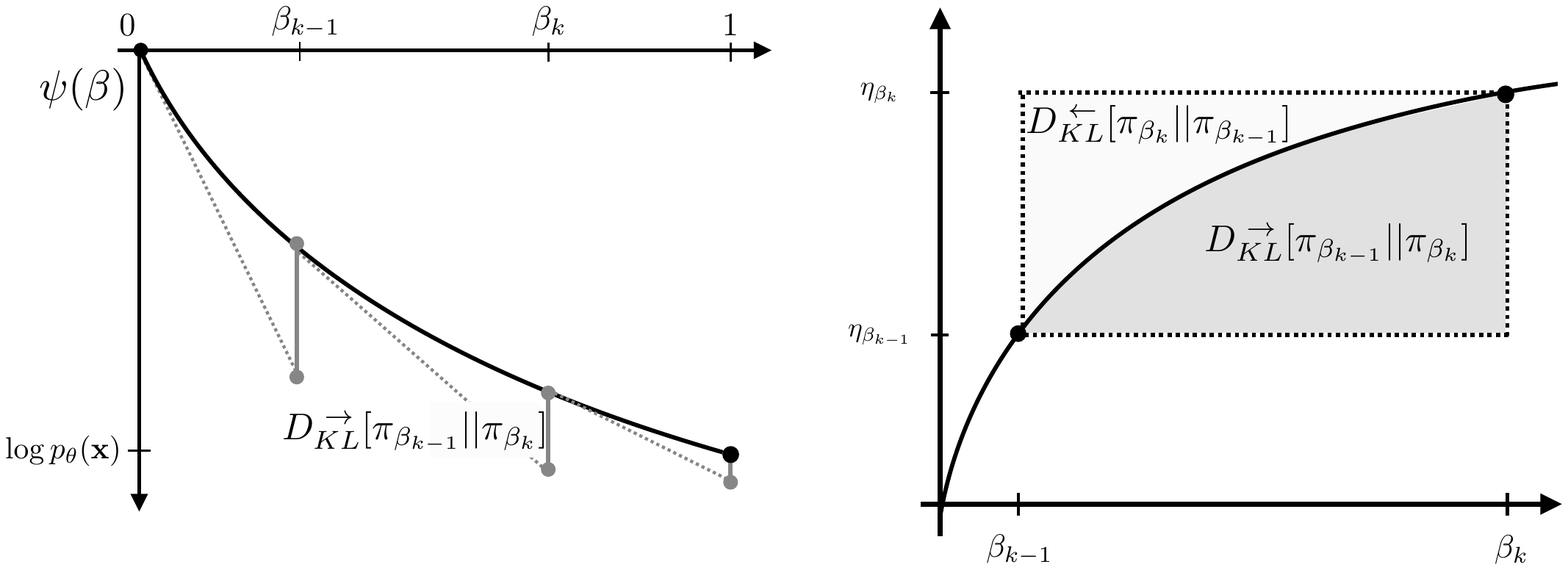} 
        \vspace*{.25cm}
    \captionof{figure}{$\tvolb$ may be viewed as constructing successive first-order Taylor approximations to intermediate $\psi(\beta_k)$, with the accumulated error corresponding to the gap in the bound.  The upper bound takes KL divergences in the reverse direction, with the first argument decreasing along the path.} \label{fig:kl_taylor}
\end{minipage}
\end{figure*}

\section{TVO Likelihood Bound Gaps} \label{sec:bregman_tvo}
In previous work, it was shown only that $\tvolb$ minimizes a quantity that is non-negative and vanishes at $\qzx = \pzx$ \cite{masrani2019thermodynamic}.   Using the Bregman divergences associated with our path exponential family, we can now provide a unified characterization of the gaps in \gls{TVO} bounds.

\subsection{Bregman Divergences} \label{sec:bregman} 
We begin with a brief review of the Bregman divergence, which can be visualized on the graph of the \gls{TVO} integrand in Fig. \ref{fig:kl_left} or  the log partition function in Fig. \ref{fig:kl_taylor}.

A Bregman divergence $D_{\psi}$ is defined with respect to a convex function $\psi$ ~\cite{Banerjee2005} which, in our case, takes distributions indexed by natural parameters $\beta$ and $\beta^{\prime}$ as its arguments
\begin{align}
     \hspace*{-.2cm} D_{\psi}[\beta : \beta^{\prime}] = \psi(\beta) - \big(\underbrace{\psi(\beta^{\prime}) + (\beta - \beta^{\prime}) \nabla_{\beta} \psi(\beta^{\prime})}_{\text{First Order Taylor Approx}}\big). \label{eq:breg_def}
\end{align}
Geometrically, the Bregman divergence corresponds to the gap in a first-order Taylor approximation of $\psi(\beta)$ around the second argument $\beta^{\prime}$, as depicted in Fig. \ref{fig:kl_taylor}.  Note that this difference is guaranteed to be nonnegative, since we know that the tangent will everywhere underestimate a convex function \cite{boyd2004convex}.

The Bregman divergence $D_{\psi}$ for the exponential family in \eqref{eq:exp_fam} is also equivalent to the KL divergence, with the order of the arguments reversed (also see  App. \ref{app:duality}).  Applying \eqref{eq:deriv} and adding and subtracting a base measure term,
\begin{align}
    D_{\psi}[\beta:\beta^{\prime}] &= \psi(\beta) - \psi(\beta^{\prime}) - (\beta - \beta^{\prime}) \nabla_{\beta} \psi(\beta^{\prime}) \\[1.15ex]
    &= \psi(\beta) \, - \beta\phantom{^{\prime}} \cdot \mathbb{E}_{\pi_{\beta^{\prime}}}[ \, T \, ]  - \mathbb{E}_{\pi_{\beta^{\prime}}}[ \, \log \pi_0 \, ]  \\
    & - \psi(\beta^{\prime}) + \beta^{\prime} \cdot \mathbb{E}_{\pi_{\beta^{\prime}}}[ \, T \,] + \mathbb{E}_{\pi_{\beta^{\prime}}}[ \, \log \pi_0 \, ] \nonumber \\
    &= \mathbb{E}_{\pi_{\beta^{\prime}}} [ \, \log \pi_{\beta^{\prime}} - \log \pi_{\beta} \, ], 
\end{align}
where in the third line, we use the fact that $\mathbb{E} \, [ \log \pi_{\beta}] = \mathbb{E} \, [\log \pi_0(\vz|\vx) + \beta \cdot T(\vx,\vz)] - \psi(\vx; \beta)$ from \eqref{eq:exp_fam}.  We then obtain our desired result, with
\begin{align}
    D_{\psi}[\beta:\beta^{\prime}] &= D_{KL}[\,  \pi_{\beta^{\prime}} \, || \pi_{\beta} \, ] \, .\label{eq:kl_psi}
\end{align}
\paragraph{KL Divergence on the Graph of $\eta_{\beta}$}
We can also visualize the Bregman divergence on the graph of the integrand $\eta_{\beta} = \nabla_{\beta}\psi(\beta)$ in Fig. \ref{fig:kl_left}, which leads to a natural expression for the gaps in \gls{TVO} upper and lower bounds.

To begin, we consider a single subinterval $[\beta_{k-1}, \beta_k]$ and follow the same reasoning as for the \gls{ELBO} in Sec. \ref{sec:expfamily_sanity_checks}.  In particular, the area under the integrand in this region is $\textsc{auc} = \int_{\beta_{k-1}}^{\beta_{k}} \nabla_\beta \psi(\beta) d\beta = \psi(\beta_k) - \psi(\beta_{k-1})$, with the left-Riemann approximation corresponding to $(\beta_k - \beta_{k-1}) \nabla_{\beta} \psi(\beta_k)$.  Taking the difference between these expressions, we obtain the definition of the Bregman divergence in \eqref{eq:breg_def}
\begin{align}
     \textsc{gap} &= \underbrace{\psi(\beta_{k})-\psi(\beta_{k-1})}_{\textsc{auc}} - \underbrace{(\beta_{k} - \beta_{k-1}) \nabla_{\beta} \psi(\beta_{k-1})}_{\text{Term in } \tvolb} \nonumber \\[1.75ex]
     &= D_{\psi}[\beta_{k}:\beta_{k-1}] = D^{\, \rightarrow}_{KL}[\pi_{\beta_{k-1}} || \pi_{\beta_{k}}]. \label{eq:gap_1}
\end{align}
where arrows indicate whether the first argument of the KL divergence is increasing or decreasing along the path.
For the gap in the right-Riemann upper bound, we follow similar derivations with the order of the arguments reversed in Sec.~\ref{sec:ubgap}.  This results in a gap of $ D_{KL}^{\leftarrow}[\pi_{\beta_{k}} || \pi_{\beta_{k-1}}]$, with expectations $\eta_{\beta_k}=\nabla_{\beta} \psi(\beta_{k})$ taken under $\pi_{\beta_k}$.

\subsection{TVO Lower Bound Gap} \label{sec:bregman_tvo_all_partition}%
Extending the above reasoning to the entire unit interval, we can consider any sorted partition $\mathcal{P}_{\beta} = \{ \beta_k \}_{k=0}^{K}$ with $\beta_0 = 0$ and $\beta_K = 1$.  Summing \eqref{eq:gap_1} across intervals, note that intermediate $\psi(\beta_k)$ terms cancel in telescoping fashion 
\vspace*{-.2cm}
\begin{align}
&\sum_{k=1}^{K} D_{\psi}[\beta_{k}:\beta_{k-1}] \label{eq:breg_sum} \\
&\phantom{\sum_{k=1}^{K}} = \psi(1) - \psi(0) - \sum_{k=1}^{K} (\beta_k - \beta_{k-1}) \nabla_{\beta} \psi(\beta_{k-1}) \, \nonumber
\end{align}
where the last term matches the $\tvolbb$ objective in ~\cref{eq:tvo_l}.

Writing $D_{\psi}$ as a KL divergence as in \eqref{eq:kl_psi} and recalling that $\psi(1)-\psi(0) = \log \px$, we obtain
\begin{align}
    \log p(\x) - \tvolb = \sum \limits_{k=1}^{K} D_{KL}^{\,\rightarrow}[\pi_{\beta_{k-1}} || \pi_{\beta_{k}}].  \label{eq:breg_tvo_lb}
\end{align}
We therefore see that the gap in the \gls{TVO} lower bound is the sum of KL divergences between adjacent $\pi_{\beta_{k}}$ distributions.

Alternatively, we can view \eqref{eq:breg_sum} as constructing successive first-order Taylor approximations to intermediate $\psi(\beta_k)$ in Fig. \ref{fig:kl_taylor}.  The likelihood bound gap of $\sum_{k=1}^{K} D_{\psi}[\beta_{k}:\beta_{k-1}]$ measures the accumulated error along the path.  While the \gls{ELBO} estimates $\psi(1) = \log \px$ directly from $\beta = 0$, more refined partitions can reduce the error and improve our bounds.  As $K \rightarrow \infty$, $\tvolb$ becomes tight as our $\pi_{\beta_k}$ are infinitesimally close, and the Riemann integral estimate would become exact given exact estimates of $\eta_{\beta_k}$.

\subsection{TVO Upper Bound Gap}\label{sec:ubgap}
To characterize the gap in the upper bound, we first leverage convex duality to obtain a Bregman divergence in terms of the conjugate function $\psi^{*}(\eta)$ and the mean parameters $\eta$.  As shown in App. \ref{app:duality}, this divergence,  $D_{\psi^{*}}$, is equivalent to $D_{\psi}$ with the order of arguments reversed 
\begin{align}
    D_{\psi^{*}}[\eta_k:\eta_{k-1}]&=~D_{\psi}[\beta_{k-1}~:~\beta_k] \label{eq:gap_2}  \\
    &=~D_{KL}^{\,\leftarrow}[ \, \pi_{\beta_k} || \, \pi_{\beta_{k-1}}] .  \nonumber
\end{align}
As in \cref{eq:breg_sum}, we expand the dual divergences along a path as 
\vspace*{-.125cm}
\begin{align}
    &\sum_{k=1}^{K} D_{\psi}[\beta_{k-1}:\beta_{k}] \\[0.25ex]
    &\phantom{\sum_{k=1}^{K}}= \psi(0) - \psi(1) - \sum_{k=1}^{K} (\beta_{k-1} - \beta_{k}) \nabla_{\beta} \psi(\beta_{k}) \nonumber
\end{align}
Since the last term corresponds to a right-Riemann sum, we can similarly characterize the gap in $\tvoub$ using a sum of KL divergences in the reverse direction
\begin{align}
\tvoub - \log p(\x) &= \sum \limits_{k=1}^{K} D_{KL}^{\,\leftarrow}[ \pi_{\beta_{k}} || \pi_{\beta_{k-1}}].
\end{align}

\subsection{Integral Forms and Symmetrized KL}\label{sec:taylor4}
To further contextualize the developments in this section, we show in App. \ref{app:taylor} that both thermodynamic integration and the \gls{TVO} may be understood using the integral form of the Taylor remainder theorem.  In particular, the expression \cref{eq:integral} underlying \gls{TI}  corresponds to the gap in a zero-order approximation, whereas we have previously shown that the KL divergence arises from a first-order remainder.

We can thus obtain integral expressions for the KL divergence, lending further intution for its interpretation as the area of a region in Fig. \ref{fig:kl_left} or Fig. \ref{fig:symm_kl_one_piece}
\begin{align}
    D_{KL}^{\, \,\rightarrow}[\pi_{\beta_{k-1}} || \pi_{\beta_k}] = \int \limits_{\beta_{k-1}}^{\beta_k}(\beta_k - \beta) \Var_{\pibeta} \big[ \logiw \big] d\beta. \nonumber
\end{align}
Combining the remainders in each direction, we recover a known identity relating the symmetrized KL divergence to the integral of the Fisher information (App. \ref{app:symm_kl} \eqref{eq:symm_kl_integralA}, \citet{dabak2002relations}).  

Similarly, we can visualize (twice) the symmetrized KL as the area of a rectangle in Fig. \ref{fig:symm_kl_one_piece}, by adding the gaps in the left- and right-Riemann approximations for a single interval
\begin{align}
\dsymm[ \pi_{\beta_{k-1}} ; \pi_{\beta_{k}} ] &=  D_{KL}^{\,\rightarrow}[ \pi_{\beta_{k-1}} || \pi_{\beta_{k}}] + D_{KL}^{\,\leftarrow}[ \pi_{\beta_{k}} || \pi_{\beta_{k-1}}] \nonumber \\
&= (\beta_k - \beta_{k-1})(\eta_{k} - \eta_{k-1}) \, .\label{eq:symm_kl_square}
\end{align}
From the Taylor remainder perspective, we note that \eqref{eq:symm_kl_square} can be derived using a further application of \gls{TI}, or the fundamental theorem of calculus, to the function $\nabla_{\beta} \psi(\beta)$, with $~\eta_k-\eta_{k-1}~= \int_{\beta_{k-1}}^{\beta_k} \nabla^2_{\beta} \psi(\beta) \, d\beta$  (App. \ref{app:symm_kl} \eqref{eq:symm_kl_squareA}). 

For $\beta_0 = 0$ and $\beta_1 = 1$, we can confirm from \cref{eq:elbo}-\cref{eq:eubo} that $\eta_1 - \eta_0 = \gls{EUBO} - \gls{ELBO} = D_{KL}[\qphi || p_{\theta} ] + D_{KL}[p_{\theta} || \qphi]$.

Before presenting our proposed approach for choosing $\beta$ in the next section, we note that App. \ref{app:recursive_scheduling} describes a `coarse-grained' linear binning schedule from \citet{grosse2013annealing}, which allocates intermediate distributions based on the identity \cref{eq:symm_kl_square} and is evaluated as a baseline in Sec. \ref{sec:experiments}.

\begin{figure}[t]
    \centering
    \includegraphics[width=.4\textwidth]{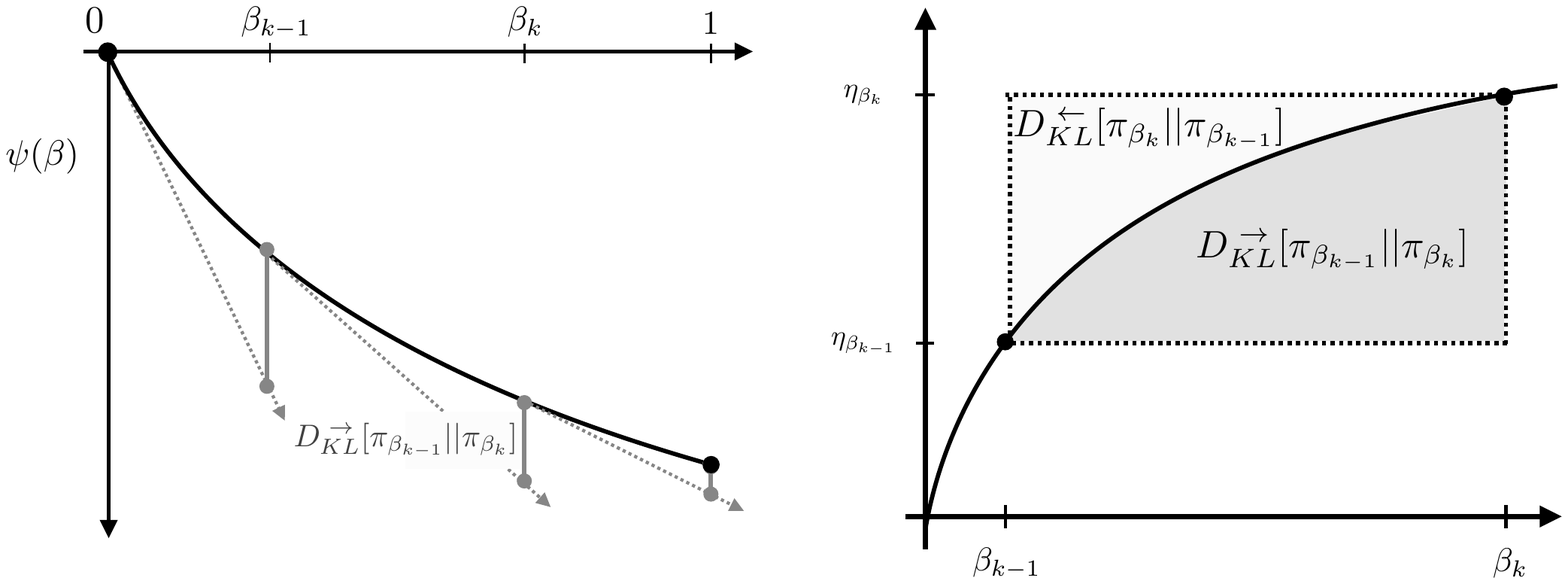} 
    \caption{Adding the KL divergences in each direction, we can visualize the symmetrized KL divergence as the area of a rectangle.  The curvature of the \gls{TVO} integrand suggests which direction of the KL divergence is larger, with the divergence becoming symmetric when $\eta_{\beta}$ is linear in $\beta$ (see App. \ref{app:grosse_perspective}). }\label{fig:symm_kl_one_piece}
\end{figure}

\begin{figure*}[t]
    \begin{minipage}{.475\textwidth}
     \vspace*{-.425cm}
    \includegraphics[width=.85\textwidth]{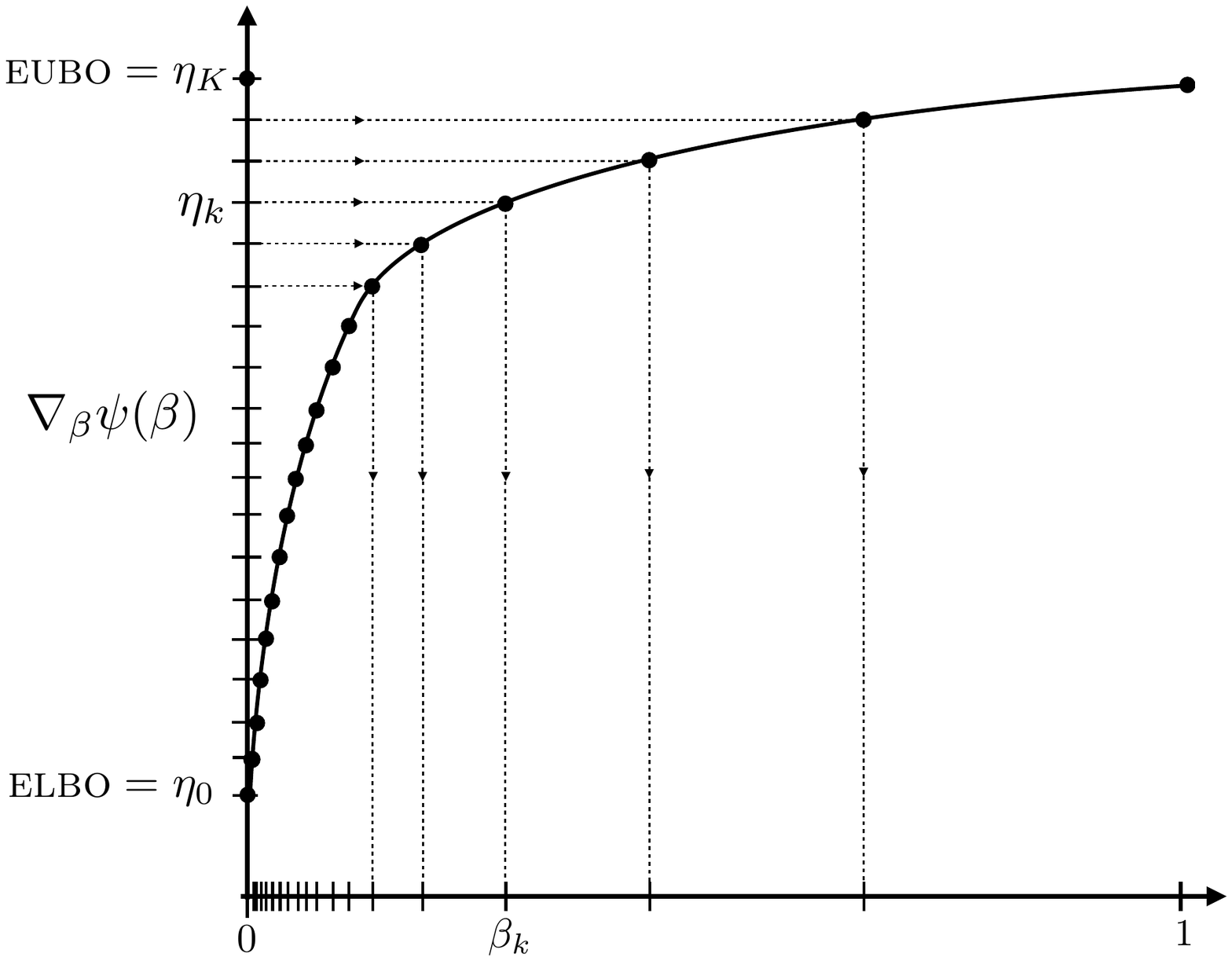}
    \vspace*{-.285cm}
    \captionof{figure}{By enforcing equal spacing in the mean parameter space, our moments schedule naturally `adapts' by allocating more partitions to regions where the integrand is changing quickly.} \label{fig:moment_schedule}
    \end{minipage}\hspace{.045\textwidth}%
    \begin{minipage}{.475\textwidth}
    \centering
    \vspace{.05cm}
    \includegraphics[width=.875\textwidth, trim=0 0 0 0, clip]{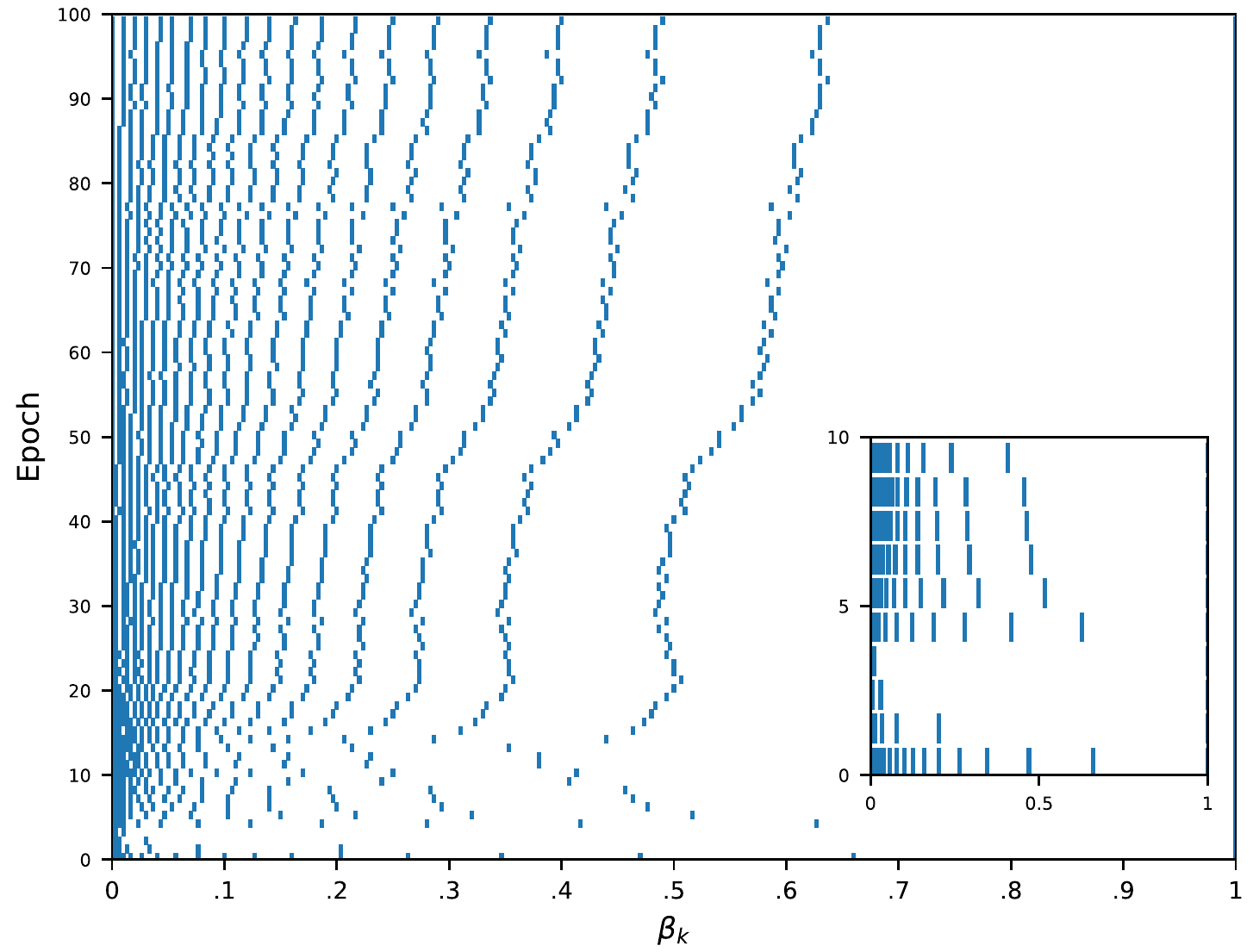}
    \vspace{-.31cm}
    \captionof{figure}{\centering We visualize placement of $\beta_k$ for our moments-spacing schedule across the first 100 epochs, with $K=20$.  Most $\beta_k$ concentrate near $0$ in early epochs, but spread out as training proceeds and the integrand becomes flatter as a function of $\beta$.} 
    \label{fig:moment_spacing}
    \end{minipage}
\end{figure*}


\section{Moment-Spacing Schedule} \label{sec:schedules}
\citet{masrani2019thermodynamic} observe that \gls{TVO} performance can depend heavily on the choice of partition schedule $\mathcal{P}_{\beta}$, and propose log-uniform spacing of $\{ \beta_2,...\beta_{K-1} \}$ with grid search over the initial $\beta_1$. 

Instead, we propose choosing $\beta_k$ to yield equal spacing in the $y$-axis of the \gls{TVO} integrand $\eta_{\beta} = \mathbb{E}_{\pibeta}[\log \frac{\pxz}{\qzx}]$, which corresponds to Lebesgue integration rather than Riemann integration in Fig. \ref{fig:moment_schedule}.
This scheduling arises naturally from our exponential family in Sec. \ref{sec:expfamily}, with the mean parameters $\eta_{\beta}$ corresponding to the dual parameters (\citet{wainwrightjordan} Sec. 3.5.2).
Equal spacing in the mean parameters also corresponds to the `moment-averaged' path of \citet{grosse2013annealing}, which was shown to yield robust estimators and natural generative samples from intermediate $\pibeta$ in the context of \gls{AIS}.

Given a budget of intermediate distributions $K = | \mathcal{P}_{\beta} |$, we seek $\beta_k$ such that $\eta_{\beta_k}$ are uniformly distributed between the endpoints $\eta_0 = \gls{ELBO}$ and $\eta_1 = \gls{EUBO}$ (see \cref{eq:elbo}-\cref{eq:eubo})
\begin{align}
\beta_k = \eta^{-1}_{\beta}\left( \frac{k}{K} \cdot \gls{ELBO} + (1-\frac{k}{K} ) \cdot \gls{EUBO} \right). \label{eq:moments_schedule}
\end{align}

We use $\eta^{-1}_{\beta}(\mu)$ to indicate the value of the natural parameter $\beta$ such that the expected sufficient statistics $\eta_{\beta}$ match a desired target $\mu$.  This mapping between parameterizations is known as the Legendre transform  and can be a difficult optimization in its own right \cite{wainwrightjordan}.

However, in the \gls{TVO} setting, estimating moments $\eta_{\beta}$ for a given $\beta$ simply involves reweighting and normalizing the importance samples using \gls{SNIS} in \eqref{eq:snis}.  Equipped with this cheap evaluation mechanism, we can apply binary search to find the $\beta_k$ with a given expectation value $\eta_{\beta_k}$, as in \eqref{eq:moments_schedule}.  We update our choice of schedule at the end of each epoch, and provide further implementation details in App. \ref{app:implementation}.

We visualize an example of our moments schedule in Fig. \ref{fig:moment_schedule}.  Note that uniform spacing in $\eta$ does not imply uniform spacing in $\beta$, since the Legendre transform is non-linear.  The resulting spacing in the $x$-axis reflects how quickly $\eta_{\beta}$ is changing as a function of $\beta$, matching the intuition that we should allocate more points in regions where the integrand is changing quickly.  Our moment-spacing schedule thus adapts to the shape of the \gls{TVO} integrand, which can change significantly across training (Fig. \ref{fig:moment_spacing}).   The integrand itself reflects the degree of posterior mismatch, since the curve will be flat when $\qzx = \pzx$, with $\eta_{\beta} = \log \px$ $ \forall \beta$. 
On the other hand, an integrand rising sharply away from $\beta=0$ indicates a poor proposal, with only several importance samples dominating the \gls{SNIS} weights.


\begin{figure*}[t]
    \centering 
    \vspace*{.25cm}
       \begin{subfigure}{.5\textwidth}
            \includegraphics[width=.95\textwidth]{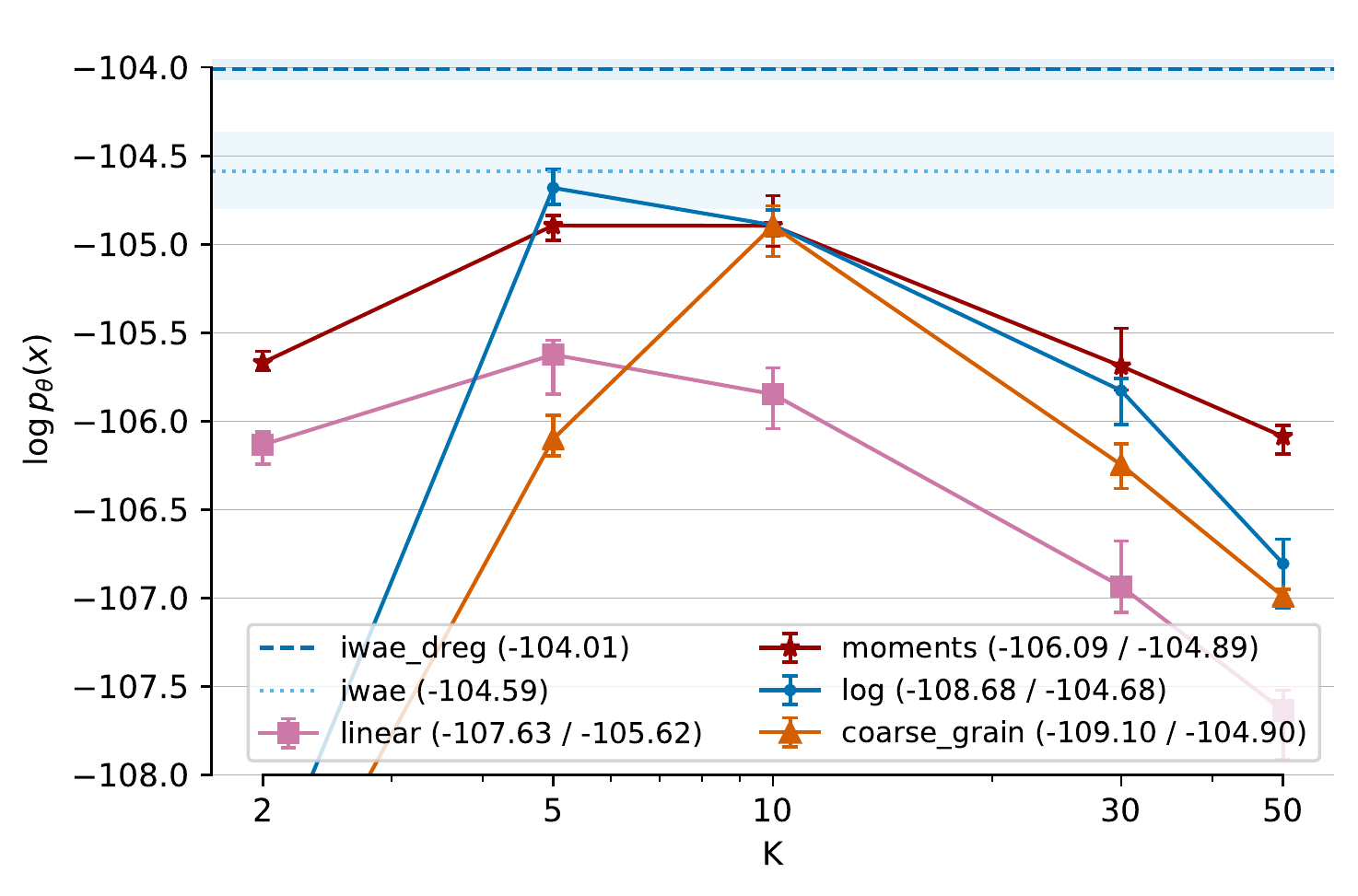}
            \caption{\gls{TVO} with \textsc{reinforce} Gradients}\label{subfig:omni_tvo_schedules}
    \end{subfigure}%
    \begin{subfigure}{.5\textwidth}
                \includegraphics[width=.95\textwidth]{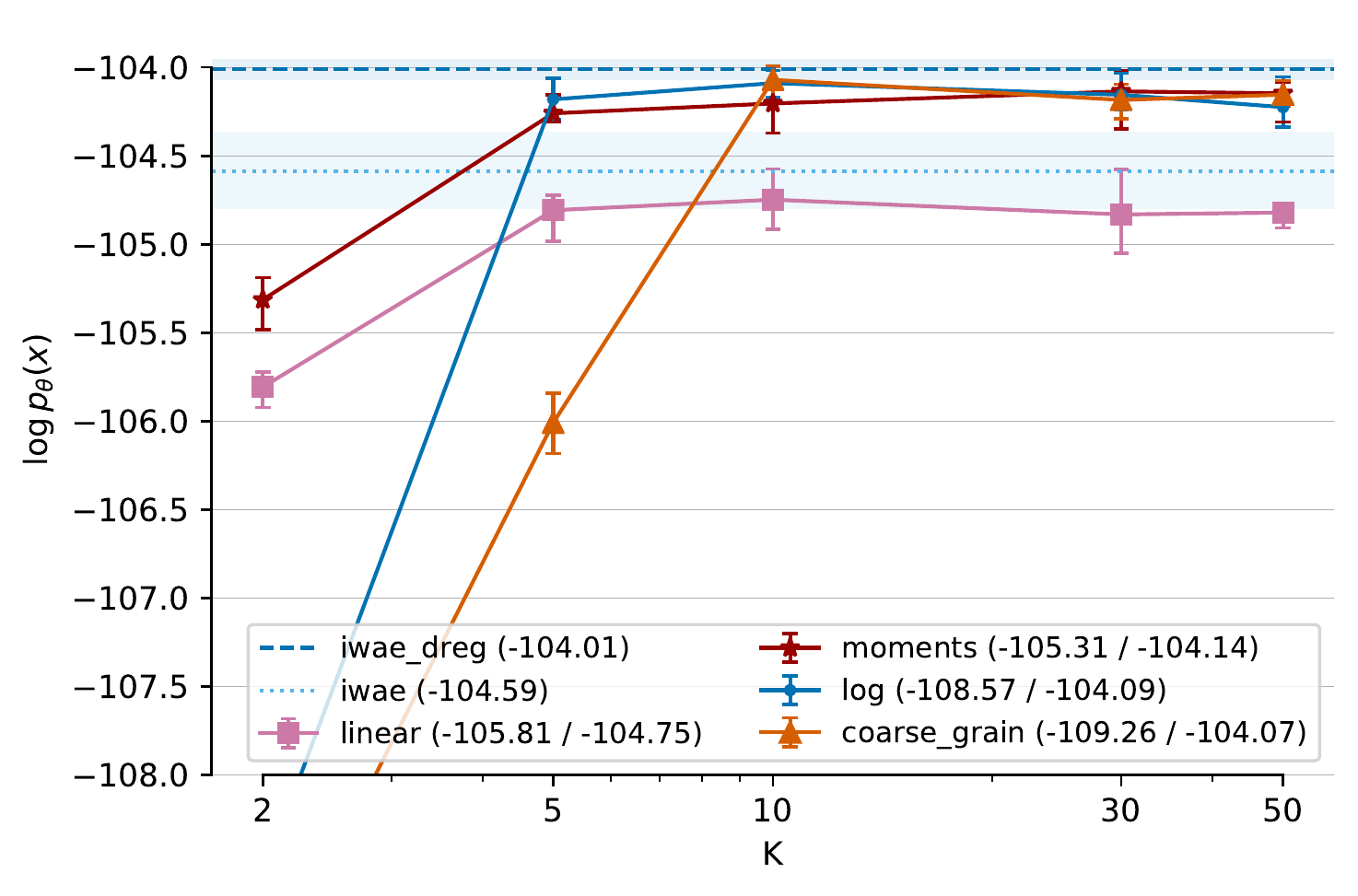}
                \caption{\gls{TVO} with Doubly-Reparameterized Gradients}\label{subfig:omni_reparam_schedules}
        \end{subfigure}
        \centering
        \caption{ Scheduling Performance by $K$ on Omniglot, with $S=50$.  Legend shows (min / max) test $\log p_{\theta}(\vx)$ across $K$.} \label{fig:omni_schedules}
\end{figure*}
\newcommand{\fphi}{ f(\vz) }

\section{Doubly-Reparameterized TVO Gradient}
\label{sec:optimization}
To optimize the \gls{TVO}, \citet{masrani2019thermodynamic} derive a $\REINFORCE$-style gradient estimator (see their App. F), which provides lower variance gradients and improved performance with discrete latent variables.   Writing $\lambda$ to denote $\{ \phi, \theta \}$, with $w = \pxz / \qzx$ and $\tpi_{\beta}(\vx, \vz)$ as in \eqref{eq:geomix}, we obtain gradients for expectations of arbitrary $\fphi$ under $\pi_{\beta}$, with the \gls{TVO} integrand corresponding to $\fphi = \log w$,
\begin{align}
    & \frac{d}{d \lambda} \, \mathbb{E}_{\pibeta}[ \fphi ] = \nonumber \\
    & \phantom{=} \mathbb{E}_{\pibeta} [\frac{d}{d \lambda} \fphi ] + \text{Cov}_{\pibeta}\left[ \fphi , \frac{d}{d \lambda}  \log \tpi_{\beta}(\vx, \vz) \right]\label{eq:tvo_grad} 
\end{align}
However, when $\vz_i \sim \qzx$ can be reparameterized via $\vz_i = z(\epsilon_i, \phi), \, \epsilon_i \sim p(\epsilon)$, we can improve the estimator in \cref{eq:tvo_grad} by more directly incorporating $\fphi$ gradient information.  To this end, we derive a doubly-reparameterized gradient estimator in App. \ref{app:tvo_integrand_gradients} 
\begin{align}
    \frac{d }{d \phi}  \mathbb{E}_{\pibeta} \left[ \fphi  \right] =~\mathbb{E}_{\pibeta} & \left[ \frac{d }{d \phi} \fphi  - \, \beta \cdot \frac{\partial \vz}{\partial \phi} \frac{\partial \fphi  }{\partial \vz} \right]  \nonumber \\[.75ex]
      \, \, + \, (1-\beta) \, \text{Cov}_{\pi_{\beta}}  &\left[ \fphi \, ,  \, \beta \cdot \frac{\partial \vz}{\partial \phi}  \frac{\partial \log w }{\partial \vz} \right].\label{eq:tvo_reparam_grad_main}
\end{align}
Doubly-reparameterized gradient estimators avoid a known signal-to-noise ratio issue for inference network gradients \cite{rainforth2018tighter}, using a second application of the reparameterization trick within the expanded total derivative~\cite{tucker2018doubly}.  We use a simplified form of \cref{eq:tvo_reparam_grad_main} (see App. \ref{app:tvo_integrand_gradients} \cref{eq:reparam_tvo}) for learning $\phi$ and \cref{eq:tvo_grad} for learning $\theta$.

Comparing the covariance terms of \cref{eq:tvo_grad} and \cref{eq:tvo_reparam_grad_main}, note that $\frac{d}{d \lambda}  \log \tpi_{\beta}(\vx, \vz)$ and $\beta \, \frac{\partial \vz}{\partial \phi}  \frac{\partial \log w }{\partial \vz}$  differ by their differentation operator and a factor of $\log q_{\phi}$ due to reparameterization, with  $\log \tpi_{\beta} = \log q_{\phi} + \beta \log w $. 

Further, the effect of the partial derivative $\frac{\partial \vz}{\partial \phi} \frac{\partial \fphi }{\partial \vz}$ in the first term of \cref{eq:tvo_reparam_grad_main} linearly decreases as  $\beta \rightarrow 1$ and $\pi_{\beta}(\vz|\vx)$ has less dependence on $\phi$.

Finally, we see that \cref{eq:tvo_reparam_grad_main} passes two basic sanity checks, with the covariance correction term vanishing at both endpoints.  At $\beta = 0$, we recover the gradient of the \gls{ELBO}, $\frac{d }{d \phi} \mathbb{E}_{\pi_0}[ \fphi ] = \mathbb{E}_{z(\epsilon, \phi)}[ \frac{d }{d \phi} \fphi ]$. At $\beta = 1$, note that the $\frac{\partial \vz}{\partial \phi} \frac{\partial \fphi  }{\partial \vz}$ term cancels when expanding $\frac{d }{d \phi} \fphi$, leaving $\frac{d }{d \phi} \mathbb{E}_{\pi_1} [ f_{\phi}(\vz) ] = \mathbb{E}_{p_{\theta}}[ \frac{\partial}{\partial \phi}f_{\phi}(\vz) ]$.  This is to be expected for expectations under $\pzx$, since the derivative with respect to $\phi$ passes inside the expectation and $\frac{\partial \vz}{\partial \phi} = 0$.



\section{Related Work}
\label{sec:related_work}


Thermodynamic integration (\gls{TI}) is a strategy for estimating partition function ratios or free energy differences in simulations of physical systems \cite{ogata1989monte, gelman1998simulating, frenkel2001understanding}, and also finds applications in model selection for phylogenetics \cite{lartillot2006computing,xie2011improving}. 

Physics applications of \gls{TI} often involve sampling forward and reverse state trajectories \cite{frenkel2001understanding,habeck2017model}, as might be done using \gls{MCMC} transition operators.  Indeed, upper and lower bounds identical to those in the \gls{TVO} are used to evaluate bidirectional Monte Carlo \cite{grosse2016measuring}.  A body of recent work `bridging the gap' between \gls{VI} and \gls{MCMC} \cite{Salimans2015, li2017approximate,caterini2018hamiltonian, huang2018improving, ruiz2019contrastive, lawson2019energy} might thus provide a basis for practical improvements in thermodynamic variational inference. 

Several recent \gls{VI} objectives also naturally appear within the \gls{TVO} framework.
As we show in App. ~\ref{app:renyi}, each log-partition function $\log Z_{\beta}(\vx)$ ~\cref{eq:partition_renyi} in our exponential family corresponds to a Renyi divergence \gls{VI} objective \cite{li2016renyi} with order $\alpha = 1-\beta$.  The $\textsc{cubo}$ objectives of \citet{dieng2017variational} correspond to upper bounds on $\log \px$ and log partition functions with $\beta \in [1,2]$.  From our exponential family perspective, there is no explicit restriction that our natural parameters $\beta$ remain in the unit interval, with the $\chi^2-$divergence at $\beta=2$ of notable interest \cite{cortes2010learning}.  
\citet{bamler2017perturbative,bamler2019tightening} also apply a Taylor series approach to obtain tighter bounds on $\log \px$, although the expansion is with respect to the importance weights $ T(\vx, \vz) = \logiw$ rather the natural parameter $\beta$.



\begin{figure*}[h]
    \vspace*{.25cm}
    \begin{minipage}{.5\textwidth}
    \subcaptionbox{Omniglot Test $\log \px$}{
    \includegraphics[width=.95\textwidth]{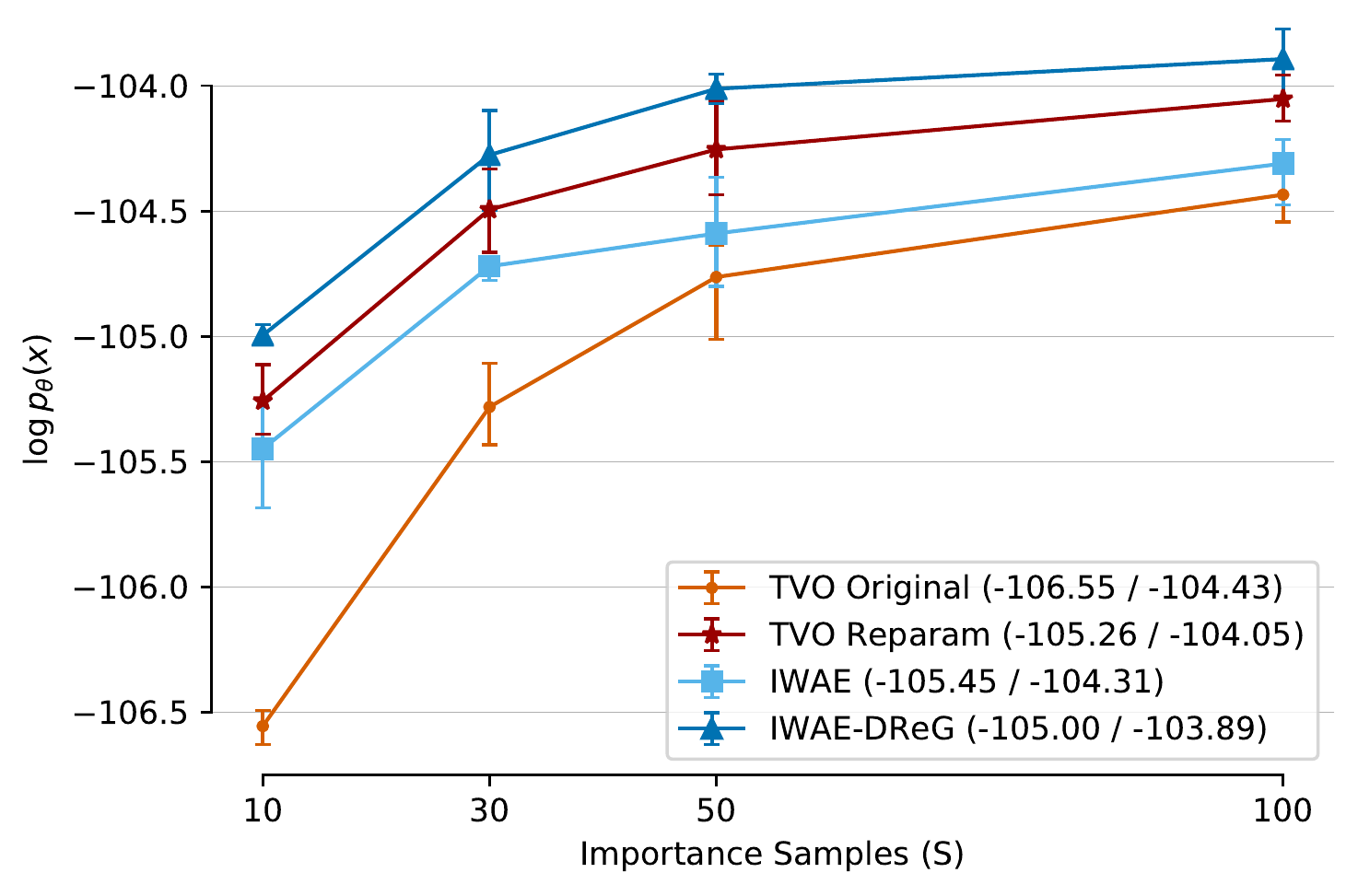}
    }\label{fig:iwaes_logpx}
    \end{minipage}%
    \begin{minipage}{.5\textwidth}
    \subcaptionbox{Omniglot Test $D_{KL}[\qzx||\pzx]$}{
    \includegraphics[width=.95\textwidth]{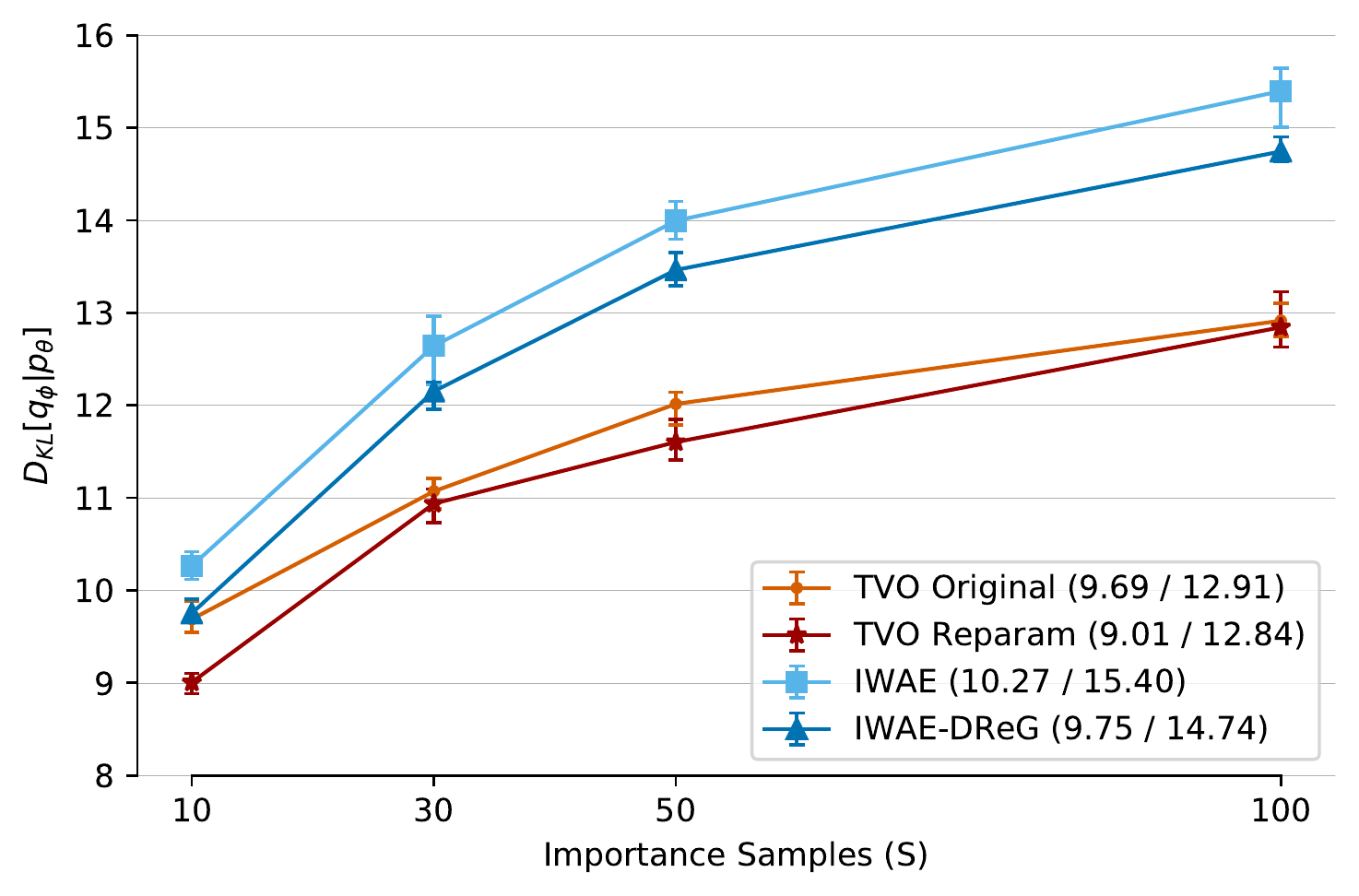}
    } \label{fig:iwaes_kl}
    \end{minipage}%
    \caption{\centering Model Learning and Inference by $S$ with $K=5$. Legend shows (min / max) values across $S$.} \label{fig:iwaes} 
\end{figure*}

\section{Experiments}\label{sec:experiments} 
We investigate the effect of our moment-spacing schedule and reparameterization gradients using a continuous latent variable model on the Omniglot dataset.  We estimate test $\log p_\theta(\vx)$ using the \textsc{iwae} bound \cite{Burda2015} with 5k samples, and use $S=50$ samples for training unless noted.   In all plots, we report averages over five random seeds, with error bars indicating min and max values.  We describe our model architecture and experiment design in App. \ref{app:experiments}, \footnote{\url{https://github.com/vmasrani/tvo_all_in}}  with runtimes and additional results on binary {\scshape mnist} in App. \ref{app:addl_results}. 
\vspace*{-.45cm}
\paragraph{Moment Spacing Dynamics}
 We seek understand the dynamics of our moment spacing schedule in Fig. \ref{fig:moment_spacing}, visualizing the choice of $\beta$ points across training epochs with $K=20$.  Our intermediate distributions concentrate near $\beta=0$ at the beginning of training, since $\qzx$ and $\pzx$ are mismatched and the \gls{TVO} integrand rises sharply away from $\qzx$.  This effect is particularly dramatic within the first five epochs.

While the curve is still fairly noisy within the first twenty epochs, it begins flatten as training progresses and $\qzx$ learns to match $\pzx$.  This is reflected in the $\beta_k$ achieving a given proportion of the moments difference (\gls{EUBO}- \gls{ELBO}) moving to higher values.
We found the moment-scheduling partitions to be relatively stable after 100 epochs.



\paragraph{Grid Search Comparison}
Next, we fix $K=2$ with only $\beta_1$ chosen by the moment spacing schedule.  We compare against grid search in Fig. ~\ref{fig:single_beta_omni} and Fig. \ref{fig:single_beta_mnist} (App. \ref{app:addl_results}), and plot test $\log p_\theta(\vx)$ as a function of $\beta_1 \in [0, 1]$ across 25 static values.  We report the value of $\beta_1$ for our moments schedule at the final epoch, which indicates where $\eta_{\beta_1}$ is halfway between our estimated $\gls{ELBO}$ and $\gls{EUBO}$.

We find that our adaptive scheduling matches the best performance from grid search, with the optimal intermediate distribution occurring at $\beta_1 \approx 0.3$ on both datasets.  With a single, properly chosen intermediate distribution, we find that the \gls{TVO} can achieve notable improvements over the \gls{ELBO} at minimal additional cost.

\paragraph{Evaluating Scheduling Strategies}
From a numerical integration perspective, the \gls{TVO} bounds should become arbitrarily tight as $K \rightarrow \infty$.  However, \citet{masrani2019thermodynamic} observe that additional partitions can be detrimental for learning in practice.
We thus investigate the performance of our moment spacing schedule with a varying number of partitions.
We plot test log likelihood at $K=\{2,5,10,30,50\}$, and compare against three scheduling baselines: linear, log-uniform spacing, and the `coarse-grained' schedule from \citet{grosse2013annealing} (see App. \ref{app:recursive_scheduling}).  We begin the log-uniform spacing at $\beta_1 = 0.025$, a choice which results from grid search over $\beta_1$ for $K>2$ in \citet{masrani2019thermodynamic}.

We observe in Fig. \ref{subfig:omni_tvo_schedules} that the moment scheduler provides the best performance at high and low $K$, while the log-uniform schedule can perform best for particular $K$.  As previously observed, all scheduling mechanisms still suffer degradation in performance at large $K$. 

\paragraph{Reparameterized TVO Gradients}
While our scheduling techniques do not address the detrimental effect of using many intermediate $\beta$, we now investigate the use of our reparameterization gradient estimator from Sec.~\ref{sec:optimization}.   Repeating the previous experiment in Fig. \ref{subfig:omni_reparam_schedules},  we find that reparameterization helps preserve competitive performance for high $K$ and improves overall model likelihoods.  Our moments schedule is still particularly useful at low $K$, while the various scheduling methods converge to similar performance with many partition points.  All scheduling techniques will be equivalent in the limit, as discussed in App. \ref{app:grosse_perspective}.

\paragraph{Comparison with IWAE} 
Finally, we compare \gls{TVO} with moments scheduling against the \gls{IWAE} \cite{Burda2015} and doubly reparameterized \gls{IWAE} {\scshape dreg} \cite{tucker2018doubly} for model learning and posterior inference.  It is interesting to note that \gls{IWAE} corresponds to a direct estimate of $\psi(1)$, with the \gls{SNIS} normalizer $\sum_{i=1}^{S} w_i^1$ in \gls{TVO} \eqref{eq:snis} appearing inside the $\log$.

In Fig. \ref{fig:iwaes}, we observe that \gls{TVO} with reparameterization gradients achieves model learning performance in between that of \gls{IWAE} and \gls{IWAE} {\scshape dreg}, with lower KL divergences across all values of $S$.  We repeat this experiment for {\scshape mnist} in App. \ref{app:addl_results} Fig. \ref{fig:iwaes_mnist}, where \gls{TVO} matches \gls{IWAE} {\scshape dreg} model learning with better inference.  Although we tend to obtain lower $D_{KL}$ with lower model likelihood, we do not observe strong evidence of the signal-to-noise ratio issues of \citet{rainforth2018tighter} on either dataset.  \gls{TVO} with reparameterization thus appears to provide a favorable tradeoff between model learning and posterior inference. %

\section{Conclusion}
\label{sec:conclusions} 
In this work, we interpret the geometric mixture curve found in thermodynamic integration (\gls{TI}), annealed importance sampling (\gls{AIS}), and the Thermodynamic Variational Objective (\gls{TVO}), using the Bregman duality of exponential families.  We leveraged this approach to characterize the gap in \gls{TVO} lower and upper bounds as a sum of KL divergences along a given path, and presented an adaptive scheduling technique based on the mean parameterization of our exponential family.  Finally, we derived a doubly-reparameterized gradient estimator for terms in the \gls{TVO} integrand.


The use of self-normalized importance sampling (\gls{SNIS}) to estimate expectations under $\pi_{\beta}$ may still be a key limitation of the \gls{TVO} (see \citet{masrani2019thermodynamic}), although we relied on the efficiency of \gls{SNIS} for our moment-spacing schedule.  Improved \gls{MCMC} estimators that can be integrated with end-to-end learning of $q_{\phi}(\vz|\vx)$ and $p_{\theta}(\vx,\vz)$ remain an intruiging direction for future work.   In this study, we did not observe performance gains using equal spacing in either the KL or symmetrized KL divergence, but alternative schedules might also be motivated via physical interpretations \cite{andresen1994constant,salamon2002simple,sivak2012thermodynamic}.   We thus hope that our work can encourage further  contributions in thermodynamic variational inference (\textsc{tvi}), a class of methods combining insights from \gls{VI}, \gls{MCMC}, and statistical physics.

\section*{Acknowledgements}
The authors would like to thank Tuan Anh Le for clarifying the interpretation of the symmetrized KL divergence, and an anonymous reviewer for suggesting the connection with Lebesgue integration.  RB thanks Kyle Reing, Artemy Kolchinsky, and Frank Nielsen for helpful discussions. VM thanks David Dehaene for his suggestion of reparameterized TVO gradients.  This paper builds closely upon a workshop paper by RB, GV, and AG.

RB and GV acknowledge support from the Defense Advanced Research Projects Agency (DARPA) under awards FA8750-17-C-0106 and W911NF-16-1-0575.  VM acknowledges the support of the Natural Sciences and Engineering Research Council of Canada (NSERC) under award number PGSD3-535575-2019 and the British Columbia Graduate Scholarship, award number 6768. VM/FW acknowledge the support of the Natural Sciences and Engineering Research Council of Canada (NSERC), the Canada CIFAR AI Chairs Program, and the Intel Parallel Computing Centers program. This material is based upon work supported by the United States Air Force Research Laboratory (AFRL) under the Defense Advanced Research Projects Agency (DARPA) Data Driven Discovery Models (D3M) program (Contract No. FA8750-19-2-0222) and Learning with Less Labels (LwLL) program (Contract No.FA8750-19-C-0515). Additional support was provided by UBC's Composites Research Network (CRN), Data Science Institute (DSI) and Support for Teams to Advance Interdisciplinary Research (STAIR) Grants. This research was enabled in part by technical support and computational resources provided by WestGrid (https://www.westgrid.ca/) and Compute Canada (www.computecanada.ca).


\bibliography{main}
\bibliographystyle{icml2020}

\clearpage
\appendix
\twocolumn
\section{Conjugate Duality} \label{app:duality}
The Bregman divergence associated with a convex function $f: \Omega \rightarrow \mathbb{R}$ can be written as \cite{Banerjee2005}:
\begin{align*}
    D_{B_f}[p:q] = f(p) + f(q) - \langle p-q, \nabla f(q) \rangle
\end{align*}
The family of Bregman divergences includes many familiar quantities, including the KL divergence corresponding to the negative entropy generator $f(p)= -\int p \log p \, d\omega$.  Geometrically, the divergence can be viewed as the difference between $f(p)$ and its linear approximation around $q$.  Since $f$ is convex, we know that a first order estimator will lie below the function, yielding $D_{f}[p:q] \geq 0$.

For our purposes, we can let $f \triangleq \psi(\beta) = \log Z_{\beta}$ over the domain of probability distributions indexed by natural parameters of an exponential family (e.g. \eqref{eq:exp_fam}) :
\begin{align}
    D_{\psi}[\beta_p : \beta_q] = \psi(\beta_p) - \psi(\beta_q) - \langle \beta_p - \beta_q, \nabla_{\beta} \psi(\beta_q) \rangle \label{eq:psi_def}
\end{align}
This is a common setting in the field of information geometry \cite{amari2016information}, which introduces dually flat manifold structures based on the natural parameters and the mean parameters.

\subsection{KL Divergence as a Bregman Divergence}\label{app:breg_kl}
For an exponential family with partition function $\psi(\beta)$ and sufficient statistics $T(\omega)$ over a random variable $\omega$, the Bregman divergence $D_{\psi}$ corresponds to a KL divergence.  Recalling that $\nabla_{\beta} \psi(\beta)= \eta_{\beta} = \mathbb{E}_{\pibeta}[T(\omega)]$ from \cref{eq:deriv}, we simplify the definition \cref{eq:psi_def} to obtain
\begin{align}
D_{\psi}[\beta_p : \beta_q] &= \psi(\beta_p) - \psi(\beta_q) -  \beta_p \cdot \eta_{q} + \beta_q \cdot \eta_{q} \nonumber\\
&= \psi(\beta_p) - \psi(\beta_q) -  \mathbb{E}_{q} [\beta_p \cdot T(\omega)] \nonumber \\
&\phantom{= \psi(\beta_p) - \psi(\beta_q) } + \mathbb{E}_{q} [\beta_q \cdot T(\omega)] \nonumber \\[1.5ex]
&= \underbrace{\mathbb{E}_{q} \big[\beta_q \cdot T(\omega) - \psi(\beta_q)\big] + \mathbb{E}_{q}[\pi_0(\omega)]}_{ \log q(\omega) } \nonumber \\
&\phantom{=\mathbb{E}} - \underbrace{\mathbb{E}_{q} \big[\beta_p \cdot T(\omega) - \psi(\beta_p)\big] - \mathbb{E}_{q}[\pi_0(\omega)]}_{ \log p(\omega) } \nonumber \\
&= \mathbb{E}_{q} \log \frac{q(\omega)} {p(\omega) } \nonumber \\
&= D_{KL}[q(\omega) || p(\omega)] \label{eq:breg_kl}
\end{align}
where we have added and subtracted terms involving the base measure $\pi_0(\omega)$, and used the definition of our exponential family from \eqref{eq:exp_fam}.  The Bregman divergence $D_{\psi}$ is thus equal to the KL divergence with arguments reversed.

\subsection{Dual Divergence}\label{app:dual_divergence}
We can leverage convex duality to derive an alternative divergence based on the conjugate function $\psi^{*}$. 
\begin{align}
\nonumber \\
\psi^{*}(\eta) &= \sup \limits_{\beta} \, \eta \cdot \beta - \psi(\beta) \quad \implies \quad \eta = \nabla_{\beta} \,  \psi(\beta) \nonumber \\
&= \eta \cdot \beta_{\eta} - \psi(\beta_{\eta}) \label{eq:conj}
\end{align}
The conjugate measures the maximum distance between the line $\eta \cdot \beta$ and the function $\psi(\beta)$, which occurs at the unique point $\beta_{\eta}$ where $\eta = \nabla_{\beta} \psi(\beta)$.  This yields a bijective mapping between $\eta$ and $\beta$ for minimal exponential families \cite{wainwrightjordan}.  Thus, a distribution $p$ may be indexed by either its natural parameters $\beta_p$ or mean parameters $\eta_p$.

Noting that $(\psi^{*})^{*} = \psi(\beta) = \sup_{\eta} \eta \cdot \beta - \psi^{*}(\eta)$ \cite{boyd2004convex}, we can use a similar argument as above to write this correspondence as $\beta = \nabla_{\eta}  \psi^{*}(\eta)$.  We can then write the dual divergence $D_{\psi^{*}}$ as:
\begin{align}
    D_{\psi^{*}}[\eta_p : \eta_q] &= \psi^{*}(\eta_p) - \psi^{*}(\eta_q) - \langle \eta_p - \eta_q, \nabla_{\eta} \, \psi^{*}(\eta_q) \rangle \nonumber \\
    &= \psi^{*}(\eta_p) - \underline{\psi^{*}(\eta_q)} - \eta_p \cdot \beta_q  + \underline{\eta_q  \cdot \beta_q} \nonumber \\
    &=  \psi^{*}(\eta_p) + \underline{\psi(\beta_q)} - \eta_p  \cdot  \beta_q   \label{eq:dual_div}
\end{align}
where we have used ~\cref{eq:conj} to simplify the underlined terms.  Similarly,
\begin{align}
  D_{\psi}[\beta_p : \beta_q] &= \psi(\beta_p) - \psi(\beta_q) - \langle \beta_p - \beta_q, \nabla_{\beta} \psi(\beta_q) \rangle \nonumber \\
   &= \psi(\beta_p) - \underline{\psi(\beta_q)} - \beta_p  \cdot  \eta_q  +  \underline{\beta_q  \cdot  \eta_q} \nonumber \\
  &= \psi(\beta_p) + \underline{\psi^{*}(\eta_q)} -  \beta_p \cdot \eta_q \label{eq:primal_div}
\end{align}
Comparing \eqref{eq:dual_div} and \eqref{eq:primal_div}, we see that the divergences are equivalent with the arguments reversed, so that:
\begin{align}
\vspace{-.25cm}
D_{\psi}[\beta_p : \beta_q] = D_{\psi^{*}}[\eta_q : \eta_p] \label{eq:rev_args}
\vspace{-.25cm}
\end{align}

This indicates that the Bregman divergence $D_{\psi^{*}}$ should also be a KL divergence, but with the same order of arguments.  We derive this fact directly in \eqref{eq:kl_as_breg} , after investigating the form of the conjugate function $\psi^{*}$.

\subsection{Conjugate $\psi^{*}$ as Negative Entropy}\label{app:entropy_kl}
We first treat the case of an exponential family with no base measure $\pi_0(\omega)$, with derivations including a base measure in App. \ref{app:conj_kl}.  For a distribution $p$ in an exponential family, indexed by $\beta_p$ or $\eta_p$, we can write $\log p (\omega) = \beta_p \cdot T(\omega) - \psi(\beta)$.  Then, \eqref{eq:conj} becomes:
\begin{align}
    \vspace{-.25cm}
    \psi^{*}(\eta_p) &= \beta_p \cdot \eta_p  - \psi(\beta_p) \label{eq:dualderiv_0} \\
    &= \beta_p \cdot \mathbb{E}_p [T(\omega)] - \psi(\beta_p) \label{eq:dualderiv_eta} \\
    &= \mathbb{E}_{p} \log p(\omega) \label{eq:dualderiv_prob} \\
    &= -H_{p}(\omega) \label{eq:dualderiv_1}
    \vspace{-.25cm}
\end{align}
since $\beta_p$ and $\psi(\beta_p)$ are constant with respect to $\omega$.   Utilizing $\psi^{*}(\eta_p) = \mathbb{E}_{p} \log p(\omega)$ from above, the dual divergence with $q$ becomes:
\begin{align}
\vspace*{-.25cm}
 D_{\psi^{*}}[\eta_p : \eta_q] &= \psi^{*}(\eta_p) - \psi^{*}(\eta_q) - \langle \eta_p - \eta_q, \nabla_{\eta} \, \psi^{*}(\eta_q) \rangle \nonumber \\
 &= \mathbb{E}_p \log p(\omega) - \underline{\psi^{*}(\eta_q)} - \eta_p \cdot \beta_q + \underline{\eta_q \cdot \beta_q} \nonumber  \\
 &= \mathbb{E}_p \log p(\omega) - \eta_p \cdot \beta_q + \underline{\psi(\beta_q)} \nonumber \\
 &= \mathbb{E}_p \log p(\omega) - \mathbb{E}_p [T(\omega) \cdot \beta_q] + \psi(\beta_q) \nonumber  \\
 &= \mathbb{E}_p \log p(\omega) - \mathbb{E}_p \log q(\omega) \nonumber  \\
 &= D_{KL}[p(\omega)||q(\omega)] \label{eq:kl_as_breg}
 \vspace*{-.25cm}
\end{align}

Thus, the conjugate function is the negative entropy and induces the KL divergence as its Bregman divergence \cite{wainwrightjordan}.  

Note that, by ignoring the base distribution over $\omega$, we have instead assumed that $\pi_0(\omega):= u(\omega)$ is uniform over the domain.   In the next section, we illustrate that the effect of adding a base distribution is to turn the conjugate function into a KL divergence, with the base $\pi_0(\omega)$ in the second argument.  This is consistent with our derivation of negative entropy, since $ D_{KL}[p_{\beta}(\omega)||u(\omega)] = -H_{p_{\beta}}(\Omega) + const$.

\subsection{Conjugate $\psi^{*}$ as a KL Divergence}\label{app:conj_kl}
  As noted above, the derivation of the conjugate $\psi^{*}(\eta)$ in \eqref{eq:dualderiv_0}-\eqref{eq:dualderiv_1} ignored the possibilty of a base distribution in our exponential family.  We see that $\psi^{*}(\eta)$ takes the form of a KL divergence when considering a base measure $\pi_0(\omega)$. 
\begin{align}
    \psi^{*}(\eta) &= \sup \limits_{\beta} \beta \cdot \eta - \psi(\beta) \label{eq:dual_opt} \\
    &= \beta_{\eta} \cdot \eta - \psi(\beta_{\eta}) \nonumber \\
    &=  \mathbb{E}_{\pi_{\beta_{\eta}}} [\beta_{\eta} \cdot T(\omega)] - \psi(\beta_{\eta}) \nonumber \\
    &= \mathbb{E}_{\pi_{\beta_{\eta}}}  [\beta_{\eta} \cdot T(\omega)]  - \psi(\beta_{\eta}) \pm \mathbb{E}_{\pi_{\beta_{\eta}}} [\log \pi_0(\omega) ] \nonumber \\
    &= \mathbb{E}_{\pi_{\beta_{\eta}}} [ \log \pi_{\beta_{\eta}(\omega)} - \log \pi_0(\omega)] \nonumber \\
    &= D_{KL}[\pi_{\beta_{\eta}}(\omega) || \, \pi_0(\omega)] \label{eq:conjugate_kl}
\end{align}
 Note that we have added and subtracted a factor of $\mathbb{E}_{\pi_{\beta_{\eta}}} \log \pi_0(\omega)$ in the fourth line, where our base measure $\pi_0(\omega)= q(\vz|\vx) $ in the case of the \gls{TVO}.  Comparing with the derivations in \eqref{eq:dualderiv_eta}-\eqref{eq:dualderiv_prob}, we need to include a term of $\mathbb{E}_p \pi_0(\omega)$ in moving to an expected log-probability $\mathbb{E}_p \log p(\omega)$, with the extra, subtracted base measure term transforming the negative entropy into a KL divergence.

In the \gls{TVO} setting, this corresponds to
\begin{align}
\psi^{*}(\eta) = D_{KL}[\pi_{\beta_{\eta}}(\vz|\vx) || \, q(\vz|\vx)] \, .
\end{align}

When including a base distribution, the induced Bregman divergence is still the KL divergence since, as in the derivation of \cref{eq:breg_kl}, both $\mathbb{E}_{p} \log p(\omega)$  and $\mathbb{E}_{p} \log q(\omega)$  will contain terms involving the base distribution $\mathbb{E}_{p} \log \pi_0(\omega)$.

\section{Renyi Divergence Variational Inference}\label{app:renyi}
In this section, we show that each intermediate partition function $\log Z_{\beta}$ corresponds to a scaled version of the Rényi VI objective $\mathcal{L}_{\alpha}$ \cite{li2016renyi}.

To begin, we recall the definition of Renyi's $\alpha$ divergence.
\begin{align*}
    D_{\alpha}[p||q] = \frac{1}{\alpha-1} \log \int q(\omega)^{1-\alpha} p(\omega)^{\alpha} d\omega
\end{align*}
Note that this involves geometric mixtures similar to \eqref{eq:partition_renyi}.   Pulling out the factor of $\log p(\vx)$ to consider normalized distributions over $\vz|\vx$, we obtain the objective of \citet{li2016renyi}.  This is similar to the \gls{ELBO}, but instead subtracts a Renyi divergence of order $\alpha$.
\begin{align}
 \psi(\beta) &=  \log \int q(\vz|\vx)^{1-\beta} p(\vx,\vz)^{\beta} d\vz \nonumber \\
 &= \beta \log p(\vx) - (1 -\beta) D_{\beta}[\pzx || \qzx]  \nonumber\\
 &= \beta \log p(\vx) - \beta \, D_{1-\beta}[\qzx || \pzx] \nonumber \\
 &:= \beta \, \mathcal{L}_{1-\beta} \nonumber 
 \end{align}
where we have used the skew symmetry property $ D_{\alpha}[p||q] = \frac{\alpha}{1-\alpha} D_{1-\alpha}[q||p]$ for $0 < \alpha < 1$ \cite{van2014renyi}.  Note that $\mathcal{L}_0 = 0$ and $\mathcal{L}_{1} = \log \px$ as in \citet{li2016renyi} and Sec. \ref{sec:expfamily}.

\section{TVO using Taylor Series Remainders}\label{app:taylor}

Recall that in Sec. \ref{sec:bregman_tvo}, we have viewed the KL divergence $D_{\psi} [\beta : \beta^{\prime}]$ as the remainder in a first order Taylor approximation of $\psi(\beta)$ around $\beta^{\prime}$.  The \gls{TVO} objectives correspond to the linear term in this approximation, with the gap in $\tvolb$ and $\tvoub$ bounds amounting to a sum of KL divergences or Taylor remainders.  Thus, the \gls{TVO} may be viewed as a first order method.

Yet we may also ask, what happens when considering other approximation orders?  We proceed to show that thermodynamic integration arises from a zero-order approximation, while the symmetrized KL divergence corresponds to a similar application of the fundamental theorem of calculus in the mean parameter space $\eta_{\beta} = \nabla_{\beta} \psi(\beta)$.  In App. \ref{app:higher_order_tvo}, we briefly describe how `higher-order' \gls{TVO} objectives might be constructed, although these will no longer be guaranteed to provide upper or lower bounds on likelihood.

We will repeatedly utilize the integral form of the Taylor remainder theorem, which characterizes the error in a $k$-th order approximation of $\psi(x)$ around $a$, with $\beta \in [a, x]$\footnote{We use generic variable $x$, not to be confused with data $\vx$, for notational simplicity.
}.  This identity can be derived using the fundamental theorem of calculus and repeated integration by parts (see, e.g. \cite{kountourogiannis2003derivation} and references therein):
\begin{align}
    R_k(x) = \int_{a}^{x} \frac{\nabla_{\beta}^{(k+1)} \psi(\beta)}{k!} (x - \beta)^k d\beta \label{eq:remainder1}
\end{align}

\subsection{Thermodynamic Integration as $0^{th}$ Order Remainder}\label{app:taylor_ti}

Consider a zero-order Taylor approximation of $\psi(1)$ around $a=0$, which simply uses $\psi(0)$ as an estimator.  Applying the remainder theorem, we obtain the identity ~\cref{eq:tvi} underlying thermodynamic integration in the \gls{TVO}:
\begin{align}
    \psi(1) &= \psi(0) + R_0(1) \\
     \psi(1) - \psi(0) &= \int_{0}^{1} \nabla_\beta \psi(\beta) d\beta \label{eq:fundamental} \\
    \log\px &= \int_{0}^{1} \E_{\pibeta} \bigg[ \logiw \bigg] d\beta \label{eq:ti_remainder}
\end{align}
where the last line follows as the definition of $\eta = \nabla_{\beta} \psi(\beta) = \nabla_{\beta} \log Z_{\beta}$ in ~\cref{eq:deriv}.

Note that this integration is symmetric, in that approximating $\psi(0)$ using $\psi(1)$ leads to an equivalent expression after reversing the order of integration.

\subsection{KL Divergence as $1^{st}$ Order Remainder}\label{app:taylor_kl}
We can apply a similar approach to the first order Taylor approximations to reinterpret the TVO bound gaps in \cref{eq:tvo_l} and \cref{eq:tvo_u}, although our remainder expressions will no longer be symmetric.  We will thus distinguish between estimating $\psi(x)$ around $a < x$ and $a > x$ using $R_1^{\rightarrow}(x)$ and  $R_1^{\leftarrow}(x)$, respectively, with the arrow indicating the direction of integration.

Estimating $\psi(\beta_k)$ using a first order approximation around $a = \beta_{k-1}$ as in the \gls{TVO} lower bound, the remainder exactly matches the definition of the Bregman divergence in \cref{eq:breg_def}:
\begin{align}
   R_1^{\rightarrow}(\beta_{k}) &= \psi(\beta_k)-\big( \underbrace{\psi(\beta_{k-1})+(\beta_k-\beta_{k-1})\nabla_{\beta}\psi(\beta_{k-1})}_{\text{First-Order Taylor Approx}} \big) \nonumber \\
   &= \int \limits_{\beta_{k-1}}^{\beta_k} \frac{\nabla_{\beta}^2 \psi(\beta)}{1!} (\beta_k - \beta)^1 d\beta \label{eq:remainder_example}
\end{align}
where \eqref{eq:remainder_example} corresponds to the Taylor remainder from \eqref{eq:remainder1}.
Recall that this Bregman divergence $D_{\psi}[\beta_{k}:\beta_{k-1}]$ corresponds to a KL divergence $D_{KL}^{\,\rightarrow}[\pi_{\beta_{k-1}} || \pi_{\beta_k}]$ and contributes to the gap in $\tvolb$.

Simplifying the Taylor remainder expression, with $\nabla^2_{\beta} \psi(\beta) =  \Var_{\pibeta} \logiw $, we obtain an integral representation of the KL divergence:
\begin{align}
    D_{KL}^{\,\rightarrow}[\pi_{\beta_{k-1}} || \pi_{\beta_k}] = \int \limits_{\beta_{k-1}}^{\beta_k}(\beta_k - \beta) \Var_{\pibeta} \logiw d\beta \label{eq:kl_var_lr}
\end{align}

Following similar arguments in the reverse direction, we can obtain an integral form for the \gls{TVO} upper bound gap $R_1^{\leftarrow}(\beta_{k-1}) = D_{KL}[\pi_{\beta_{k}} || \pi_{\beta_{k-1}}]$ via the first-order approximation of $\psi(\beta_{k-1})$ around $a = \beta_{k}$. 
\begin{align}
   R_1^{\leftarrow}(\beta_{k-1}) &= \psi(\beta_{k-1})-\big( \psi(\beta_{k})+(\beta_{k-1}-\beta_{k})\nabla_{\beta}\psi(\beta_{k}) \big) \nonumber \\[1.5ex]
   &= (\beta_{k}-\beta_{k-1})\nabla_{\beta}\psi(\beta_{k}) - ( \psi(\beta_{k}) - \psi(\beta_{k-1}) ) \nonumber \\[1.5ex]
   &= \int \limits_{\beta_{k}}^{\beta_{k-1}} \frac{\nabla_{\beta}^2 \psi(\beta)}{1!} (\beta_{k-1} - \beta)^1  d\beta \label{eq:left_remainder}
\end{align}
Note that the \gls{TVO} upper bound \cref{eq:tvo_u} arises from the second line, with $R_1^{\leftarrow}(\beta_{k-1}) \geq 0$ and $(\beta_{k}-\beta_{k-1}) \nabla_{\beta}\psi(\beta_{k})$ corresponding to a right-Riemann approximation.

Switching the order of integration in \eqref{eq:left_remainder}, we can write the KL divergence as
\begin{align}
&D_{KL}^{\,\leftarrow}[\pi_{\beta_{k}} || \pi_{\beta_{k-1}}] = \int \limits_{\beta_{k-1}}^{\beta_k}(\beta - \beta_{k-1}) \Var_{\pibeta} \logiw d\beta \label{eq:kl_var_rl}
\end{align}

While these integral expressions for the KL divergence may not be immediately intuitive, our use of the Taylor remainder theorem unifies their derivation with that of thermodynamic integration.  Alternative derivations may also be found in \citet{dabak2002relations}.

\subsection{Symmetrized KL Divergence}\label{app:symm_kl}
Combining the expressions for the KL divergence in Eq. \cref{eq:kl_var_lr} and \cref{eq:kl_var_rl} immediately leads to a known result relating the symmetrized KL divergence to the integral of the Fisher information along the geometric path  \cite{amari2016information,dabak2002relations}.
\begin{align}
\dsymm &= (\beta_k - \beta_{k-1}) \int \limits_{\beta_{k-1}}^{\beta_{k}} \Var_{\pibeta} \bigg[ \logiw d\beta \bigg] \label{eq:symm_kl_integralA}
\end{align}
 where we have defined the symmetrized KL divergence as:
\begin{align}
\dsymm[\beta_{k-1};  \beta_{k}] = D_{KL}^{\,\rightarrow}[\pi_{\beta_{k-1}} ||   \pi_{\beta_{k}}] + D_{KL}^{\,\leftarrow}[\pi_{\beta_{k}} ||   \pi_{\beta_{k-1}}] \nonumber
\end{align}

Our goal in this section will be to show that \eqref{eq:symm_kl_integralA} arises from similar `thermodynamic integration' on the graph of the mean parameters $\eta_{\beta}$.  Recall that we previously applied the fundamental theorem of calculus to $\psi(\beta) = \log Z_{\beta}$ to obtain the difference in log-partition functions
\begin{align}
\psi(\beta_k) - \psi(\beta_{k-1}) = \int \limits_{\beta_{k-1}}^{\beta_k} &\nabla_{\beta} \psi(\beta) d\beta \nonumber
\end{align}
We can obtain a similar expression for the mean parameters $\eta_{\beta} = \nabla_{\beta} \psi(\beta)$ by integrating over the second derivative.
\begin{align}
 \eta_{k} - \eta_{k-1} =   \int \limits_{\beta_{k-1}}^{\beta_k} &\nabla_{\beta}^2 \psi(\beta) d\beta \label{eq:symm_kl_tiA} 
\end{align}

Recalling that $\nabla_{\beta}^2 \psi(\beta) =\Var_{\pibeta} \logiw$, we see that the integrands in ~\cref{eq:symm_kl_integralA} and \cref{eq:symm_kl_tiA} are identical.  Integrating with respect to $\beta$, we obtain the `area of a rectangle' identity for the symmetrized KL divergence (as in \eqref{eq:symm_kl_square}):
\begin{align}
\dsymm [\beta_{k-1};  \beta_{k}] &=  \Delta_{\beta_k} \cdot \int \limits_{\beta_{k-1}}^{\beta_{k}} \Var_{\pibeta} \bigg[ \logiw d\beta \bigg] \nonumber \\
&=(\beta_k - \beta_{k-1}) \int \limits_{\beta_{k-1}}^{\beta_k} \nabla^2_{\beta} \psi(\beta) d\beta \nonumber \\
&= (\beta_k - \beta_{k-1}) \big(  \nabla_{\beta} \psi(\beta) \, \big|_{\beta_{k-1}}^{\beta_k} \big) \nonumber \\
&= (\beta_k - \beta_{k-1})(\eta_{k} - \eta_{k-1}) \label{eq:symm_kl_squareA}
\end{align}
This identity is best understood via Fig. \ref{fig:symm_kl_one_piece} in  Sec. \ref{sec:taylor4}.

To summarize, we have given several equivalent ways of understanding the symmetrized KL divergence.
The `forward' and `reverse' KL divergences arise as gaps in the \gls{TVO} left- and right-Riemann approximations (\cref{fig:symm_kl_one_piece}), or first order Taylor remainders as in \cref{eq:kl_var_lr} and \cref{eq:kl_var_rl}.   Summing these quantities corresponds to the area of a rectangle \cref{eq:symm_kl_squareA} on the graph of the \gls{TVO} integrand $\eta_{\beta}$, or to the integral of a variance term via the Taylor remainder theorem ~\cref{eq:symm_kl_integralA} or fundamental theorem of calculus ~\cref{eq:symm_kl_tiA}.

Note that the \gls{TVO} integrand $\eta_{\beta} = \nabla_{\beta} \psi(\beta) = \mathbb{E}_{\pibeta}[ \logiw]$ will be linear when its derivative, the variance of the log importance weights, is constant within $\beta \in [\beta_{k-1}, \beta_{k}]$.  
 The KL divergence is actually symmetric in this case, which we treat in more detail in the next section (App. \ref{app:fisher_integral}).  More generally, the curvature of the integrand indicates which direction of the KL divergence has larger magnitude, and ~\cref{fig:symm_kl_one_piece} reflects our empirical observations that $D_{KL}[\pi_{\beta_{k-1}} ||   \pi_{\beta_{k}}] > D_{KL}[\pi_{\beta_{k}} ||   \pi_{\beta_{k-1}}].$

\section{Asymptotic Linear Scheduling Analysis} \label{app:fisher_integral}

\citet{grosse2013annealing} treat a quantity identical to $\tvolb$ in the context of analysing the variance of \gls{AIS} estimators.  Using the Central Limit Theorem, \citet{neal2001annealed} show that the variance of an \gls{AIS} estimator is monotonically related to $\tvolb$ under perfect transitions, or independent, exact samples from each intermediate $\beta$ (see \citet{grosse2013annealing} Eq. 3).  However, note that \gls{AIS} estimates expectations over \textit{chains} of $\MCMC$ samples rather than the simple reweighting used in the \gls{TVO}.

In this section, we provide additional perspective on the analysis of \citet{grosse2013annealing}, which considers the asymptotic behavior of the scaled gap in $\tvolb$, $K \cdot D_{\underset{\rightarrow}{KL}}[\pi_{\beta_{k-1}} || \pi_{\beta_{k}}]$, as $K \rightarrow \infty$.

We begin by restating Theorem 1 of \citet{grosse2013annealing} for the case of the full \gls{TVO} objective.  We describe the resulting `coarse-grained' linear binning schedule for choosing $\{ \beta_k \}$ in \ref{app:recursive_scheduling} and provide further analysis in \ref{app:grosse_perspective}.

\begin{theorem}[\citet{grosse2013annealing}]\label{thm:grosse}
Suppose $K+1$ distributions $\{\pi_{\beta_k}\}_{k=0}^{K}$ are linearly spaced along a path $\mathcal{P}$. Under the assumption of perfect transitions, if the Fisher information matrix $G(\beta)$ is smooth, then as $K \rightarrow \infty$:
\begin{align}
    K \sum \limits_{k=1}^{K} D_{KL}^{\,\rightarrow}&[\pi_{\beta_{k-1}}  || \pi_{\beta_{k}}]  \rightarrow  \frac{1}{2} \int \limits_{0}^{1} \dot{\beta}(t) \cdot G \big(\beta(t)\big) \cdot \dot{\beta}(t) dt  \label{eq:path_} \\
    &= \frac{1}{2} \big( D_{KL}^{\,\rightarrow}[\pi_{\beta_0} || \pi_{\beta_K}] + D_{KL}^{\,\leftarrow}[\pi_{\beta_K} || \pi_{\beta_0}] \big) \nonumber 
\end{align}
Here, we let $t \in [0,1]$ parameterize the path $\beta(t) = (1-t)\cdot \beta_0 + t \cdot \beta_K$, and let $\dot{\beta}(t)$ denote the derivative of the parameter $\beta$ with respect to $t$.   For linear mixing of the natural parameters as above, this is a constant: $\dot{\beta}(t) = \beta_K - \beta_0$.   In the case of the full \gls{TVO} integrand, $\dot{\beta}(t) = 1$.
\end{theorem}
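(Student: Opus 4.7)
The plan is to prove both equalities in \eqref{eq:path_} by combining the integral form of the KL divergence derived in \eqref{eq:kl_var_lr} with the rectangle identity \eqref{eq:symm_kl_squareA} for the symmetrized KL. Since the Fisher information of the path exponential family coincides with the second derivative of the log-partition function, $G(\beta) = \nabla^2_\beta \psi(\beta) = \Var_{\pibeta}[\logiw]$, each quantity appearing in the theorem can already be written as a one-dimensional integral in $\beta$, and only elementary asymptotic analysis is required.

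First I would fix the linear schedule $\beta_k = \beta_0 + k \Delta_\beta$ with $\Delta_\beta = (\beta_K - \beta_0)/K$ and apply \eqref{eq:kl_var_lr} term-by-term:
\begin{align*}
D_{KL}^{\,\rightarrow}[\pi_{\beta_{k-1}} || \pi_{\beta_k}] = \int_{\beta_{k-1}}^{\beta_k} (\beta_k - \beta) \, G(\beta) \, d\beta.
\end{align*}
A uniform Taylor expansion $G(\beta) = G(\beta_{k-1}) + O(\Delta_\beta)$ on each subinterval evaluates this integral as $\tfrac{1}{2} G(\beta_{k-1}) \Delta_\beta^2 + O(\Delta_\beta^3)$. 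Summing over $k$ and multiplying by $K$ yields
\begin{align*}
K \sum_{k=1}^K D_{KL}^{\,\rightarrow}[\pi_{\beta_{k-1}} || \pi_{\beta_k}] = \frac{\beta_K - \beta_0}{2} \cdot \frac{\beta_K - \beta_0}{K} \sum_{k=1}^K G(\beta_{k-1}) + O\!\left(\tfrac{1}{K}\right),
\end{align*}
and the leading Riemann sum converges to $\tfrac{\beta_K - \beta_0}{2} \int_{\beta_0}^{\beta_K} G(\beta)\, d\beta$. The change of variables $d\beta = \dot{\beta}(t)\, dt$ induced by the parameterization in the theorem statement rewrites this as $\tfrac{1}{2} \int_0^1 \dot{\beta}(t) G(\beta(t)) \dot{\beta}(t)\, dt$, establishing the first equality.

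For the second equality no limit is needed, since \eqref{eq:symm_kl_squareA} applied directly on the full interval $[\beta_0, \beta_K]$ gives
\begin{align*}
D_{KL}^{\,\rightarrow}[\pi_{\beta_0} || \pi_{\beta_K}] + D_{KL}^{\,\leftarrow}[\pi_{\beta_K} || \pi_{\beta_0}] = (\beta_K - \beta_0)(\eta_K - \eta_0),
\end{align*}
while \eqref{eq:symm_kl_tiA} supplies $\eta_K - \eta_0 = \int_{\beta_0}^{\beta_K} G(\beta)\, d\beta$. The same change of variables matches this product with $\int_0^1 \dot{\beta}(t) G(\beta(t)) \dot{\beta}(t)\, dt$, and dividing by two completes the argument.

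The main obstacle is the bookkeeping of error terms in the first part: one must ensure that each subinterval's $O(\Delta_\beta^3)$ Taylor remainder survives the factor $K$ and summation over $K$ pieces only as a vanishing $O(1/K)$ contribution, which follows from uniform boundedness of $G'$ on $[\beta_0, \beta_K]$ under the stated smoothness assumption. Everything else reduces to recognition of a Riemann sum, a linear change of variables, and application of the rectangle identity depicted in \cref{fig:symm_kl_one_piece}.
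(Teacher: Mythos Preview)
Your proposal is correct and follows essentially the same approach as the paper: extract the leading $\tfrac{1}{2}G(\beta_{k-1})\Delta_\beta^2$ term from each KL contribution, recognize the resulting Riemann sum, and identify the limiting integral with the symmetrized KL via the rectangle identity \eqref{eq:symm_kl_squareA}. The only cosmetic difference is that the paper Taylor-expands $D_{KL}[\beta_k\|\beta_k+\Delta_\beta]$ directly (citing \citet{kullback1997information}) to get the quadratic leading term, whereas you start from the exact integral representation \eqref{eq:kl_var_lr} and Taylor-expand $G$ inside it; both routes land on the same $\tfrac{1}{2}G\Delta_\beta^2+O(\Delta_\beta^3)$ estimate and the same error control.
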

\begin{proof}
See \cite{grosse2013annealing} for a detailed proof, which proceeds by taking the Taylor expansion of $D_{KL}[\beta_{k} || \beta_{k} + \Delta_{\beta} ] $ around each $\beta_{k}$ for small $\Delta_{\beta}$.   In particular,  $\Delta_{\beta} = \frac{1}{K} (\beta_K - \beta_0)$ for linearly spaced $\beta_k = (1- \frac{k}{K})\cdot \beta_0 + \frac{k}{K} \cdot \beta_K$.  We assume w.l.o.g. $\beta_K - \beta_0 = 1$ and $\Delta_{\beta} = \frac{1}{K}$ as in \citet{grosse2013annealing} or \gls{TVO}.

The zero- and first-order terms vanish, and the second-order term, with $\Delta_{\beta}^2 = \frac{1}{K^2}$, can be written as (see e.g. \citet{kullback1997information} p. 26):
\begin{align}
K \sum \limits_{k=1}^{K} D_{KL}[\beta_{k} || & \beta_{k} + \Delta_{\beta} ] = K \cdot \frac{1}{2 K^2} \sum \limits_{k=1}^K  \dot{\beta}_k \cdot G (\beta_k) \cdot \dot{\beta}_k \nonumber \\
&\phantom{\beta_{k} + \Delta_{\beta} ] =} + K \cdot \mathcal{O}(K^{-3}) \\
& \rightarrow \frac{1}{2}  \, \int \limits_0^1 \dot{\beta}(t) G\big(\beta (t) \big) \dot{\beta}(t) dt
\end{align}
where we have absorbed $\Delta_{\beta} = \frac{1}{K}$ into a continuous measure $dt$ as $K \rightarrow \infty$.

We now show that this expression corresponds to the symmetrized KL divergence, as in \cite{amari2016information,dabak2002relations}.  While this was not stated in the theorem of \citet{grosse2013annealing}, it has also been shown by e.g. \citet{huszar_2017}.  Observe that $G(\beta) = \nabla_{\beta} \psi(\beta) = \text{Var}_{\pibeta}[T(\vx,\vz)]$ as in \cref{eq:symm_kl_integralA} and \cref{eq:symm_kl_squareA}.  Noting that the chain rule implies $\frac{d}{dt} G \big(\beta (t) \big) = \frac{d}{d\beta} G \big(\beta (t) \big) \frac{d\beta}{dt}$, we can pull one term of $\frac{d\beta}{dt} = \dot{\beta}(t) = (\beta_K - \beta_0)$ outside the integral and perform integration by substitution.  Ignoring the $1/2$ factor,
\begin{align}
\hspace*{-.2cm}\frac{(\beta_K - \beta_0)}{2} \,& \int \limits_0^1 G\big(\beta (t) \big) \frac{d\beta}{dt} dt \nonumber = \frac{(\beta_K - \beta_0)}{2} \int \limits_{\beta_0}^{\beta_K} \nabla_{\beta}^2 \psi(\beta) d\beta \nonumber \\
&= \frac{1}{2} (\beta_K - \beta_0) (\eta_K - \eta_0) \label{eq:symm_kl_square_pf} \\
&= \frac{1}{2} \big( D_{KL}^{\,\rightarrow}[\pi_{\beta_0} || \pi_{\beta_K}] +  D_{KL}^{\,\leftarrow}[\pi_{\beta_K} || \pi_{\beta_0}] \big) \nonumber
\end{align}
\end{proof}

\subsection{`Coarse-Grained' Linear Schedule}\label{app:recursive_scheduling}
\citet{grosse2013annealing} then use this asymptotic condition \cref{eq:symm_kl_square_pf} as $K \rightarrow \infty$  to inform the choice of a \textit{discrete} partition $\mathcal{P} = \{\beta_k\}_{k=0}^{K}$.

More concretely, consider dividing the interval $[0,1]$ into $J$ equally-spaced knot points $\{\beta_j\}_{j=0}^{J}$.  We then allocate a total budget of $K = \sum_{j=1}^J K_j$ intermediate distributions across sub-intervals $[\beta_{j-1}, \beta_j]$, with uniform linear spacing of the $K_j$ partitions within each sub-interval.

Using \cref{eq:symm_kl_square_pf}, \citet{grosse2013annealing} assign a cost $F_j = (\beta_j - \beta_{j-1})(\eta_j - \eta_{j-1}) $ to each `coarse-grained' interval $[\beta_{j-1}, \beta_j]$.  Minimizing $\sum_j F_j$  subject to $\sum_j K_j = K$, the allocation rule becomes:
\begin{align}
    K_j \propto \sqrt{(\beta_{j+1} - \beta_{j}) (\eta_{j+1} - \eta_{j})} \label{eq:opt_bins}
\end{align}

We observe that performance when using this method can be sensitive to the number of knot points used, and we found $J=20$ to perform best in our experiments.

\subsection{Additional Perspectives on \citet{grosse2013annealing}} \label{app:grosse_perspective}
\paragraph{Geometric Intuition for Theorem 1:}
To further understand Theorem 1 of \citet{grosse2013annealing}, observe that the \gls{TVO} integrand will appear linear within any interval $[\beta_{k-1}, \beta_k]$ as $K \rightarrow \infty$.  For general endpoints $\beta_0$ and $\beta_K$, we let $\Delta_{\beta} = \beta_k - \beta_{k-1} = \frac{\beta_K - \beta_0}{K}$.

Having already visualized the symmetrized KL divergence as the area of a rectangle in \cref{fig:symm_kl_one_piece}, we can see that each directed KL divergence, $D_{KL}^{\,\rightarrow}[\pi_{\beta_{k-1}}  || \pi_{\beta_{k}}]$ and $D_{KL}^{\,\leftarrow}[\pi_{\beta_{k}}  || \pi_{\beta_{k-1}}]$, will approach the area of \textit{triangle} as the integrand becomes linear or $K\rightarrow \infty$, with area equal to $1/2 \cdot \Delta_{\beta} \cdot \Delta_{\eta}$ .  Then, the $D_{KL}$ scaled by $K$ becomes 
\begin{align}
K \sum \limits_{k=1}^{K} D_{KL}^{\,\rightarrow}[\pi_{\beta_{k-1}} &|| \pi_{\beta_{k}}] \rightarrow K \sum \limits_{k=1}^{K} \frac{1}{2} \cdot \Delta_{\beta} \cdot \Delta_{\eta} \label{eq:quasi} \\
&= K \sum \limits_{k=1}^{K} \frac{1}{2} (\frac{\beta_K - \beta_0}{K}) \cdot (\eta_k - \eta_{k-1}) \nonumber \\
&= \frac{1}{2} (\beta_K - \beta_0) \cdot (\eta_K - \eta_{0}) \, . \nonumber
\end{align}
where, in the last line, we cancel factors of $K$ and note the cancellation of intermediate $\eta_k$ in the telescoping sum.

\paragraph{Thermodynamic Interpretation:}  This limiting behavior is also discussed in thermodynamics, where the LHS of \cref{eq:quasi} and \cref{eq:path_} corresponds to the rate of entropy production in transitioning a system from $\pi_{\beta_{0}}$ to $\pi_{\beta_{1}}$ along a path defined by $\{\beta_k\}$.  The condition that $K \rightarrow \infty$ refers to the \textit{quasi-static} limit, in which the system is allowed to remain at equilibrium throughout the process (e.g. \citet{crooks2007measuring}).

\paragraph{Exponential and Mixture Geodesics:}
As in the statement of Theorem 1, we can more generally consider connecting two distributions, indexed by natural parameters $\beta_0$ and $\beta_1$, using a parameter $t \in [0,1]$.  The curve $\beta_t = (1-t) \cdot \beta_0 + t \cdot \beta_1$ then corresponds to our path exponential family \cref{eq:exp_fam}, and is also referred to as the $e$-geodesic in information geometry \citet{amari2016information}.

Similarly, the moment-averaged path of \citet{grosse2013annealing}, which also underlies our scheduling strategy in Sec. \ref{sec:schedules}, can be viewed as a linear mixture in the mean parameter space.   The $m$-geodesic then refers to the curve $\eta_t = (1-t) \cdot \eta_0 + t \cdot \eta_1$ \cite{amari2016information}.  Note that these mixtures reference different distribution for the same parameter $t$, so that $\eta_{\beta_t} \neq \eta_t$.

\citet{grosse2013annealing} proceed to show that the expression for the symmetric KL divergence \eqref{eq:path_} corresponds to the integral of the Fisher information along \textit{either} the geometric or mixture paths (Theorem 2 of \citet{grosse2013annealing}, Theorem 3.2 of \citet{amari2016information}).   The union of the intermediate distributions integrated by these two paths coincide in our one-dimensional exponential family, although this intuition does not appear to translate to higher dimensions.  



\section{Higher Order \gls{TVO}} \label{app:higher_order_tvo}
While the convexity of the log-partition function yields the family of Bregman divergences from the remainder in the first order Taylor approximation, we might also consider higher order terms to obtain tighter bounds on likelihood or analyse properties of the TVO integrand $\nabla_{\beta} \psi(\beta)$.  We give an example derivation for a second-order \gls{TVO} objectives, although these are no longer guaranteed to be upper or lower bounds on likelihood.

\paragraph{Left-to-Right Expansion}
We first consider expanding the approximations in the \gls{TVO} left-Riemann sum to second order.  We denote the resulting objective $\mathcal{L}_{\rightarrow}^{(2)}$, since we move `left-to-right' in estimating $\psi(\beta_k)$ around $\beta_{k-1}$.  We begin by writing the second-order Taylor approximation:
\begin{align}
\psi(\beta_k) & \approx \psi(\beta_{k-1}) + (\beta_{k} - \beta_{k-1}) \, \nabla_{\beta} \psi(\beta_{k-1}) \nonumber \\
   & \phantom{\approx}+  \frac{1}{2} (\beta_{k} - \beta_{k-1})^2 \, \nabla_{\beta}^2 \psi(\beta_{k-1}) \label{eq:second_order_lr}
\end{align}
While $\tvolb$ consists of the first-order term alone, we can also consider adding the non-negative, second-order term to form the objective $\mathcal{L}_{\rightarrow}^{(2)}$.  Using successive Taylor approximations of $\psi(\beta_k)$, we obtain similar telescoping cancellations to obtain
\begin{align} 
\log  p(\vx) - \mathcal{L}_{\rightarrow}^{(2)} &= \log p(\vx) - \sum \limits_{k=1}^{K} (\beta_{k} - \beta_{k-1}) \cdot \eta_{\beta_{k-1}} \nonumber \\
\phantom{=} &- \sum \limits_{k=1}^{K} \frac{1}{2} (\beta_{k} - \beta_{k-1})^2 \,\text{Var}_{\pi_{\beta_{k-1}}} \log \frac{p(\vx,\vz)}{q(\vz|\vx)}  \label{eq:second_order_lb}
\end{align}
where $\eta_{\beta_{k-1}} = \mathbb{E}_{\pi_{\beta_{k-1}}} \log \frac{p(\vx,\vz)}{q(\vz|\vx)}$.

We previously obtained a lower bound on log-likelihood via this construction, with $\log  p(\vx) - \tvolb \geq 0$.  However, $\mathcal{L}_{\rightarrow}^{(2)}$ will only provide a lower bound if $\nabla_{\beta} \psi(\beta)$ is concave, i.e.  $\nabla^3_{\beta} \psi(\beta) \leq 0.$  To see this, we write the Taylor remainder \cref{eq:remainder1} as
\begin{align}
   R_{2}^{\rightarrow}(\beta_k)  &= \int \limits_{\beta_{k-1}}^{\beta_k} \frac{1}{6} (\beta_{k} - \beta_{t})^3 \, \nabla_{\beta}^3 \psi(\beta_{t}) d\beta_t
\end{align}
with the third derivative equal to
\begin{align*}
   \nabla_{\beta}^3 \psi(\beta) &= \mathbb{E}_{\pi_{\beta}} [T(\vx, \vz)^3] - 3 \, [\mathbb{E}_{\pi_{\beta}} T(\vx, \vz)] \cdot \mathbb{E}_{\pi_{\beta}} [T(\vx, \vz)^2] \\
   & \phantom{=} + 2  [\mathbb{E}_{\pi_{\beta}} T(\vx, \vz)]^3 \\ \\
  &= \mathbb{E}_{\pi_{\beta}} [T(\vx, \vz)^3] - [\mathbb{E}_{\pi_{\beta}} T(\vx, \vz)]^3 \\
  &\phantom{=} - 3 [\mathbb{E}_{\pi_{\beta}}T(\vx, \vz)] \cdot [\text{Var}_{\pi_{\beta}} T(\vx, \vz) ]
\end{align*}
In addition to indicating that $\mathcal{L}_{\rightarrow}^{(2)}$ is a lower bound on $\log \px$, testing the concavity of $\nabla_{\beta} \psi(\beta)$ using $\nabla^3_{\beta} \psi(\beta) \leq 0$ can also indicate whether a trapezoid approximation to the \gls{TVO} integral provides a valid lower bound.

We can give an identical construction for the reverse direction $\mathcal{L}_{\leftarrow}^{(2)}$ or higher order approximations.  We leave a full exploration of these objectives for future work.

\section{Experimental Setup}\label{app:experiments}
Code for all experiments can be found at \url{https://github.com/vmasrani/tvo_all_in}. 
\paragraph{Model}
Following \cite{Burda2015}, we use a variational autoencoder~\cite{Kingma2013} with a 50-dimensional stochastic layer, $\vz \in \R^{50}$
\begin{align*}
    p_{\theta}(\vx, \vz) &= p_{\theta}(\vx|\vz)p(\vz)\\
    p(\vz) &= \N(\vz | 0, \vI)\\
    p_{\theta}(\vx|\vz) &= \Bern(\vx | \text{decoder}_{\theta}(\vz))\\
    q_{\phi}(\vz|\vx) &= \N(\vz ; \boldsymbol{\mu}_{\phi}(\vx), \boldsymbol{\sigma}_{\phi}(\vx)) 
\end{align*}
where the encoder and decoder are each two-layer MLPs with tanh activations and 200 hidden dimensions. The output of the encoder is duplicated and passed through an additional linear layer to parameterize the mean and log-standard deviation of a conditionally independent Normal distribution. The output of the decoder is a sigmoid which parameterizes the probabilities of the independent Bernoulli distribution. $\theta$ and $\phi$ refer to the weights of the decoder and encoder, respectively.
\paragraph{Dataset}
We use Omniglot~\cite{lake_one-shot_2013}, a dataset of 1623 handwritten characters across 50 alphabets. Each datapoint is binarized $28 \times 28$ image, i.e $\vx \in \{0,1\}^{784}$, where we follow the common procedure in the literature of sampling each binary-valued observation with expectation equal to the real pixel value~\cite{salakhutdinov2008quantitative,Burda2015}. We split the dataset into 24,345 training and 8,070 test examples.
\paragraph{Training Procedure}
All models are written in PyTorch and trained on GPUs. For each scheduler, we train for 5000 epochs using the Adam optimizer~\cite{kingmaAdamMethodStochastic2017} with a learning rate of $10^{-3}$, and minibatch size of 1000. All weights are initialized with PyTorch's default initializer.

\section{Implementation Details}\label{app:implementation}
While the Legendre transform, mapping between a target value of expected sufficient statistics $\eta = \mathbb{E}_{\pibeta}[T(\omega)]$ and the appropriate natural parameters $\beta$, can be a difficult problem in general, we describe how to efficiently implement our `moments-spacing' schedule in the context of \gls{TVO}.

Recall from Sec. \ref{sec:schedules} that we are interested in finding a discrete partition $\mathcal{P}_{\beta} = \{ \beta_k \}_{k=0}^{K}$ such that:
\begin{align}
\beta_k = \eta^{-1}_{\beta}\left( \frac{k}{K} \cdot \gls{ELBO} + (1-\frac{k}{K} ) \cdot \gls{EUBO} \right) \label{eq:moments_scheduleA}
\end{align}
In other words, we seek to find the $\beta_k$ such that $\mathbb{E}_{\pi_{\beta_k}} [\logiw] \approx \eta_k$, where $\eta_k$ are equally spaced between the \gls{ELBO} and \gls{EUBO} (see \cref{fig:moment_schedule}).


\begin{figure*}[t]
    \caption{Pseudo-code Implementation of Moments Scheduling for \gls{TVO}}\label{fig:moments_code}
\begin{minipage}{.5\textwidth}
\begin{lstlisting}[language=Python]
# Calculate expected sufficient statistics eta at a given beta (Eq. 12)
def calc_eta(log_iw, beta):
    # 1) Exponentiate/normalize over importance sample dimension
    snis = torch.exp(log_iw*beta -
        torch.logsumexp(log_iw*beta,
          dim = 1, keepdim=True))
    # 2) Take mean over data examples
    return torch.mean(snis*log_iw, dim =0)

def binary_search(target, log_iw, start=0, stop=1, threshold = 0.1):

    beta_guess = .5*(stop-start)
    eta_guess = calc_eta(log_iw,beta_guess)
    if eta_guess > target + threshold:
        return binary_search(
          target,
          log_iw,
          start=beta_guess,
          stop=stop)
    elif eta_guess < target - threshold:
        return binary_search(
          target,
          log_iw,
          start=start,
          stop=beta_guess)
    else:
        return beta_guess
\end{lstlisting}
\end{minipage}%
\begin{minipage}{.5\textwidth}
\begin{lstlisting}[language=Python]

def moments_spacing_schedule(log_iw, K, search='binary'):
    # 1) Calculate target values for uniform moments spacing
    elbo = calc_eta(log_iw, 0)
    eubo = calc_eta(log_iw, 1)
    targets = [(1-t)*elbo+t*eubo
      for t in np.linspace(0,1,K+1)]

    # 2) Find beta corresponding to each target (including beta=0,1)
    beta_schedule = [0]

    for _k in range(1, K):
        target_eta = targets[_k]

        beta_k = binary_search(
                    target_eta,
                    log_iw,
                    start = 0,
                    stop = 1)

        beta_schedule.append(beta_k)

    beta_schedule.append(1)
    # 3) Return beta_schedule: used for Riemann approximation points in TVO objective
    return beta_schedule
\end{lstlisting}
\end{minipage}%
\end{figure*}

More concretely, we provide pseudo-code implementing our moments spacing schedule below.  Given a set of $S$ log-importance weights per sample, and a number of intermediate distributions $K$:

\section{Additional Results}\label{app:addl_results}

In this section, we report wall-clock runtimes and run similar experiments as in Sec.~\ref{sec:experiments} to evaluate our moments spacing schedule and reparameterized gradients on the binarized MNIST dataset \cite{salakhutdinov2008quantitative}.

\paragraph{Wall-Clock Times}
We report wall clock runtimes for various scheduling methods with $S=50$ and $K=5$ in Fig. \ref{fig:wallclock}.  While \gls{TVO} methods require slight overhead compared with $\gls{IWAE}$, our adaptive moments scheduler does not require significantly more computation than the log-uniform baseline.

\paragraph{Grid Search Comparison} 
    We evaluate our moments schedule with $K=2$ against grid search over the choice of a single intermediate $\beta_1$ in Fig. \ref{fig:single_beta_mnist}.  The setup is similar to that of Fig. \ref{fig:single_beta_omni} on Omniglot (see Sec. \ref{sec:experiments}), but here we use reparameterization gradients instead of the original \gls{TVO}.  Here, we train for 1000 epochs using an Adam optimizer with learning rate $10^{-3}$ and batch size 100.

   We again find that our moments spacing schedule arrives at an optimal choice of $\beta_1$, and can even outperform the best static value due to its ability to adaptively update at each epoch.
   It is interesting to note that the final choice of $\beta_1$, which reflects the shape of the \gls{TVO} integrand, is nearly identical at $\beta_1 \approx 0.30$ across both \textsc{mnist} and Omniglot.

\paragraph{Comparison with IWAE}
We compare \gls{TVO} using our moments scheduling against the \gls{IWAE} and \gls{IWAE} \textsc{dreg} as in Fig. \ref{fig:iwaes} of the main text.   We find that our \gls{TVO} reparameterized gradient estimator achieves nearly identical model learning performance as \gls{IWAE} and \gls{IWAE} \textsc{dreg}, with notably improved posterior inference for all values of $S$.

\paragraph{Evaluating Scheduling Strategies}
In the Fig. \ref{fig:by_k_mnist}-\ref{fig:by_k_omni_rep} below, we reproduce the setting of Fig. \ref{fig:omni_schedules} to evaluate our scheduling strategies by $K$, for \gls{TVO} with both $\REINFORCE$ and reparameterized gradient estimators, on Omniglot and \textsc{mnist}.  We also report posterior inference results as measured by test $D_{KL}[\qzx||\pzx]$.   In general, we find comparable performance between our moments schedule and the log-uniform baseline, although our approach performs best with $K=2$ and does not require grid search.  Further, on \textsc{mnist} with batch size 1000 and low $K$, log-uniform, linear, and coarse-grained schedules suffer from poor performance due to instability in training, which is avoided by our moments schedule.   Training can be stabilized by using smaller batch sizes as in Fig. \ref{fig:single_beta_mnist}.

\begin{figure*}[h]
    \begin{minipage}{.45\textwidth}
        \includegraphics[width=.9\textwidth]{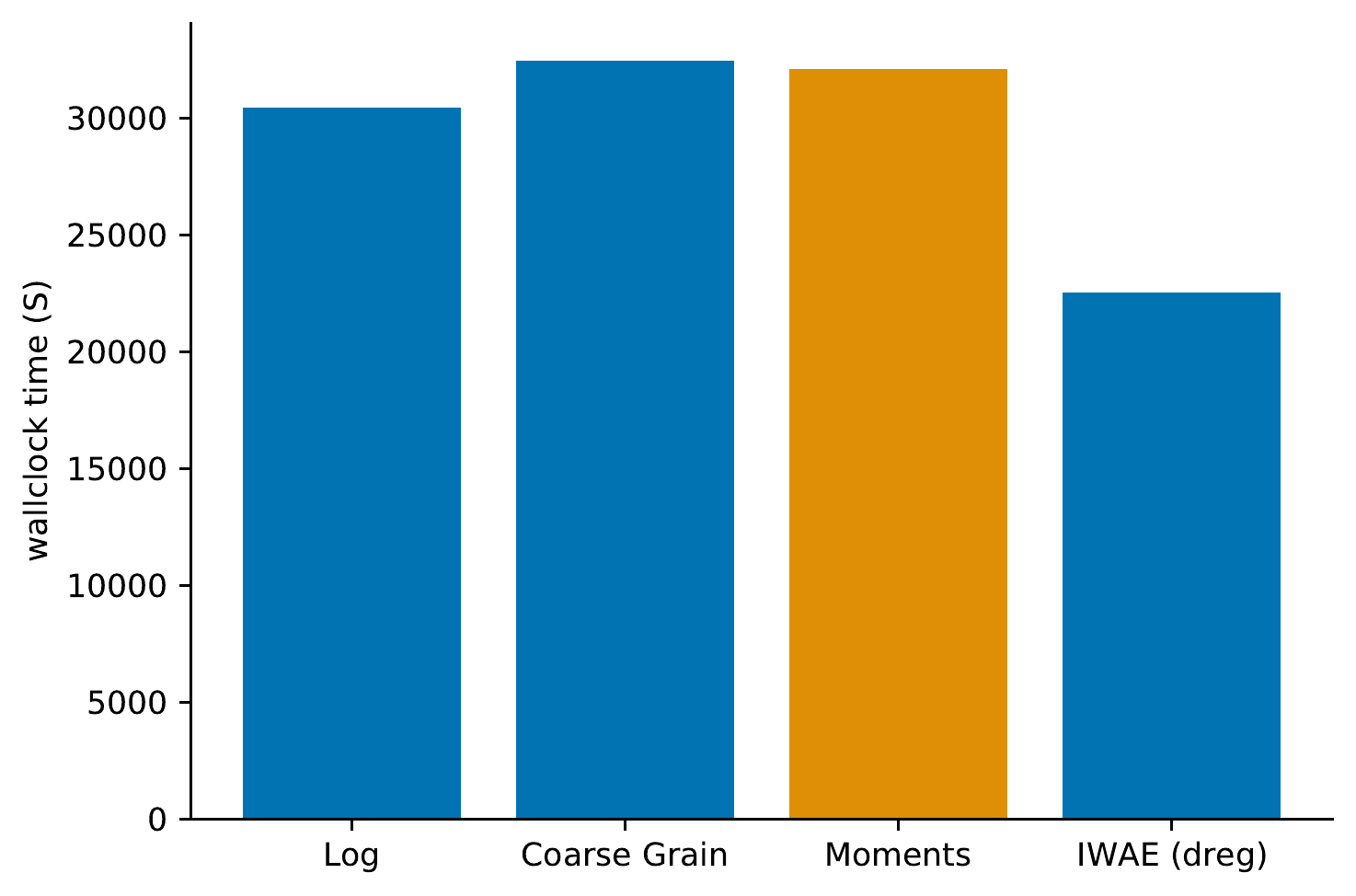}
        \captionof{figure}{Omniglot Runtimes ($S=50$, $K=5$, 5k epochs)} \label{fig:wallclock}
    \end{minipage}\hspace*{.05\textwidth}
    \begin{minipage}{.45\textwidth}
        \includegraphics[width=.9\textwidth]{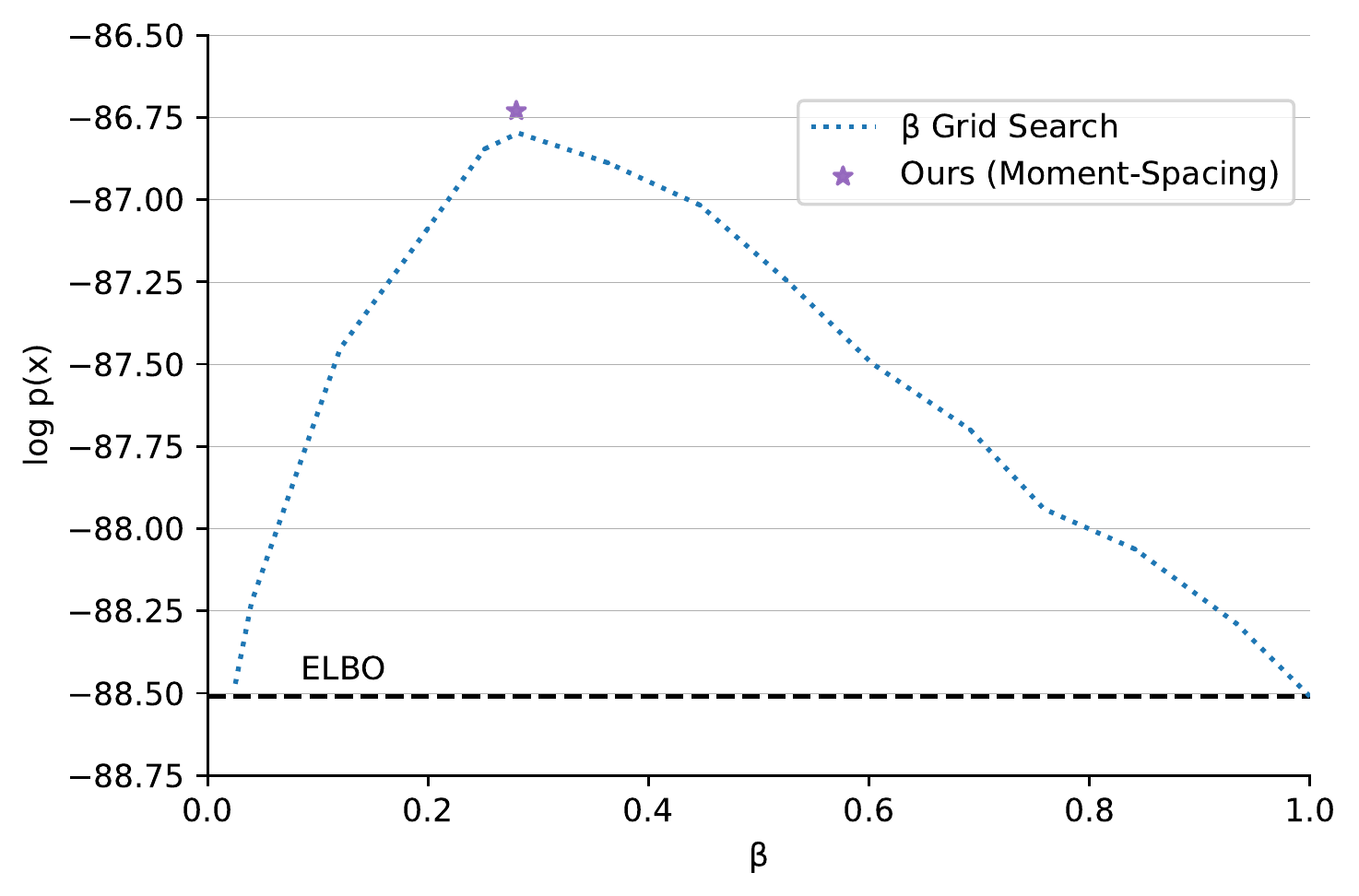}
        \captionof{figure}{MNIST $K=2$, with reparameterization gradients.} \label{fig:single_beta_mnist}
    \end{minipage}%
\end{figure*}

\begin{figure*}[t]
    \vspace*{.25cm}
    \begin{minipage}{.5\textwidth}
        \subcaptionbox{MNIST Test $\log \px$ }{
            \includegraphics[width=.8\textwidth]{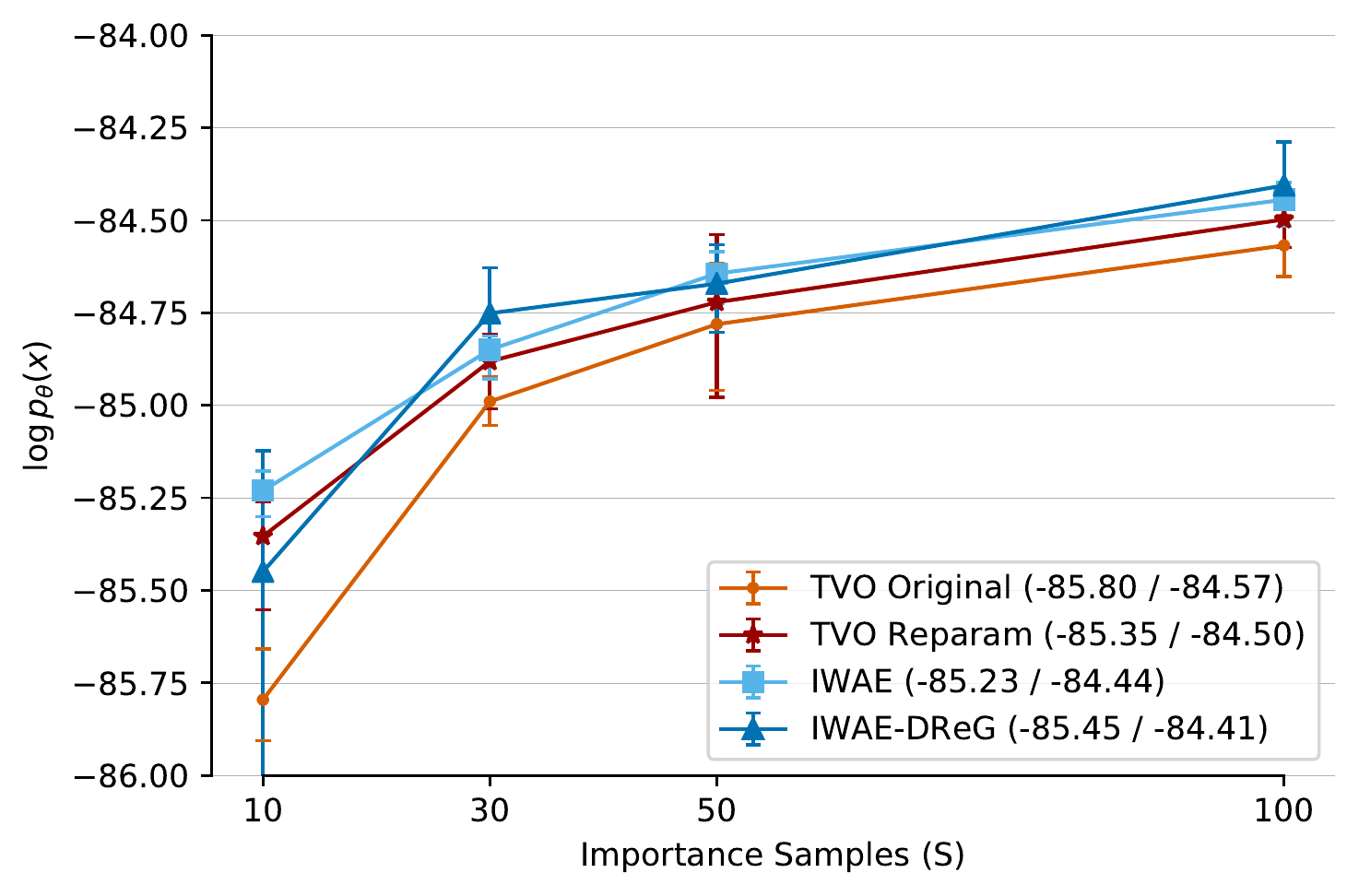}
            }\label{fig:mnist_tvo_schedules2}
        \end{minipage}%
    \begin{minipage}{.5\textwidth}
        \subcaptionbox{MNIST Test $D_{KL}[\qzx||\pzx]$}{
            \includegraphics[width=.8\textwidth]{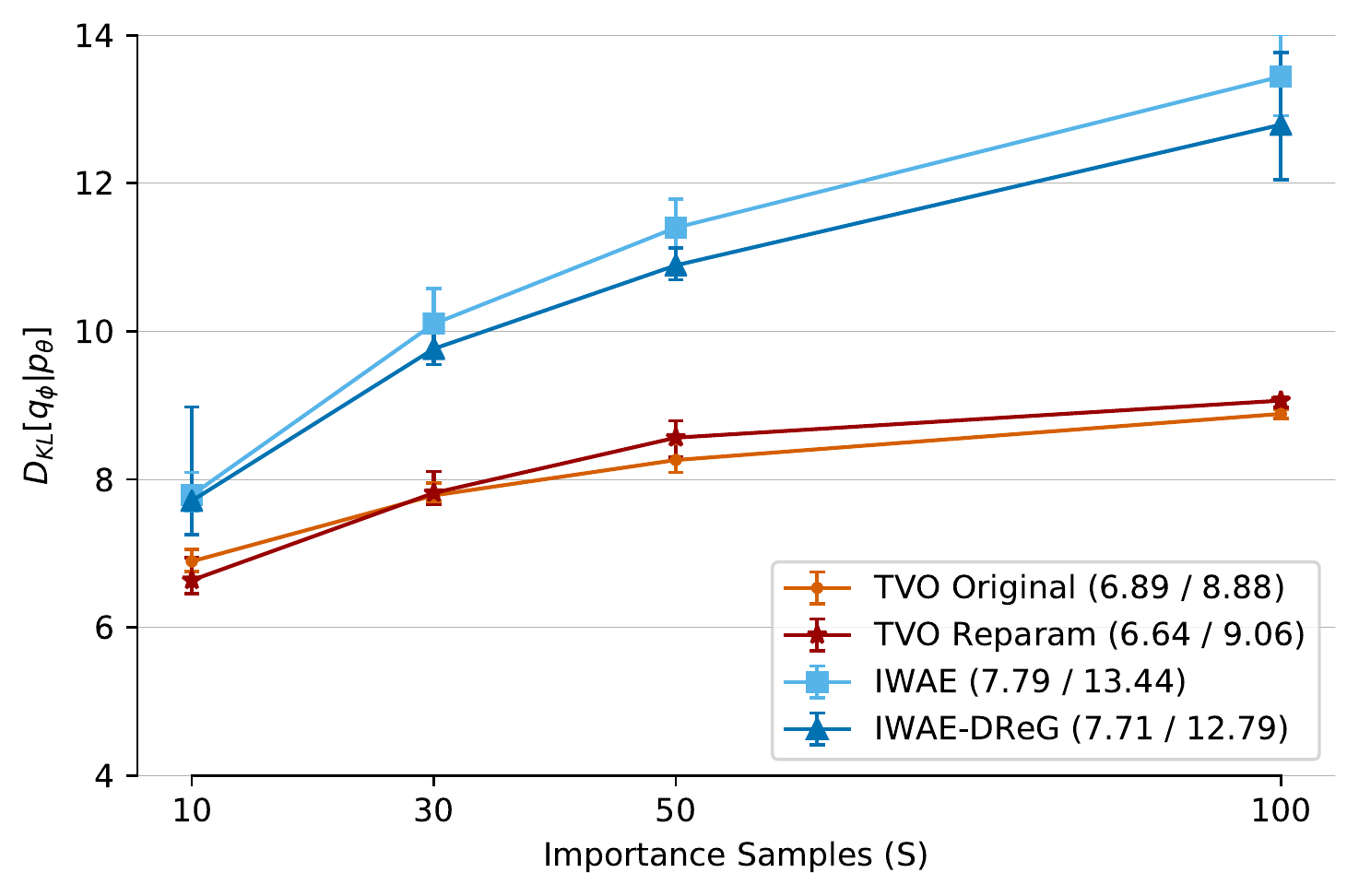}
            }\label{fig:mnist_reparam_schedules2}
        \end{minipage}%
    \caption{Model Learning and Inference by $S$ (with $K=5$)} \label{fig:iwaes_mnist}
\end{figure*}

\begin{figure*}[htb]
    \vspace*{.1cm}
    \begin{minipage}{.5\textwidth}
        \subcaptionbox{MNIST Test $\log \px$ }{
            \includegraphics[width=.85\textwidth]{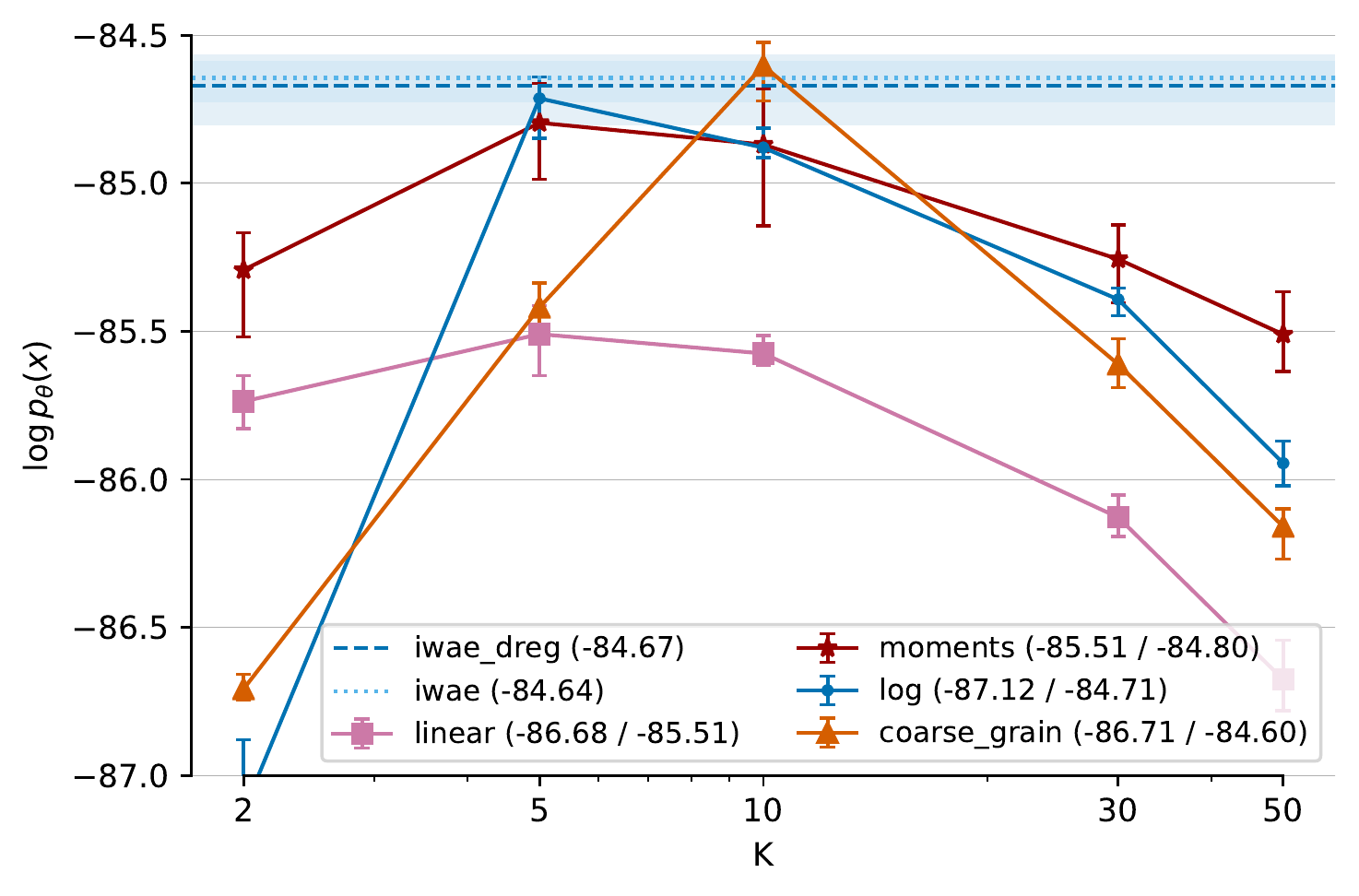}
            }\label{fig:mnist_tvo_schedules2}
        \end{minipage}%
    \begin{minipage}{.5\textwidth}
        \subcaptionbox{MNIST Test $D_{KL}[\qzx||\pzx]$}{
            \includegraphics[width=.85\textwidth]{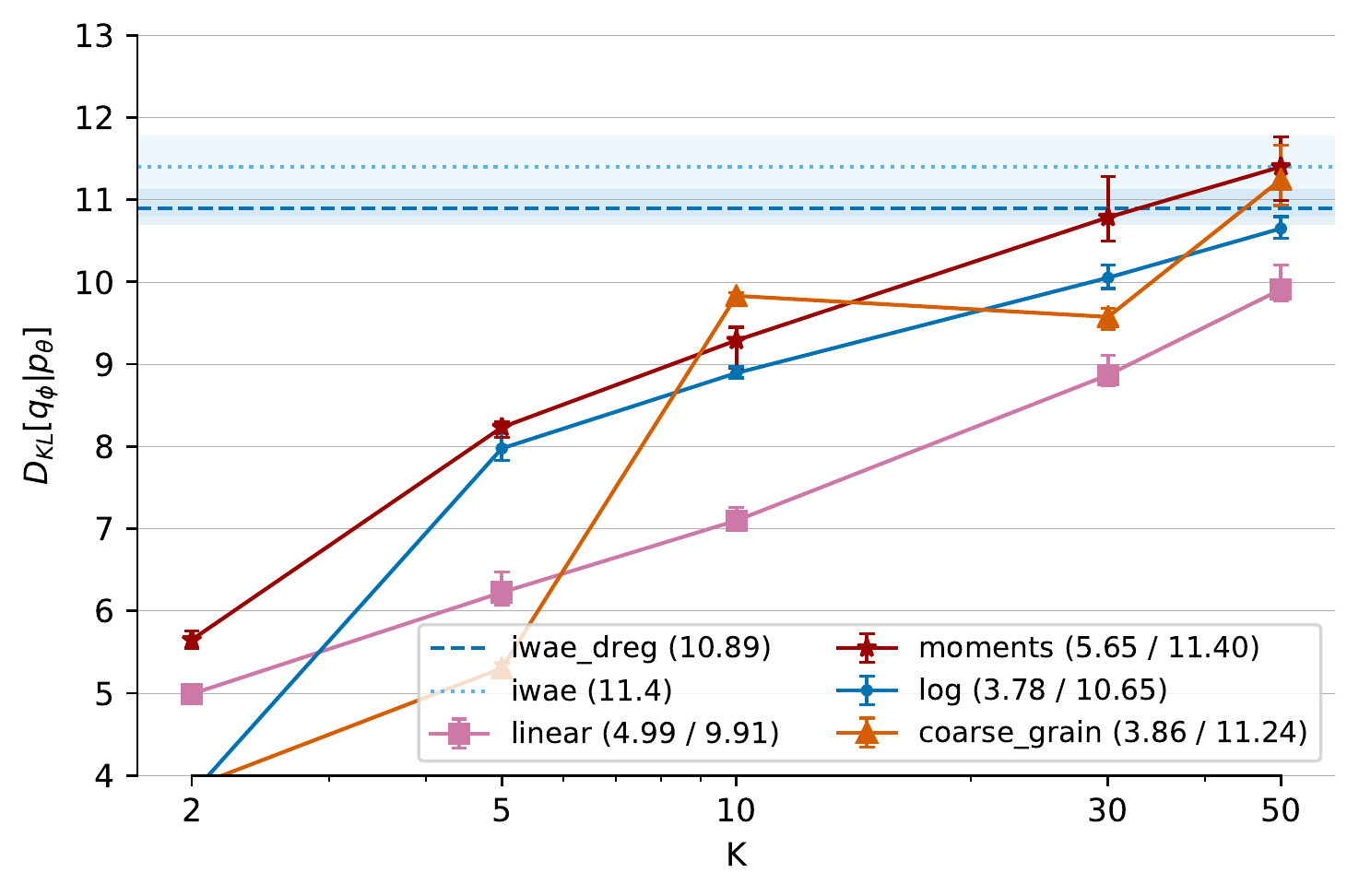}
            }\label{fig:mnist_reparam_schedules2}
        \end{minipage}%
    \caption{\centering \gls{TVO} with $\REINFORCE$ Gradients:
    Model Learning and Inference by $K$ (with $S=50$)} \label{fig:by_k_mnist}
\end{figure*}%
\begin{figure*}[htb]
    \vspace*{.1cm}
    \begin{minipage}{.5\textwidth}
        \subcaptionbox{MNIST Test $\log \px$ }{
            \includegraphics[width=.85\textwidth]{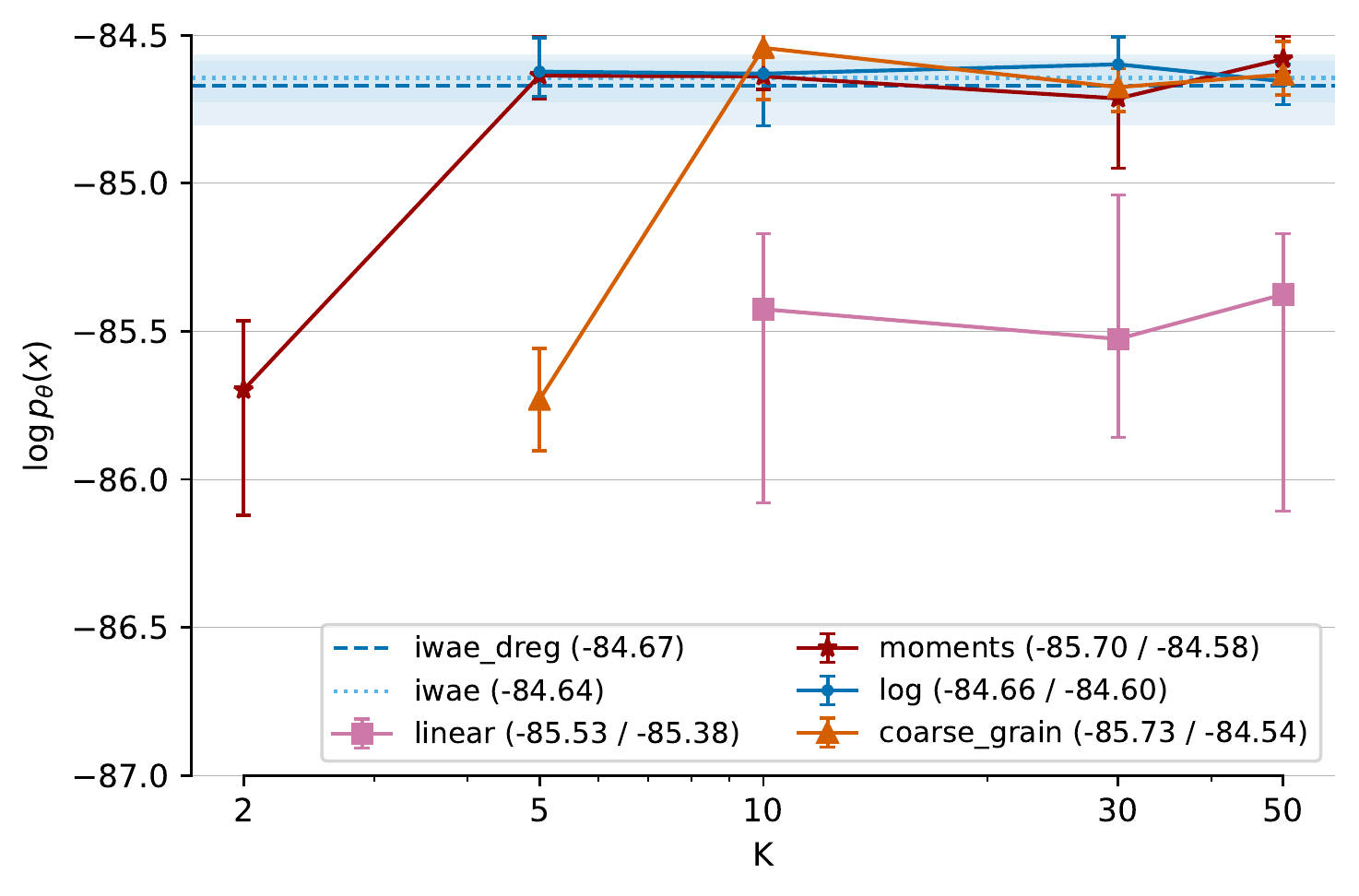}
            }\label{fig:mnist_tvo_schedules2}
        \end{minipage}%
        \begin{minipage}{.5\textwidth}
            \subcaptionbox{MNIST Test $D_{KL}[\qzx||\pzx]$}{
                \includegraphics[width=.85\textwidth]{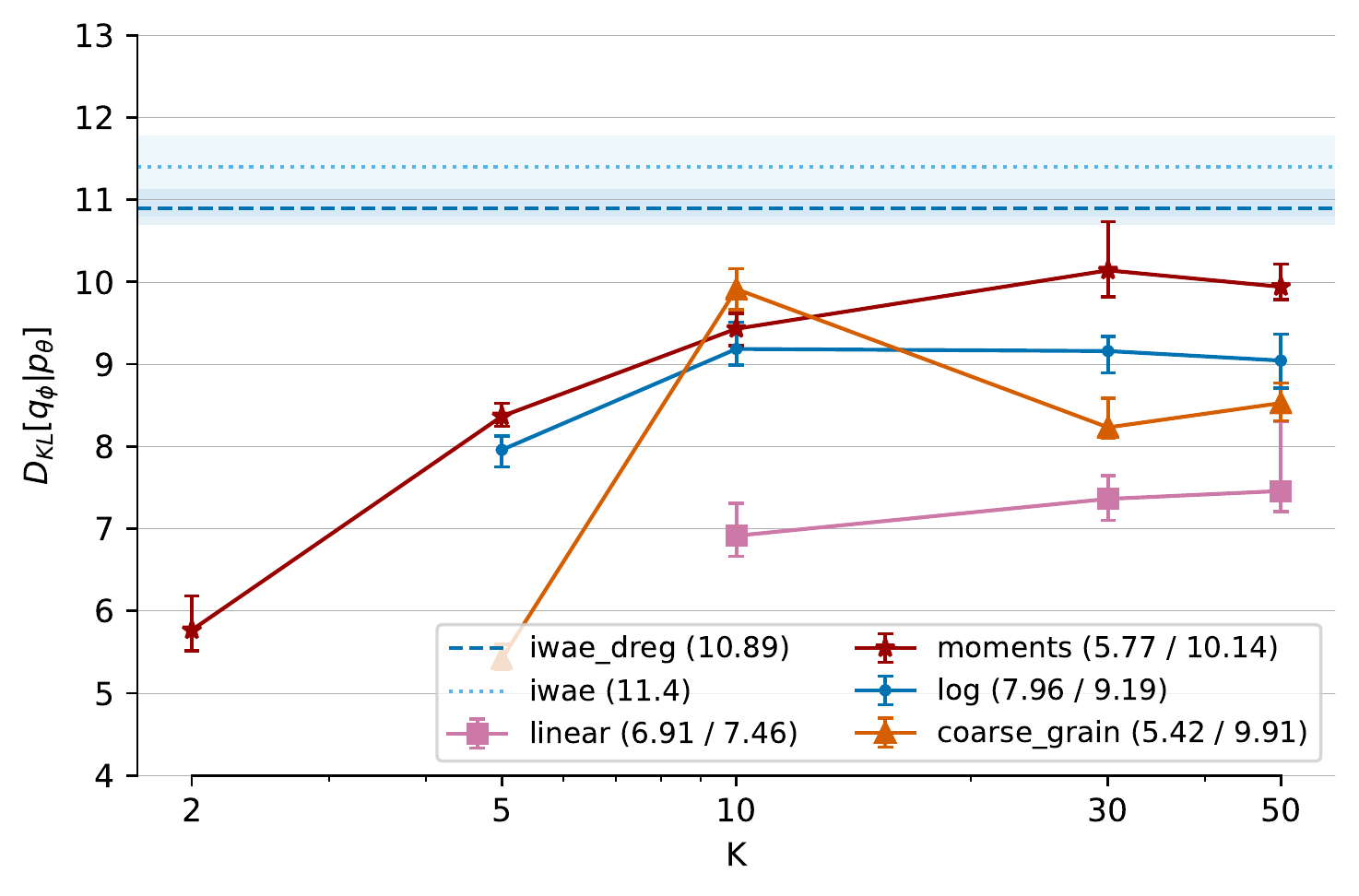}
                }\label{fig:mnist_reparam_schedules2}
            \end{minipage}%
    \caption{\centering \gls{TVO} with Reparameterized Gradients:
    Model Learning and Inference by $K$ (with $S=50$)} \label{fig:by_k_mnist_rep}
\end{figure*}%
\begin{figure*}[htb]
    \vspace*{.1cm}
    \begin{minipage}{.5\textwidth}
        \subcaptionbox{Omniglot Test $\log \px$ }{
            \includegraphics[width=.85\textwidth]{figs/by_k_omniglot_tvo_test_logpx.pdf}
            }\label{fig:mnist_tvo_schedules2}
        \end{minipage}%
        \begin{minipage}{.5\textwidth}
            \subcaptionbox{Omniglot Test $D_{KL}[\qzx||\pzx]$}{
                \includegraphics[width=.85\textwidth]{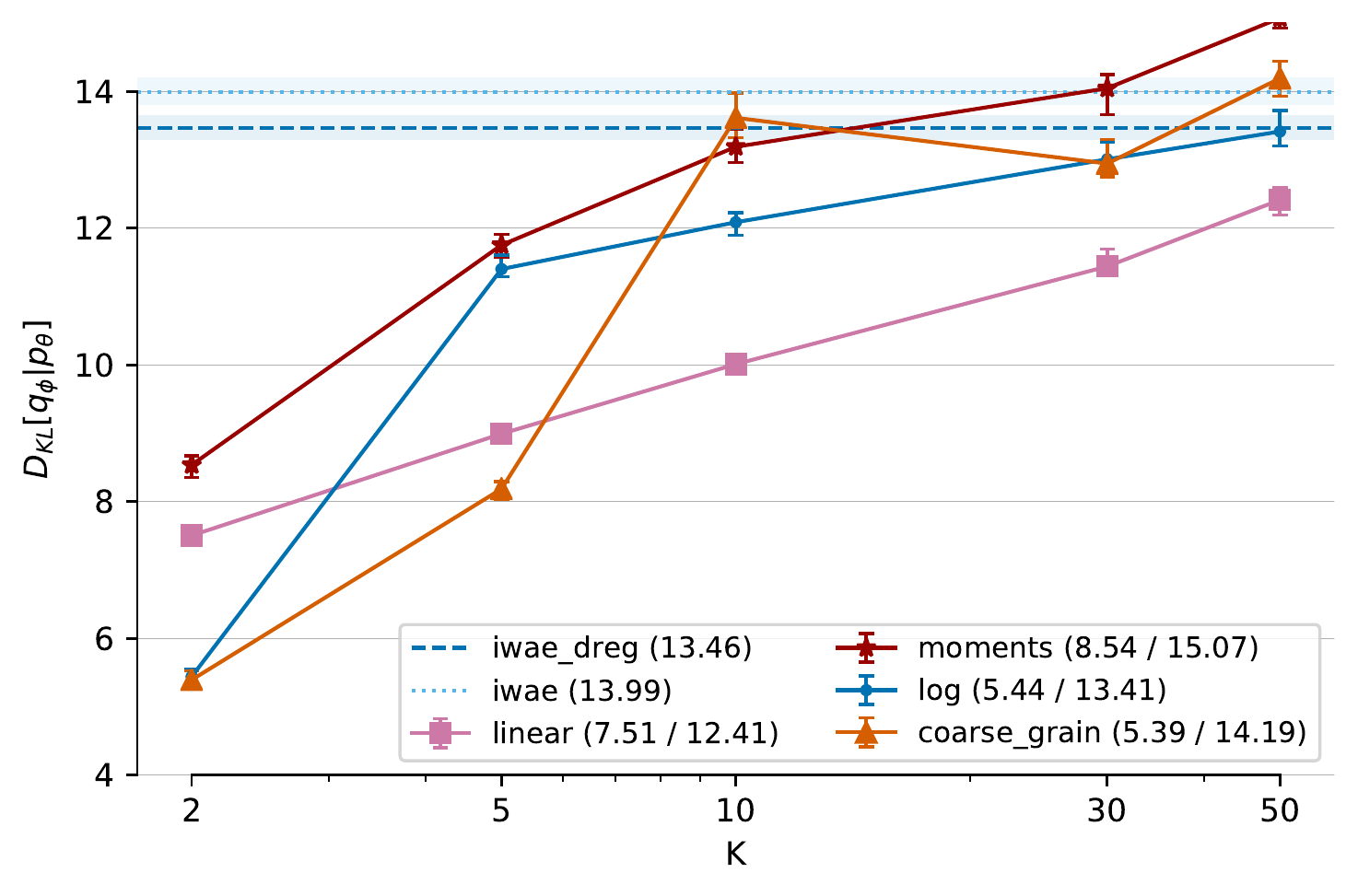}
                }\label{fig:mnist_reparam_schedules2}
            \end{minipage}%
    \captionof{figure}{\centering \gls{TVO} with $\REINFORCE$ Gradients:
    Model Learning and Inference by $K$ (with $S=50$)} \label{fig:by_k_omni}
\end{figure*}%
\begin{figure*}[htb]
    \vspace*{.1cm}
    \begin{minipage}{.5\textwidth}
        \subcaptionbox{Omniglot Test $\log \px$ }{
            \includegraphics[width=.85\textwidth]{figs/by_k_omniglot_tvo_reparam_q_tvo_test_logpx.pdf}
            }\label{fig:mnist_tvo_schedules2}
        \end{minipage}%
        \begin{minipage}{.5\textwidth}
            \subcaptionbox{Omniglot Test $D_{KL}[\qzx||\pzx]$}{
                \includegraphics[width=.85\textwidth]{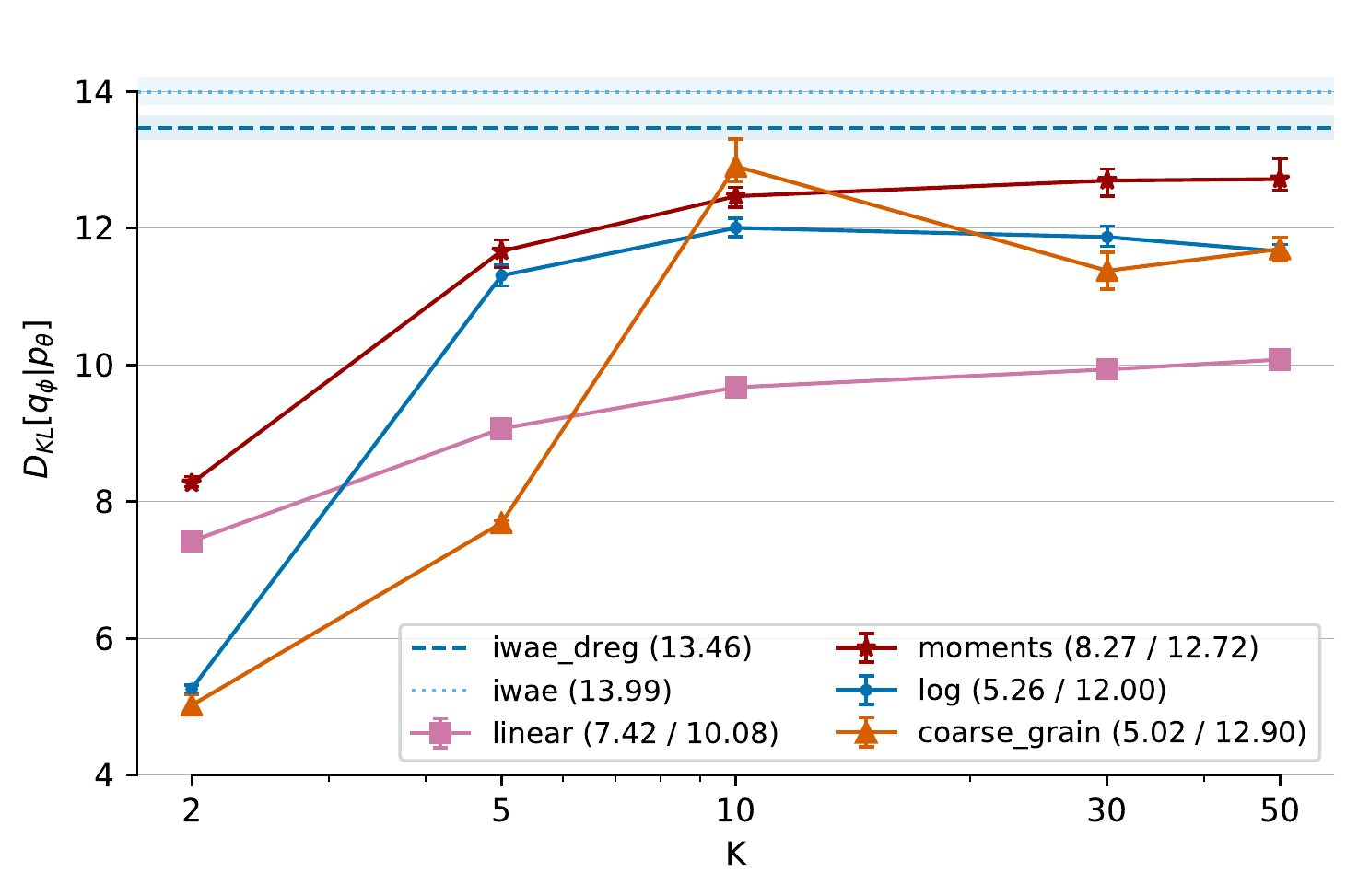}
                }\label{fig:mnist_reparam_schedules2}
            \end{minipage}%
    \caption{\centering \gls{TVO} with Reparameterized Gradients:
    Model Learning and Inference by $K$ (with $S=50$)} \label{fig:by_k_omni_rep}
\end{figure*}

\onecolumn

\newcommand{\pitilde}{{\tilde{\pi}_{\beta}(\vx, \vz)}}
\newcommand{\lpitilde}{\log \pitilde}
\newcommand{\lw}{{\log w_\phi}}
\newcommand{\g}{f(\vz)}
\newcommand{\grep}{f(\vz)}
\newcommand{\total}{\frac{d }{d \phi} }
\newcommand{\reparami}{ \vz_i = z(\epsilon_i, \phi) }
\newcommand{\reparam}{ \vz = z(\epsilon, \phi) }
\newcommand{\dphi}{{\frac{d}{d \phi}}}
\newcommand{\pphi}{{\frac{\partial}{\partial \phi}}}

\newcommand{\Epi}{{\mathbb{E}_{\pi_\beta}}}
\newcommand{\Eq}{\mathbb{E}_{\qzx}}
\newcommand{\Ephi}{\mathbb{E}_{q_{\phi}}}
\newcommand{\Eeps}{\mathbb{E}_{\epsilon}}
\newcommand{\Covpi}{\Cov_{\pi_\beta}}

\newcommand{\circleone}{\raisebox{.5pt}{\textcircled{\raisebox{-.9pt} {1}}}}
\newcommand{\circletwo}{\raisebox{.5pt}{\textcircled{\raisebox{-.9pt} {2}}}}

\newcommand{\pgpz}{{\frac{\partial \grep}{\partial \vz}}}
\newcommand{\pfpphi}{{\frac{\partial \grep}{\partial \phi}}}
\newcommand{\pfpphirep}{{\frac{\partial \grep}{\partial \phi}}}
\newcommand{\pzpphi}{{\frac{\partial \vz}{\partial \phi}}}
\newcommand{\change}[1]{\textcolor{blue}{#1}}

\onecolumn

\section{Reparameterization Gradients for the TVO Integrand}\label{app:tvo_integrand_gradients}
Recall that the \gls{TVO} objective involves terms of the form
\begin{align}
    \mathbb{E}_{\pibeta}  \left[ f(\vz) \right] \quad \text{where} \quad \pibeta(\vz|\vx) = \frac{\qzx^{1-\beta} \pxz^{\beta}}{Z_{\beta}} \quad \text{and} \quad f(\vz) = \log \frac{\pxz}{\qzx} \label{eq:tvo_integrand}
\end{align}
While \citet{masrani2019thermodynamic} derive a $\REINFORCE$-style gradient estimator for the \gls{TVO}, we seek to apply the reparameterization trick when possible, and thus differentiate with respect to only the inference network parameters $\phi$.  Note that, for  $\vz_i \sim \qzx$ reparameterizable with $\reparam$ and $\epsilon_i \sim p(\epsilon)$, any expectation under $\pi_\beta$ can be written as
    \begin{align}
    \Epi \left[ \grep \right] = \frac{1}{Z_\beta}\Eq \left[ w^\beta \grep \right] = \frac{1}{Z_\beta}\Eeps \left[ w^\beta \grep \right] \quad \text{where} \quad   w = \frac{\pxz}{\qzx} \label{eq:appendix/grad/reparam}
\end{align}
In differentiating \cref{eq:appendix/grad/reparam}, we will frequently encounter terms of the form $\mathbb{E}_{\pibeta} \left[ \g \total \log \frac{\pxz}{\qzx}  \right]$ for generic $\g$.  Noting that the total derivative contains score function partial derivatives, we apply the reparameterization trick to these terms in an approach similar to the `doubly-reparameterized' estimator of \citet{tucker2018doubly}.  The following lemma summarizes these calculations, rewritten using expectations under $\pi_{\beta}$ as in \cref{eq:appendix/grad/reparam}.
\begin{lemma}\label{eq:lemma1}
    Let $f(\vz):\mathbb{R}^M \mapsto \mathbb{R}$, $\pibeta(\vz|\vx)$, and $w = \frac{\pxz}{\qzx}$ all depend on $\phi$. When $\vz \sim \qzx$ is reparameterizable via $\reparam, \, \epsilon \sim p(\epsilon)$, the following identity holds for expectations under $\pibeta$
\begin{align}
    \mathbb{E}_{\pibeta} \left[ \grep \frac{d }{d \phi} \log w  \right] = \mathbb{E}_{\pi_\beta}\left[\pzpphi \left((1 - \beta)\grep \frac{\partial\log w}{\partial \vz} - \pgpz \right)\right].
\end{align}
\begin{proof}
    See \cref{sec:proof_lemma_1}.
\end{proof}
\begin{corollary}\label{eq:corr1}
    For the choice of $\g = 1$ we obtain
    \begin{align}
        \mathbb{E}_{\pibeta} \left[\frac{d }{d \phi} \log w  \right] = (1 - \beta) \, \mathbb{E}_{\pi_\beta}\left[\pzpphi \frac{\partial\log w}{\partial \vz}\right]
    \end{align}
\end{corollary}
\end{lemma}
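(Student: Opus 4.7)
The strategy mirrors the doubly-reparameterized derivation of \citet{tucker2018doubly}: split the total $\phi$-derivative of $\log w$ into a pathwise piece and a score-function-like piece, then re-reparameterize the score-function-like piece using the identity
\begin{align*}
\Eq\!\left[h(\vz,\phi)\,\frac{\partial \log \qzx}{\partial \phi}\right] = \Eq\!\left[\frac{\partial h}{\partial \vz}\,\pzpphi\right],
\end{align*}
which follows from equating the reparameterized and score-function forms of $\dphi \Eq[h(\vz,\phi)]$ (any explicit $\partial h/\partial \phi$ contributions appear identically on both sides and cancel).

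\emph{First step.} Writing $\log w = \log \pxz - \log \qzx$ and $\vz = z(\epsilon,\phi)$, I would decompose the total derivative as
\begin{align*}
\dphi \log w = \pzpphi\,\frac{\partial \log w}{\partial \vz} - \frac{\partial \log \qzx}{\partial \phi},
\end{align*}
where the last partial derivative holds $\vz$ fixed. Multiplying by $\grep$ and taking $\Epi$ yields two terms; the first is already in pathwise form and contributes $\Epi[\grep\,\pzpphi\,\partial_\vz \log w]$.

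\emph{Second step.} For the score-function-like term, I would rewrite the $\pibeta$-expectation as a reweighted $\qzx$-expectation,
\begin{align*}
\Epi\!\left[\grep\,\frac{\partial \log \qzx}{\partial \phi}\right] = \frac{1}{Z_\beta}\Eq\!\left[w^\beta\,\grep\,\frac{\partial \log \qzx}{\partial \phi}\right],
\end{align*}
and apply the reparameterization--score-function identity above with $h(\vz,\phi) = w^\beta \grep$. Using the product rule together with $\partial_\vz w^\beta = \beta w^\beta \,\partial_\vz \log w$, I would expand the resulting inner gradient and re-absorb $w^\beta/Z_\beta$ back into $\pibeta$ to obtain
\begin{align*}
\Epi\!\left[\grep\,\frac{\partial \log \qzx}{\partial \phi}\right] = \Epi\!\left[\pzpphi\!\left(\beta \grep\,\frac{\partial \log w}{\partial \vz} + \pgpz\right)\right].
\end{align*}

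\emph{Third step.} Subtracting this from the pathwise contribution of the first step combines the coefficients of $\grep\,\partial_\vz \log w$ into $(1-\beta)$ and flips the sign of $\pgpz$, giving exactly the statement of the lemma. The corollary then follows immediately by taking $\grep \equiv 1$, which annihilates the $\pgpz$ term and leaves only the covariance-like contribution.

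\emph{Main obstacle.} The derivation is short once the right identity is in hand, so the real care lies in bookkeeping: $w$ depends on $\phi$ both through the reparameterized $\vz$ and through $\qzx$ explicitly, so one must cleanly separate total from partial derivatives and track where $\partial_\phi \vz$ is held fixed versus varied. The cancellation that makes the score-function-to-reparameterization identity hold for $\phi$-dependent integrands is the only subtle algebraic point; everything else is product rule and reweighting between $\qzx$ and $\pibeta$.
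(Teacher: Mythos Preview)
Your proposal is correct and follows essentially the same approach as the paper: decompose the total derivative of $\log w$ into a pathwise part and a score-function part via $\partial_\phi \log w = -\partial_\phi \log \qzx$, then re-reparameterize the score-function term by passing to a $\qzx$-expectation with integrand $h=w^\beta f$ and applying the standard \textsc{reinforce}/reparameterization equivalence. The paper merely packages your second step as a separate helper identity (their ``\textsc{reparam}/\textsc{reinforce} equivalence for $\pi_\beta$''), whereas you derive it inline; your explicit remark that the $\partial_\phi h$ terms cancel on both sides is exactly the bookkeeping needed to justify applying the identity to the $\phi$-dependent integrand $w^\beta f$.
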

The following lemma will allow us to apply reparameterization within the normalization constant.
\begin{lemma}\label{lemma2}
    Let the same conditions hold as in \cref{eq:lemma1}, with $Z_{\beta} = \int \qzx^{1-\beta} p_{\theta}(\vx,\vz)^{\beta} \vdz$. Then
\begin{align}
    \frac{d }{d \phi} Z_{\beta} = \beta (1 - \beta) \, \mathbb{E}_{\epsilon} \left[w^{\beta} \pzpphi\frac{\partial\log w}{\partial \vz} \right].
\end{align}
\begin{proof}
    See \cref{sec:proof_lemma_2}.
\end{proof}
\end{lemma}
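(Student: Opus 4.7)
The plan is to express $Z_\beta$ as an expectation under the reparameterized base noise, differentiate under the expectation, and then apply \cref{eq:corr1} to handle the total derivative of $\log w$ that appears. Throughout, I will move between the three representations $\mathbb{E}_{\pi_\beta}[g(\vz)] = \tfrac{1}{Z_\beta}\mathbb{E}_{q_\phi}[w^\beta g(\vz)] = \tfrac{1}{Z_\beta}\mathbb{E}_\epsilon[w^\beta g(\vz)]$ from \cref{eq:appendix/grad/reparam}.

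First, I would write $Z_\beta = \int q_\phi(\vz\mid\vx)\, w^\beta\, d\vz = \mathbb{E}_{q_\phi}[w^\beta]$ and then apply the reparameterization $\vz = z(\epsilon,\phi)$, $\epsilon \sim p(\epsilon)$, to obtain $Z_\beta = \mathbb{E}_\epsilon[w^\beta]$. Because $p(\epsilon)$ does not depend on $\phi$, I can pass the derivative inside the expectation, giving
\begin{align}
    \frac{d}{d\phi} Z_\beta = \mathbb{E}_\epsilon\!\left[\frac{d}{d\phi} w^\beta\right] = \beta\, \mathbb{E}_\epsilon\!\left[w^\beta \frac{d \log w}{d\phi}\right].
\end{align}
Note that under the reparameterization, $w$ depends on $\phi$ both explicitly through $q_\phi$ and implicitly through $\vz = z(\epsilon,\phi)$, so $\tfrac{d}{d\phi} \log w$ is the genuine total derivative.

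Next, I would convert this $\epsilon$-expectation into an expectation under $\pi_\beta$ using $\mathbb{E}_\epsilon[w^\beta g(\vz)] = Z_\beta \, \mathbb{E}_{\pi_\beta}[g(\vz)]$ with $g(\vz) = \tfrac{d}{d\phi}\log w$, yielding
\begin{align}
    \frac{d}{d\phi} Z_\beta = \beta\, Z_\beta\, \mathbb{E}_{\pi_\beta}\!\left[\frac{d}{d\phi} \log w\right].
\end{align}
Now I would invoke \cref{eq:corr1} (the $f(\vz) = 1$ specialization of \cref{eq:lemma1}) to replace the total-derivative expectation by its doubly-reparameterized form:
\begin{align}
    \mathbb{E}_{\pi_\beta}\!\left[\frac{d}{d\phi} \log w\right] = (1-\beta)\,\mathbb{E}_{\pi_\beta}\!\left[\frac{\partial \vz}{\partial \phi} \frac{\partial \log w}{\partial \vz}\right].
\end{align}

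Finally, I would move back to the $\epsilon$-expectation by using $Z_\beta \, \mathbb{E}_{\pi_\beta}[g(\vz)] = \mathbb{E}_\epsilon[w^\beta g(\vz)]$ in reverse, which absorbs the leading $Z_\beta$ and produces the claimed identity
\begin{align}
    \frac{d}{d\phi} Z_\beta = \beta(1-\beta)\,\mathbb{E}_\epsilon\!\left[w^\beta\, \frac{\partial \vz}{\partial \phi} \frac{\partial \log w}{\partial \vz}\right].
\end{align}
The only nontrivial step is verifying that \cref{eq:corr1} applies cleanly here: its proof (in \cref{sec:proof_lemma_1}) is where the score-function contribution from differentiating $q_\phi$ gets cancelled against a piece of the total derivative of $\log w$, leaving the clean $(1-\beta)$ factor. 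Once that lemma is granted, the present result is essentially a one-line corollary obtained by differentiating under the $\epsilon$-expectation and rewriting the result in the three equivalent forms above.
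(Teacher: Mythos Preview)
Your proposal is correct and follows essentially the same steps as the paper's proof: write $Z_\beta = \mathbb{E}_\epsilon[w^\beta]$, differentiate under the expectation to get $\beta\,\mathbb{E}_\epsilon[w^\beta\,\tfrac{d}{d\phi}\log w]$, rewrite as $\beta Z_\beta\,\mathbb{E}_{\pi_\beta}[\tfrac{d}{d\phi}\log w]$, apply \cref{eq:corr1} to extract the $(1-\beta)$ factor, and convert back to an $\epsilon$-expectation.
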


We now proceed to differentiate the TVO integrand given by \cref{eq:tvo_integrand}.

\subsection{Reparameterized \gls{TVO} Gradient Estimator}
For generic $f(\vz):\mathbb{R}^M \mapsto \mathbb{R}$ and reparameterizable $\vz \sim \qzx$ as above, the gradient with respect to $\phi$ can be written as
\begin{align}
\frac{d }{d \phi} \mathbb{E}_{\pibeta}\left[ f(\vz) \right] &= \mathbb{E}_{\pibeta} \left[ \left(\frac{d }{d \phi} f(\vz) \right) - \beta \left(\frac{\partial \vz}{\partial \phi} \frac{\partial f(\vz) }{\partial \vz} \right) \right] + \beta (1-\beta) \text{Cov}_{\pi_{\beta}}  \left[  f(\vz),  \frac{\partial \vz}{\partial \phi} \frac{\partial \log w }{\partial \vz} \right]\label{eq:reparam_generic}.
\end{align}
The gradient of the \gls{TVO} integrand is of particular interest.   For $f(\vz) = \log w$ with $w = \frac{\pxz}{\qzx}$, \cref{eq:reparam_generic} simplifies to
\begin{align}
    \frac{d }{d \phi} \mathbb{E}_{\pibeta}\left[ \log w \right] &= (1-2\beta)\mathbb{E}_{\pibeta} \left[\frac{\partial \vz}{\partial \phi} \frac{\partial \log w }{\partial \vz} \right] + \beta (1-\beta) \text{Cov}_{\pi_{\beta}}  \left[ \log w,  \frac{\partial \vz}{\partial \phi} \frac{\partial \log w }{\partial \vz} \right]\label{eq:reparam_tvo}.
\end{align}

\begin{proof}  We track changes between lines in \change{blue}, and begin by applying the product rule.
\begin{align}
\frac{d }{d \phi} \mathbb{E}_{\pibeta}\left[ \grep \right] &= \frac{d }{d \phi} \left( Z_{\beta}^{-1} \mathbb{E}_{\epsilon}\left[ w^{\beta} \, f(\vz)\right] \right)  \\
 &= \change{\left( \frac{d}{d \phi} Z_{\beta}^{-1} \right)} \mathbb{E}_{\epsilon}\left[ w^{\beta} \, f(\vz)\right] + Z_{\beta}^{-1} \mathbb{E}_{\epsilon}\left[f(\vz) \change{ \left( \frac{d }{d \phi} w^{\beta} \right)}\right] + Z_{\beta}^{-1} \mathbb{E}_{\epsilon}\left[ w^{\beta} \change{ \left( \frac{d }{d \phi} f(\vz) \right)}\right] \\
 &= \change{\left(\frac{d }{d \phi}  Z_{\beta}\right)\left( \frac{-1}{Z_{\beta}^{2}}  \right) }\mathbb{E}_{\epsilon}\left[ w^{\beta} \, f(\vz)\right] + Z_{\beta}^{-1} \mathbb{E}_{\epsilon}\left[\change{\beta w^{\beta} \, f(\vz)  \left( \frac{d }{d \phi} \log w\right)}\right] + Z_{\beta}^{-1} \mathbb{E}_{\epsilon}\left[ w^{\beta} \left( \frac{d }{d \phi} f(\vz) \right)\right] \\
 &=\underbrace{\left(\frac{d }{d \phi}  Z_{\beta}\right)\change{\left( \frac{-1}{Z_{\beta}}  \right) }\mathbb{E}_{\change{\pibeta}} \left[f(\vz)\right]}_{\circleone}
 + \underbrace{\change{\beta} \, \mathbb{E}_{\change{\pibeta}}\left[f(\vz)  \frac{d }{d \phi} \log w\right]}_{\circletwo}
 + \mathbb{E}_{\change{\pibeta}} \left[ \frac{d }{d \phi} f(\vz) \right] \label{eq:pre_cov_sub}
\end{align}
We proceed to simplify only the first two terms, applying \cref{lemma2} to $\circleone$ and \cref{eq:lemma1} to $\circletwo$.
\begin{align}
\circleone + \circletwo = &\underbrace{\change{\beta (1 - \beta)\mathbb{E}_{\epsilon} \left[w^{\beta} \pzpphi\frac{\partial\log w}{\partial \vz} \right]}}_{\text{\cref{lemma2}}}\left( \frac{-1}{Z_{\beta}}  \right) \mathbb{E}_{\pibeta} \left[f(\vz)\right] + \beta\underbrace{\change{ \left((1 - \beta)\mathbb{E}_{\pi_\beta}\left[\pzpphi \frac{\partial\log w}{\partial \vz} \grep \right] - \mathbb{E}_{\pi_\beta}\left[\pzpphi \pgpz \right]\right)}}_{\text{\cref{eq:lemma1}}}\\
= &\beta (1 - \beta)\mathbb{E}_{\change{\pibeta}} \left[\pzpphi\frac{\partial\log w}{\partial \vz} \right] \change{\left(-1\right)}\mathbb{E}_{\pibeta} \left[f(\vz)\right]
\change{+ \beta(1 - \beta )} \mathbb{E}_{\pi_\beta}\left[\pzpphi \frac{\partial\log w}{\partial \vz} \grep \right] - \change{\beta}\mathbb{E}_{\pi_\beta}\left[\pzpphi \pgpz \right]\\
= &\change{\beta (1 - \beta) }\left(\mathbb{E}_{\pi_\beta}\left[\pzpphi \frac{\partial\log w}{\partial \vz} \grep \right] - \mathbb{E}_{\pibeta} \left[\pzpphi\frac{\partial\log w}{\partial \vz} \right] \mathbb{E}_{\pibeta} \left[f(\vz)\right]\right) - \beta\mathbb{E}_{\pi_\beta}\left[\pzpphi \pgpz \right] \\
= &\beta (1 - \beta) \change{\left(\text{Cov}_{\pi_{\beta}}  \left[ \grep,  \frac{\partial \vz}{\partial \phi} \frac{\partial \log w }{\partial \vz} \right]\right)} - \beta\mathbb{E}_{\pi_\beta}\left[\pzpphi \pgpz \right]\label{eq:cov_sub}.
\end{align}
By plugging \cref{eq:cov_sub} back into \cref{eq:pre_cov_sub} we arrive at the reparameterized gradient for general $\g$ \cref{eq:reparam_generic}.
\begin{align}
\frac{d }{d \phi} \mathbb{E}_{\pibeta}\left[ \grep \right] &= \change{ \beta (1 - \beta) \left(\text{Cov}_{\pi_{\beta}}  \left[ \grep,  \frac{\partial \vz}{\partial \phi} \frac{\partial \log w }{\partial \vz} \right]\right) - \beta\mathbb{E}_{\pi_\beta}\left[\pzpphi \pgpz \right]} + \mathbb{E}_{\pibeta} \left[ \frac{d }{d \phi} f(\vz) \right]\\
&= \mathbb{E}_{\pibeta} \left[\change{\left(\frac{d }{d \phi} f(\vz) \right) - \beta \left(\frac{\partial \vz}{\partial \phi} \frac{\partial f(\vz) }{\partial \vz} \right)} \right] + \beta (1-\beta) \text{Cov}_{\pi_{\beta}}  \left[  f(\vz),  \frac{\partial \vz}{\partial \phi} \frac{\partial \log w }{\partial \vz} \right]. \label{eq:last}
\end{align}

Finally, to optimize the \gls{TVO} integrand, we can substitute $f(\vz) = \log w$ for various terms in \cref{eq:last}.  We then use \cref{eq:corr1} to apply the reparameterization trick within the total derivative in the first term.
\begin{align}
    \frac{d }{d \phi} \mathbb{E}_{\pibeta}\left[ \log w \right] &= \mathbb{E}_{\pibeta} \left[ \left(\frac{d }{d \phi} \change{\log w} \right) - \beta \left(\frac{\partial \vz}{\partial \phi} \frac{\partial \change{\log w} }{\partial \vz} \right) \right] + \beta (1-\beta) \text{Cov}_{\pi_{\beta}}  \left[  \change{\log w},  \frac{\partial \vz}{\partial \phi} \frac{\partial \log w }{\partial \vz} \right] \\
    &= \mathbb{E}_{\pibeta} \bigg[ \underbrace{\change{(1 - \beta)\left(\frac{\partial \vz}{\partial \phi} \frac{\partial \log w }{\partial \vz} \right)}}_{\text{\cref{eq:corr1}}} - \beta \left(\frac{\partial \vz}{\partial \phi} \frac{\partial \log w }{\partial \vz} \right) \bigg] + \beta (1-\beta) \text{Cov}_{\pi_{\beta}}  \left[ \log w,  \frac{\partial \vz}{\partial \phi} \frac{\partial \log w }{\partial \vz} \right] \\
    &= \change{(1 - 2\beta)}\mathbb{E}_{\pibeta} \bigg[ \frac{\partial \vz}{\partial \phi} \frac{\partial \log w }{\partial \vz}  \bigg] + \beta (1-\beta) \text{Cov}_{\pi_{\beta}}  \left[ \log w,  \frac{\partial \vz}{\partial \phi} \frac{\partial \log w }{\partial \vz} \right]
\end{align}
This establishes \cref{eq:reparam_tvo} and is the expression that we use to optimize the \gls{TVO} with reparameterization in the main text.
\end{proof}

\subsection{\textsc{reparam} / \textsc{reinforce} Equivalence for $\pi_{\beta}$} 
It is well known \cite{tucker2018doubly} that the reparameterization trick and $\REINFORCE$ estimator are equivalent for expectations under $\qzx$, which allows us to trade high variance $\REINFORCE$ gradients for reparameterization gradients which directly consider derivatives of the function $\g$.
\begin{align}
    \Eq \left[ \grep \frac{\partial}{\partial \phi} \log \qzx \right] &= \Eeps \left[\pzpphi \pgpz \right]. \label{eq:appendix/grad/reparam_general}
\end{align}

We use this equivalence to show a similar result for expectations under $\pi_{\beta}$, which we will then use in the proofs of \cref{eq:lemma1} in \ref{sec:proof_lemma_1} and \cref{lemma2} in \ref{sec:proof_lemma_2}.

\begin{lemma}\label{lemma0}
    Let the same conditions hold as in \cref{eq:lemma1}. Then
\begin{align}
\mathbb{E}_{\pibeta} \left[ \grep \frac{\partial}{\partial \phi} \log \qzx \right] = \mathbb{E}_{\pi_\beta}\left[\pzpphi \left(\pgpz + \beta\grep \frac{\partial\log w}{\partial \vz}\right)\right].
\end{align}
\end{lemma}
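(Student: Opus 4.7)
\textbf{Proof plan for Lemma \ref{lemma0}.} The plan is to reduce the claim to the basic reparameterization/$\REINFORCE$ equivalence \eqref{eq:appendix/grad/reparam_general} by absorbing the $\pi_\beta$-weighting into the integrand against $q_\phi$. Concretely, I will express the left-hand side via self-normalized importance sampling, apply the reparameterization identity to a cleverly chosen test function, and then convert back to an expectation under $\pi_\beta$.

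\textbf{Step 1: rewrite under $q_\phi$.} Using the definition of $\pi_\beta$ from \eqref{eq:geomix} together with \eqref{eq:appendix/grad/reparam}, the left-hand side becomes
\begin{align*}
\mathbb{E}_{\pi_\beta}\!\left[f(\vz)\,\tfrac{\partial}{\partial \phi}\log q_\phi(\vz|\vx)\right]
= \tfrac{1}{Z_\beta}\,\mathbb{E}_{q_\phi}\!\left[w^\beta f(\vz)\,\tfrac{\partial}{\partial \phi}\log q_\phi(\vz|\vx)\right].
\end{align*}

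\textbf{Step 2: apply the reparameterization/$\REINFORCE$ equivalence.} Identity \eqref{eq:appendix/grad/reparam_general} reads $\mathbb{E}_{q_\phi}[h(\vz)\,\tfrac{\partial}{\partial\phi}\log q_\phi] = \mathbb{E}_\epsilon[\tfrac{\partial \vz}{\partial \phi}\tfrac{\partial h(\vz)}{\partial \vz}]$. A small point to verify is that this still holds even when $h$ depends on $\phi$: differentiating $\mathbb{E}_{q_\phi}[h]$ both by the score-function rule and by reparameterization produces a common $\mathbb{E}[\partial h/\partial \phi]$ term which cancels, leaving the stated identity. I therefore apply it to $h(\vz) = w^\beta f(\vz)$, yielding
\begin{align*}
\mathbb{E}_{q_\phi}\!\left[w^\beta f(\vz)\,\tfrac{\partial}{\partial\phi}\log q_\phi\right] = \mathbb{E}_\epsilon\!\left[\tfrac{\partial \vz}{\partial \phi}\,\tfrac{\partial}{\partial \vz}\bigl(w^\beta f(\vz)\bigr)\right].
\end{align*}

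\textbf{Step 3: expand the inner $\vz$-derivative and repackage.} Product and chain rules give
\begin{align*}
\tfrac{\partial}{\partial \vz}\bigl(w^\beta f(\vz)\bigr) = w^\beta\!\left(\tfrac{\partial f(\vz)}{\partial \vz} + \beta f(\vz)\,\tfrac{\partial \log w}{\partial \vz}\right).
\end{align*}
Substituting back and dividing by $Z_\beta$ reabsorbs the factor $w^\beta/Z_\beta$ into an expectation under $\pi_\beta$ (again via \eqref{eq:appendix/grad/reparam}), producing exactly the right-hand side of the lemma. The only step that requires care is Step~2's justification that the identity extends to $\phi$-dependent integrands; once that is in hand, Steps~1 and~3 are purely mechanical.
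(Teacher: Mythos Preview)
Your proposal is correct and follows essentially the same route as the paper: rewrite the $\pi_\beta$-expectation as a $q_\phi$-expectation with weight $w^\beta$, apply the reparameterization/\textsc{reinforce} identity \eqref{eq:appendix/grad/reparam_general} to the combined integrand $w^\beta f(\vz)$, expand $\partial_{\vz}(w^\beta f)$ via the product rule using $\partial_{\vz} w^\beta = \beta w^\beta \partial_{\vz}\log w$, and reabsorb $w^\beta/Z_\beta$ into $\pi_\beta$. Your added remark that \eqref{eq:appendix/grad/reparam_general} still holds for $\phi$-dependent test functions is a nice clarification that the paper uses implicitly without comment.
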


\begin{proof}
    \begin{align}
        \mathbb{E}_{\pibeta} \left[ \grep \frac{\partial}{\partial \phi} \log \qzx \right] &= \textcolor{blue}{\frac{1}{Z_\beta}}\mathbb{E}_{\change{\qzx}} \left[\change{w^\beta} \grep \frac{\partial}{\partial \phi} \log \qzx \right] & &\text{Using \cref{eq:appendix/grad/reparam}}\\
        &= \frac{1}{Z_\beta}\mathbb{E}_{\change{\epsilon}}\left[\change{\pzpphi \frac{\partial(w^\beta \grep) }{\partial \vz}} \right]& &\text{Using \cref{eq:appendix/grad/reparam_general}}& &\\
        &= \frac{1}{Z_\beta}\mathbb{E}_{\epsilon}\left[\pzpphi \left(\change{w^\beta \pgpz + \grep\frac{\partial w^\beta}{\partial \vz}}\right)\right] & &\\
        &= \frac{1}{Z_\beta}\mathbb{E}_{\epsilon}\left[\pzpphi \left(w^\beta \pgpz + \grep \change{\beta w^\beta \frac{\partial\log w}{\partial \vz}}\right)\right] & &\\
        &= \frac{1}{Z_\beta}\mathbb{E}_{\epsilon}\left[\change{w^\beta}\pzpphi \left(\pgpz + \grep \beta \frac{\partial\log w}{\partial \vz}\right)\right] & &\\
        &= \mathbb{E}_{\textcolor{blue}{\pi_\beta}}\left[\pzpphi \left(\pgpz + \beta\grep \frac{\partial\log w}{\partial \vz}\right)\right]& &\text{Using \cref{eq:appendix/grad/reparam}}
    \end{align}

\end{proof}

\subsection{Proof of \cref{eq:lemma1}}\label{sec:proof_lemma_1}
    \begin{align}
    \mathbb{E}_{\pibeta} \left[ \grep \frac{d }{d \phi} \log w  \right] = \mathbb{E}_{\pi_\beta}\left[\pzpphi \left((1 - \beta)\grep \frac{\partial\log w}{\partial \vz} - \pgpz \right)\right].
\end{align}
\begin{proof} Using the fact that $\partial_{\phi} \log w = -\partial_{\phi} \log \qzx$,
\begin{align}
    \mathbb{E}_{\pibeta} \left[ \grep \frac{d }{d \phi} \log w  \right] &= \mathbb{E}_{\pibeta} \left[ \grep \left( \change{\frac{\partial \log w}{\partial \phi} + \pzpphi \frac{\partial \log w}{\partial \phi}}  \right)  \right] & &\\
    &= \mathbb{E}_{\pibeta} \left[ \grep \left( \change{-\frac{\partial \log \qzx}{\partial \phi}} + \pzpphi \frac{\partial \log w}{\partial \phi}  \right)  \right]& &\\
    &= \change{-}\mathbb{E}_{\pibeta} \left[ \left(\grep \frac{\partial \log \qzx}{\partial \phi}\right) \change{-} \left(\change{\grep} \pzpphi \frac{\partial \log w}{\partial \phi}  \right)  \right]& &\\
    &= -\mathbb{E}_{\pibeta} \left[ \change{\pzpphi \left(\pgpz + \beta\grep \frac{\partial\log w}{\partial \vz}\right)} - \left(\grep \pzpphi \frac{\partial \log w}{\partial \phi}  \right)  \right] & &\text{Using Lemma \ref{lemma0}}\\
    &= -\mathbb{E}_{\pibeta} \left[ \change{\pzpphi}\left( \pgpz + \beta \grep \frac{\partial\log w}{\partial \vz} - \change{\grep \frac{\partial \log w}{\partial \phi}}  \right)  \right]& &\\
    &= -\mathbb{E}_{\pibeta} \left[ \pzpphi\left( \pgpz + \change{(\beta - 1)} \grep \frac{\partial\log w}{\partial \vz} \right)  \right]& &\\
    &= \phantom{-}\mathbb{E}_{\pi_\beta}\left[\pzpphi \left(\change{(1 - \beta)}\grep \frac{\partial\log w}{\partial \vz} \change{-} \pgpz \right)\right]& &.
\end{align}
\end{proof}

\subsection{Proof of \cref{lemma2}}\label{sec:proof_lemma_2}
\begin{align}
    \frac{d }{d \phi} Z_{\beta} = \beta (1 - \beta)\mathbb{E}_{\epsilon} \left[w^{\beta} \pzpphi\frac{\partial\log w}{\partial \vz} \right].
\end{align}
\begin{proof}
    Noting that we can use reparameterization inside the integral $Z_{\beta} = \int \qzx^{1-\beta} p_{\theta}(\x,\vz)^{\beta} dz = \mathbb{E}_{\qphi}[w^{\beta}] =  \mathbb{E}_{\epsilon}[w^{\beta}]$, we obtain
\begin{align}
    \frac{d }{d \phi} Z_{\beta} &= \frac{d }{d \phi}\mathbb{E}_{\epsilon}[w^{\beta}]\\
     &= \mathbb{E}_{\epsilon}\left[\change{\beta w^{\beta} \frac{d }{d \phi} \log w}\right]\\
     &= \change{\beta Z_{\beta}} \mathbb{E}_{\change{\pibeta}}\left[\frac{d }{d \phi} \log w\right]\\
     &= \beta \change{(1 - \beta)}Z_{\beta} \mathbb{E}_{\pibeta} \left[ \pzpphi\frac{\partial\log w}{\partial \vz} \right]~\quad \text{Using \cref{eq:corr1}}\\
     &= \beta (1 - \beta)\mathbb{E}_{\change{\epsilon}} \left[\change{w^{\beta}}\pzpphi\frac{\partial\log w}{\partial \vz} \right]
\end{align}
\end{proof}


\end{document}